\DeclareMathAlphabet\mathbb{U}{msb}{m}{n}
\def\Rset{\mathbb{R}}
\let\Pr\undefined
\DeclareMathOperator*{\Pr}{\mathbb{P}}
\DeclareMathOperator*{\E}{\mathbb E}
\DeclareMathOperator*{\argmax}{argmax}
\DeclareMathOperator*{\argmin}{argmin}
\DeclareMathOperator{\Reg}{\mathsf{Reg}}
\DeclareMathOperator*{\Var}{Var}
\DeclareMathOperator{\Ind}{\mathbb{I}} 
\DeclarePairedDelimiter{\abs}{\lvert}{\rvert} 
\DeclarePairedDelimiter{\bracket}{[}{]}
\DeclarePairedDelimiter{\curl}{\{}{\}}
\DeclarePairedDelimiter{\paren}{(}{)}
\DeclarePairedDelimiter{\norm}{\|}{\|}
\newcommand{\cB}{\mathcal{B}}
\newcommand{\cF}{\mathcal{F}}
\newcommand{\cN}{\mathcal{N}}
\newcommand{\sD}{{\mathscr D}}
\newcommand{\sE}{{\mathscr E}}
\newcommand{\sG}{{\mathscr G}}
\newcommand{\sH}{{\mathscr H}}
\newcommand{\sR}{{\mathscr R}}
\newcommand{\sX}{{\mathscr X}}
\newcommand{\sY}{{\mathscr Y}}
\newcommand{\bc}{{\mathbf c}}
\newcommand{\bw}{{\mathbf w}}
\newcommand{\bone}{\mathbf{1}}
\newcommand{\sfL}{{\mathsf L}}
\newcommand{\1}{\mathds{1}}
\newcommand{\hh}{{\sf h}}
\newcommand{\rr}{{\sf r}}
\newcommand{\ov}{\overline}
\newcommand{\wt}{\widetilde}
\newcommand{\e}{\epsilon}
\newcommand{\ignore}[1]{}
\newcommand{\ldef}{{\sfL_{\rm{def}}}}
\newcommand{\tdef}{{\sfL_{\rm{tdef}}}}
\newcommand{\num}{{n_e}}
\newcommand{\expert}{{g}}
\newcommand{\expertexpert}{{\sf g}}
\newcommand{\sur}{\sfL}
\newcommand{\var}{\text{var}}
\newcolumntype{H}{>{\setbox0=\hbox\bgroup}c<{\egroup}@{}}
\title[Budgeted Multiple-Expert Deferral]{Budgeted Multiple-Expert Deferral}
\begin{document}

\maketitle

\begin{abstract}
Learning to defer uncertain predictions to costly experts offers a
powerful strategy for improving the accuracy and efficiency of machine
learning systems. However, standard training procedures for deferral
algorithms typically require querying all experts for every training
instance, an approach that becomes prohibitively expensive when expert
queries incur significant computational or resource costs. This
undermines the core goal of deferral: to limit unnecessary expert
usage. To overcome this challenge, we introduce the \emph{budgeted
deferral} framework, which aims to train effective deferral algorithms
while minimizing expert query costs during training.
We propose new algorithms for both two-stage and single-stage
multiple-expert deferral settings that selectively query only a subset of
experts per training example. While inspired by active learning, our
setting is fundamentally different: labels are already known, and the
core challenge is to decide which experts to query in order to balance
cost and predictive performance. We establish theoretical guarantees
for both of our algorithms, including generalization bounds and label
complexity analyses.
Empirical results across several domains show that our algorithms
substantially reduce training costs without sacrificing prediction
accuracy, demonstrating the practical value of our budget-aware
deferral algorithms.
\end{abstract}

% \begin{keywords}%

% \end{keywords}

\section{Introduction}

Learning algorithms can improve accuracy and efficiency by deferring
uncertain predictions to experts, such as domain specialists or
advanced pre-trained models. Examples include biomedical diagnosis with radiologists of different specialties and geological monitoring with sensor networks of varying capabilities. To do this effectively, it is essential
to account for the cost associated with each expert, which may
represent prediction quality, latency, or other computational aspects. These
costs may vary with each input instance and depend on the possible
output labels.

The problem of assigning each input to the most appropriate
expert, balancing predictive accuracy with resource consumption is
known as \emph{learning to defer with multiple experts}. This task
arises in many domains, including natural language generation with
large language models \citep{WeiEtAl2022, bubeck2023sparks}, speech
recognition, image classification, financial
forecasting, and computer vision.

Recent work has extensively studied this problem in the context of
classification \citep{hemmer2022forming, keswani2021towards,
  kerrigan2021combining, straitouri2022provably,
  benz2022counterfactual, verma2023learning,
  MaoMohriMohriZhong2023two, MaoMohriZhong2024deferral}, and more
recently in regression \citep{mao2024regression} and multi-task
learning settings \citep{montreuil2025two}. Several algorithms with
strong theoretical results have been developed, including surrogate
loss-based approaches for multiple-expert deferral with several
consistency guarantees \citep{MaoMohriZhong2025}.

These surrogate loss-based approaches can be broadly categorized into two
settings \citep{mao2025theory}. In the \emph{single-stage} setting, the predictor and
deferral function are learned jointly \citep{mozannar2020consistent,
  verma2022calibrated, charusaie2022sample, pmlr-v206-mozannar23a,
  MaoMohriZhong2024deferral, maorealizable}. In contrast, the
two-stage setting first trains a predictor, which is then held fixed
and treated as an additional expert, while the deferral function is
learned in a subsequent stage \citep{MaoMohriMohriZhong2023two}. See
Appendix~\ref{app:related-work} for further discussion.

However, despite extensive prior work, a central challenge remains: in
many real-world applications, especially those involving costly
experts such as LLMs, training a deferral algorithm can itself be
computationally prohibitive. For each labeled example $(x, y)$, the
outputs of all $p$ experts must be computed, incurring the total cost
of all experts, multiplied across the full training set. This is at
odds with the core motivation of deferral: to reduce unnecessary
expert usage.

Can we train effective deferral algorithms while reducing the
computational burden? One natural idea is to selectively query only
a subset of experts for each training instance, thereby lowering
training cost. However, this introduces partial information and raises
new questions: How should we choose which experts to query? Can we do
so in a principled way that maintains performance guarantees for the
deferral algorithm, while controlling computational cost?

This paper addresses these questions and introduces a formal study of
this \emph{budgeted deferral problem}, where the goal is to train
deferral algorithms while minimizing the cost of querying experts. We
propose new algorithmic solutions for both the two-stage
multiple-expert deferral setting \citep{MaoMohriMohriZhong2023two} and
the single-stage multiple-expert deferral setting
\citep{verma2023learning,MaoMohriZhong2024deferral}. While our methods
draw inspiration from active learning \citep{beygelzimer2009importance,cortes2019disgraph,cortes2019rbal,cortes2020adaptive}, the setup is fundamentally
different: unlike standard active learning, our training data consists
of labeled examples, that is, pairs $(x, y)$ are already known. The
challenge is not whether to request a label, but rather which experts
to query for each example, in order to balance cost and predictive
performance.

\textbf{Contributions.} 
Our main contributions are as follows, and they also define the structure of the rest of this paper.  
In Section~\ref{sec:algo}, we propose a budget-aware active querying algorithm for two-stage multiple-expert deferral.  
In Section~\ref{sec:generalization}, we introduce the \emph{Sampling-Probs} subroutine used in our algorithm.  
This choice leads to favorable guarantees on the generalization bound of our algorithm.  
In Section~\ref{sec:label}, we establish a label complexity bound for our budgeted two-stage deferral algorithm.  
In the realizable case, the expected number of expert cost queries it issues scales as $\wt O(\sqrt{T})$, a substantial improvement over the linear label complexity $\num T$ incurred by standard two-stage methods that query all expert costs. For simplicity of exposition, the main body presents this square-root bound, while the stronger logarithmic bound (derived via Freedman’s inequality) is given in full detail in Appendix~\ref{app:enhanced}.  
Even in the agnostic setting, the bound remains favorable when the optimal surrogate loss $\sE^*$ is small.  
In Appendix~\ref{app:single-stage}, we also present a novel algorithm for single-stage multiple-expert deferral with budgeted expert queries, supported by analogous theoretical guarantees.  
Finally, in Section~\ref{sec:experiments}, we report experimental results demonstrating the effectiveness of our methods, including comparisons with standard deferral baselines.  
Our budgeted deferral method matches the accuracy of the standard approach while reducing expert queries to below 40\%, with even larger gains as the number of experts increases, demonstrating strong scalability to complex prediction tasks.

\textbf{Novelty.} 
This work introduces a novel two-stage deferral solution that significantly reduces expert cost queries in a cost-sensitive active learning framework.  
Unlike standard active learning, our approach goes beyond deciding whether or not to query a label; instead, for each instance, we determine \emph{which} expert cost to query and with what probability.  
We achieve this by extending existing active learning algorithms and conceptual tools to our setting.  
Our new technical solution offers strong generalization bounds and label complexity guarantees.  
In particular, in the realizable case the label complexity scales with the square-root of the time horizon $T$—and can be further sharpened to logarithmic dependence using Freedman’s inequality (see Appendix~\ref{app:enhanced}).  
We also present a novel solution for the single-stage deferral setting, which incorporates an additional “no deferral” option.  
Our experiments across various binary and multi-class datasets demonstrate significant savings in expert queries (at most one per sample) while maintaining prediction accuracy comparable to full batch settings.

\textbf{Related Work.} 
The most relevant prior work to our study is by \citet{reid2024online},
who model deferral to a single expert as a two-armed contextual bandit
problem with budget constraints. This formulation enables the direct
application of existing bandit algorithms with knapsack constraints
\citep{agrawal2016linear, filippi2010parametric, li2017provably}.
However, extending these methods to multiple experts is non-trivial, and
general bandit algorithms \citep{lattimore2020bandit} are not tailored
to the deferral loss or its consistent surrogate losses, which are
central to learning-to-defer approaches. Moreover, \citet{reid2024online}
assume a generalized linear model for expert performance, an assumption
that is often too restrictive in practice. While multi-armed bandit
(MAB) algorithms are powerful tools, our problem does not naturally fit
this framework. A direct mapping of experts to arms yields regret
benchmarks (e.g., external or shifting regret) that are either
uninformative or misaligned with the objectives of deferral. Even refined
notions such as contextual or policy regret face challenges, as the
reward definition must balance the immediate cost of querying an expert
with the long-term benefit of acquiring labels for training. This trade-off,
together with the dependence of generalization on the choice of policy
class, makes it difficult to cast our setting as a standard bandit
problem. For a detailed discussion, see
Appendix~\ref{app:bandits}.

\section{Two-stage expert deferral framework}
\label{sec:pre}

We consider a standard multi-class classification setting with input
space $\sX$ and label space $\sY = [n] \coloneqq \{1, \ldots, n\}$ for
$n \geq 2$ classes.

In the \emph{two-stage multiple-expert deferral framework}, learning
proceeds in two phases. In Stage 1, a multi-class classification
predictor is trained and then fixed, becoming one of several available
experts.  In Stage 2, a \emph{routing function} is learned, which
selects, on a per-input basis, one of $\num \geq 2$ predefined experts
$\expert_1, \ldots, \expert_{\num}$ (including the trained
predictor). Each expert is a scoring function $\expert_j\colon \sX
\times \sY \to \Rset$, and its prediction on input $x$ is
$\expertexpert_j(x) = \argmax_{y \in \sY} \expert_j(x, y)$.  The
routing function $r \in \sR$ selects an expert index according to:
\[
\rr(x) = \argmax_{k \in [\num]} r(x, k),
\]
with ties broken deterministically. The set $\sR$ is a finite
hypothesis class of scoring functions $r\colon \sX \times [\num] \to
\Rset$, and its cardinality is denoted by $\abs*{\sR}$. Our results
naturally extend to infinite $\sR$ using covering numbers (see Appendix~\ref{app:infinite}).

The \emph{two-stage deferral loss} is defined as:
\[
\tdef(r, x, y, \bc) = \sum_{k = 1}^{\num} c_k(x, y)  \1_{\rr(x) = k},
\]
where $c_k(x, y)$ is the cost incurred for selecting expert
$\expert_k$ and $\bc = \paren*{c_1, \ldots, c_{\num}}$ is the cost vector. A common choice is the $0$-$1$ loss: $c_k(x, y) =
\1_{\expertexpert_k(x) \neq y}$, though $c_k$ may also incorporate
computation or fairness costs, for example $c_k(x, y) = \alpha_k
\1_{\expertexpert_k(x) \neq y} + \beta_k$, for some $\alpha_k, \beta_k
> 0$. We adopt the $0$-$1$ cost formulation for simplicity, following
\citet{MaoMohriMohriZhong2023two}, but our results straightforwardly
extend to other choices.

Since $\tdef$ is non-differentiable and its optimization is
intractable due to the presence of indicator functions in its
definition, we resort instead to a surrogate loss. The surrogate
general loss function proposed by \citet{MaoMohriMohriZhong2023two} is:
\[
\sfL(r, x, y, \bc) = 
\sum_{k = 1}^{\num} \paren*{1 - c_k(x, y)}  \ell(r, x, k),
\]
where $\ell$ is a surrogate loss for multiclass classification. For
example, if $\ell$ is the multiclass logistic loss \citep{Verhulst1838,Verhulst1845,Berkson1944,Berkson1951}, then:
\[
\ell(r, x, k) = \log \paren*{1 + \sum_{k' \neq k} e^{r(x, k') - r(x, k)}}.
\]
We assume (possibly after normalization) that $\sfL$ takes values in
$[0, 1]$.  Let $\sD$ be a distribution over $\sX \times \sY \times
\{0,1\}^{\num}$. The \emph{expected surrogate loss} or
\emph{generalization error} of a routing function $r$ is defined as:
$\sE(r) = \E_{(x, y, \bc) \sim \sD}[\sfL(r, x, y, \bc)]$, and
the \emph{best-in-class generalization error} over $\sR$ is:
$\sE^*(\sR) = \inf_{r \in \sR} \sE(r)$.

\section{Budgeted deferral algorithm} 
\label{sec:algo}

How can we design a two-stage deferral solution that reduces expert
queries? Our approach builds on an existing active learning algorithm
(specifically, IWAL \citep{beygelzimer2009importance}, but other similar algorithms could also be
adapted). However, unlike standard active learning, our problem
requires more than simply deciding whether or not to query a
label. Instead, for each instance $(x_t, y_t)$, we must determine which expert costs $c_k(x_t, y_t)$ to query.

To address this, we decompose the decision into two parts: (1)
selecting an expert $k$, and (2) determining the probability of
querying $c_k(x_t, y_t)$ once $k$ is chosen. As we show in our analysis, selecting experts uniformly gives the best worst-case bounds since all experts are treated equally.\ignore{the optimal choice of expert turns out to be
uniform at random with respect to the worst-case upper bounds since all experts are treated symmetrically.} For the query probability $p_{t,k}$, we carefully
design its expression based on the surrogate loss scores for expert
$k$ computed by each routing function in the current version space,
along with the maximum cost in next section.

This formulation allows us to derive strong theoretical guarantees on
the label complexity. Moreover, our experimental results empirically
validate the effectiveness of this approach, demonstrating significant
savings in expert queries.  Remarkably, our algorithm queries at most
one expert cost per sample.

\textbf{Novelty.} 
What sets our method apart is this novel adaptation
of active learning tools to a deferral-based cost-sensitive
setting. Unlike prior work that typically focuses on binary query
decisions or uniform cost assumptions, we derive a principled
two-stage mechanism that incorporates both expert selection and
cost-sensitive querying into a single coherent framework.

Algorithm~\ref{alg:IWAL} outlines the core procedure of our budgeted
two-stage deferral strategy in the multi-expert setting.
\begin{algorithm}[h]
  \caption{Budgeted Two-Stage Deferral with Multiple Experts
    (Subroutine $\textsc{Sampling-Probs}$)}
\label{alg:IWAL}
\begin{algorithmic}
\STATE \textsc{Initialize} $S_0 \gets \emptyset$;
\FOR{$t = 1$ \TO $T$}
\STATE \textsc{Receive}$(x_t, y_t)$;
\STATE $p_t \gets \textsc{Sampling-Probs}\newline
(x_t, y_t, \curl*{x_s, y_s, \bc_s, q_s, k_s, p_s, Q_s
\colon 1 \leq s < t})$;
\STATE $k_t \gets \textsc{Sample}(\num, q_t)$;
\STATE $Q_{t, k_t} \gets \textsc{Bernoulli}(p_{t, k_t} )$;
\IF{$Q_{t, k_t} = 1$}
\STATE $c_{t, k_t} \gets \textsc{Query-Cost}(k_t, (x_t, y_t))$
\STATE $S_{t} \gets S_{t-1} \cup \curl*{\paren*{x_t, y_t, c_{t, k_t},
    \frac{1}{q_{t, k_t} p_{t, k_t}}}}$;
\ELSE
\STATE $S_t \gets S_{t - 1}$;
\ENDIF
\STATE $r_t \gets \argmin_{r \in \sR} \sum_{(x, y, c, w) \in S_{t}}
w (1 - c)  \ell(r, x, k_t)$.
\ENDFOR
\end{algorithmic}
\end{algorithm}
For any $t \in [T]$ and $k \in [\num]$, we denote by $q_{t, k}$ the
probability of selecting expert $k$ at time $t$. The optimal choice
for the value of $q_t = (q_{t,1}, \ldots, q_{t, \num})$ is determined
in Section~\ref{sec:label}, using our theoretical bounds.

At each round $t$, upon observing a labeled instance $(x_t, y_t)$, the
learner calls a subroutine \emph{Sampling-Probs} (to be detailed
later) that takes as input the current instance and historical data
and returns the vector $p_t = (p_{t, 1}, \ldots, p_{t, \num})$, where
$p_{t, k}$ is the probability of querying the cost $c_{t, k}$
associated with expert $k$, conditioned on expert $k$ being selected.

An expert $k_t$ is then selected according to $q_t$, and its cost is
queried with probability $Q_{t, k_t}  \sim 
\textsc{Bernoulli}(p_{t,k})$.
The algorithm incrementally builds a labeled dataset, assigning an
importance weight to each queried cost. Specifically, if expert $k$ is
selected and its cost $c_{t, k}$ is queried, the example is stored
with a weight of $1/(q_{t,k}  p_{t,k})$ to account for the sampling
process.

Let $\sD$ denote a distribution over $\sX \times \sY \times \curl*{0,
  1}^{\num}$. Then, the expected surrogate loss of a hypothesis $r \in \sR$
is:
\begin{align*}
\sE(r) &= \E_{(x, y, \bc) \sim \sD}[\sfL(r, x, y, \bc)] =  \E_{(x, y, \bc) \sim \sD} \bracket*{\sum_{k = 1}^{\num} \paren*{1 - c_k(x, y)}  \ell(r, x, k)}.
\end{align*}
\ignore{
  Since the distribution $\sD$ and surrogate loss $\sur$ are fixed and understood from context, we omit them in the notation going forward.
  }

To estimate $\sE(r)$ from data, the algorithm uses an importance
weighted empirical estimate at time $T$:
\begin{equation*}
\sE_{T}(r) = \frac{1}{T} \sum_{t = 1}^{T} \sum_{k = 1}^{\num}
\frac{1_{k = k_t}  Q_{t, k}}{q_{t, k}  p_{t, k}} (1 - c_{t, k}(x_t,
y_t))  \ell(r, x_t, k),
\end{equation*}
where $(k_t, Q_{t, k})$ are the random decisions used in sampling and
querying at round $t$. It is straightforward to verify that this
estimator is unbiased, i.e., $\E[ \sE_{T}(r)] = \sE(r)$, where the
expectation is over the internal randomness of the
algorithm. Theorem~\ref{thm:loss-bound} establishes high-probability
concentration bounds for $\sE_T(r)$, assuming the probabilities $p_{t,
  k}$ are chosen appropriately.

\section{Sampling-Probs Strategy and Generalization Guarantees}
\label{sec:generalization}

A key component of the subroutine introduced in this section is the
specific definition of the probabilities $p_{t, k}$, which depend on
the selected expert $k$. We will demonstrate that this choice leads to
favorable guarantees on the generalization bound and label complexity of our algorithm.

Algorithm~\ref{alg:threshold} gives the pseudocode of the
\emph{Sampling-Probs} subroutine used within the budgeted two-stage
multiple-expert deferral framework. This subroutine maintains a
dynamically evolving subset of the hypothesis set (version space),
denoted by $\sR_t$, which is refined over time based on empirical
performance.

\begin{algorithm}[h]
\caption{Sampling-Probs Subroutine with Past History}
\label{alg:threshold}
\begin{algorithmic}
\STATE \textsc{Initialize} $\sR_0 \gets \sR$;
\FOR{$t = 2$ \TO $T$}
\STATE $\sE_{t - 1}(r) \gets \frac{1}{t - 1}  \sum_{s = 1}^{t - 1}
\sum_{k = 1}^{\num} \frac{1_{k = k_s} Q_{s, k} }{q_{s, k} p_{s, k}}\newline
 \times (1 - c_{s, k}(x_s, y_s))  \ell(r, x_s, k)$
\STATE $\sE_{t - 1}^{*}\gets \min_{r \in \sR_{t - 1}}\sE_{t - 1}(r) $;
\STATE $\sR_t \gets \curl*{r \in \sR_{t - 1}\colon
\sE_{t - 1}(r)
\leq \sE_{t - 1}^{*} + \Delta_{t - 1} }$;
\STATE $p_{t, k} \gets \max_{r, r' \in \sR_t} \curl*{\ell(r, x_t, k) - \ell(r', x_t, k)}$.
\ENDFOR
\end{algorithmic}
\end{algorithm}

The version space is initialized with the full hypothesis class
$\sR$. After each round, it is pruned to retain only those hypotheses
whose empirical error does not exceed that of the current best
predictor (within $\sR_t$) by more than a slack parameter
$\Delta_t$. Formally:
\begin{equation}
\sR_{t + 1} = \{r \in \sR_t\colon \sE_t(r) \leq \sE_t^* + \Delta_t \},  
\end{equation}
where $\sE_t^* = \min_{r \in \sR_t} \sE_t(r)$ denotes the minimal
empirical error at round $t$.

To control the size of $\Delta_t$, we define $q_{\min} = \min_{k \in
  [\num]} q_{t, k}$, which we assume is strictly positive without loss
of generality, and let $\ov q = \frac{1}{q_{\min}} + 1$. Then, the
threshold $\Delta_t$ is chosen as follows, based on standard sample
complexity arguments:
\begin{equation}
  \Delta_t = \sqrt{\ov q^2 \cdot 8 / t \cdot \log
    \paren*{2t (t + 1)\abs*{\sR}^{2} / \delta}}  .
\end{equation}
This pruning strategy guarantees, with high probability, that the
optimal predictor $r^* \in \sR$ remains in the version space $\sR_t$
at all times with high probability, while progressively eliminating
suboptimal hypotheses (see Theorem~\ref{thm:loss-bound}).

For each instance $(x_t, y_t)$, the subroutine evaluates the
informativeness of querying each expert’s cost by examining the
variability in the expert-specific component of the surrogate loss
across hypotheses in $\sR_t$, leveraging the decomposability of the
loss. The sampling probability $p_{t,k}$ for expert $k$ is then set to
the maximum difference in this component over all pairs of hypotheses
in $\sR_t$:
\begin{align*}
p_{t, k} 
&= \max_{r, r' \in \sR_t} \max_{c \in \curl*{0, 1}} (1 - c) \paren*{\ell(r, x_t, k) - \ell(r', x_t, k)}\\
&= \max_{r, r' \in \sR_t} \curl*{\ell(r, x_t, k) - \ell(r', x_t, k)}.    
\end{align*}
Since the surrogate loss is normalized within $[0,1]$, the resulting
sampling probabilities are also bounded in this range.

This design allocates the query budget adaptively, prioritizing
experts and instances where the disagreement among remaining
hypotheses is greatest, thus targeting high-uncertainty regions. 

We now establish high-probability performance guarantees for the
predictors output by the algorithm.

\begin{restatable}[\textbf{Two-Stage Generalization Bound}]
  {theorem}{GeneralizationBound}
\label{thm:loss-bound} 

Let $\sD$ be any distribution over $\sX \times \sY \times \curl*{0,
1}^{\num}$, and let $\sR$ be a hypothesis class. Assume that $r^* \in
\sR$ minimizes the expected surrogate loss $\sE(r)$. Then, for any
$\delta > 0$, with probability at least $1 - \delta$, the following
holds for all $T \geq 1$:
\begin{itemize}
\item The optimal hypothesis $r^*$ belongs to the retained set
  $\sR_T$;
\item For all $r, r' \in \sR_T$, the generalization gap satisfies
\[
\sE(r) - \sE(r') \leq 2 \Delta_{T - 1}.
\]
\end{itemize}
In particular, the learned hypothesis $r_T$ at time $T$ satisfies
\[
\sE(r_T) - \sE(r^*) \leq 2 \Delta_{T - 1}.
\]
\end{restatable}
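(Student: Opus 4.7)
The plan is to adapt the IWAL analysis of \citet{beygelzimer2009importance} to our multi-expert setting, where the additional expert-selection distribution $q_t$ enters the importance weights. The argument has three parts: a uniform concentration bound for importance-weighted empirical loss differences on pairs of hypotheses in the version space; an inductive argument that keeps $r^*$ in $\sR_t$ for all $t$; and a deduction of the generalization gap from the definition of $\sR_T$.

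First, for each pair $r, r' \in \sR$, I would introduce the per-round loss difference
\[
D_t(r, r') = \sum_{k = 1}^{\num} (1 - c_k(x_t, y_t))(\ell(r, x_t, k) - \ell(r', x_t, k)),
\]
whose expectation is $\sE(r) - \sE(r')$, together with its importance-weighted analogue $\h D_t(r, r') = \sum_k \frac{\1_{k = k_t} Q_{t, k}}{q_{t, k} p_{t, k}}(1 - c_{t, k})(\ell(r, x_t, k) - \ell(r', x_t, k))$, which contributes the summands of $\sE_t(r) - \sE_t(r')$ and is easily checked to be conditionally unbiased for $\sE(r) - \sE(r')$. The masked centered quantity $Y_t(r, r') = [\h D_t(r, r') - (\sE(r) - \sE(r'))] \cdot \1_{r, r' \in \sR_t}$ then forms a martingale difference sequence, since $\sR_t$ is determined by the history strictly before round $t$. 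The crucial boundedness is that on $\{r, r' \in \sR_t\}$, the choice $p_{t, k_t} = \max_{\rho, \rho' \in \sR_t}\{\ell(\rho, x_t, k_t) - \ell(\rho', x_t, k_t)\}$ in Sampling-Probs yields $p_{t, k_t} \geq \abs*{\ell(r, x_t, k_t) - \ell(r', x_t, k_t)}$, so $\abs*{\h D_t(r, r')} \leq 1/q_{t, k_t} \leq 1/q_{\min}$, and combined with $\abs*{\sE(r) - \sE(r')} \leq 1$ we obtain $\abs*{Y_t(r, r')} \leq \ov q$.

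Next, I would apply Azuma--Hoeffding to $\sum_{s = 1}^{t - 1} Y_s(r, r')$ with per-instance failure probability $\delta/(t(t+1)\abs*{\sR}^2)$ and union-bound over all pairs in $\sR \times \sR$ and all $t \geq 1$; the constant in $\Delta_t$ is chosen precisely so that Azuma's deviation matches $\Delta_{t-1}/2$. This produces, on an event of probability at least $1 - \delta$,
\[
\abs*{(\sE_t(r) - \sE_t(r')) - (\sE(r) - \sE(r'))} \leq \Delta_t/2
\]
for every $t \geq 1$ and every pair $r, r'$ lying in $\sR_s$ for all $s \leq t$. On this event I would induct on $T$ to show $r^* \in \sR_T$: the base case is trivial, and for the inductive step, setting $\h r = \argmin_{r \in \sR_{T - 1}} \sE_{T - 1}(r)$, both $r^*$ and $\h r$ lie in $\sR_{T - 1}$ by the induction hypothesis (and the nesting $\sR_{T-1} \subseteq \sR_s$ for $s \leq T-1$), so concentration applied to this pair together with $\sE(r^*) \leq \sE(\h r)$ gives $\sE_{T - 1}(r^*) - \sE_{T - 1}(\h r) \leq \Delta_{T - 1}/2 \leq \Delta_{T-1}$, whence $r^* \in \sR_T$. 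For the gap, any $r, r' \in \sR_T$ satisfy $\sE_{T - 1}(r), \sE_{T - 1}(r') \leq \sE_{T - 1}^* + \Delta_{T - 1}$, so $\abs*{\sE_{T - 1}(r) - \sE_{T - 1}(r')} \leq \Delta_{T - 1}$; combined with the concentration bound (another $\Delta_{T-1}/2$) this yields $\sE(r) - \sE(r') \leq 2\Delta_{T - 1}$, and instantiating at $r = r_T$, $r' = r^*$ produces the final inequality.

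The main obstacle is the data-dependent range of the importance-weighted estimator: $p_{t, k}$ depends on the random version space $\sR_t$, so a Hoeffding-type bound cannot be applied naively to $\h D_t(r, r')$ for a fixed pair, because its increment is only bounded when that pair still lies in $\sR_t$. The indicator $\1_{r, r' \in \sR_t}$ in the definition of $Y_t$ is what rescues this: it makes the martingale-difference bound $\ov q$ hold deterministically while coinciding with the centered increment on precisely the event required throughout the induction, so the concentration bound transfers to $\sE_t(r) - \sE_t(r')$ whenever it is actually needed. The rest is a mechanical adaptation of the IWAL proof, with the extra factor $1/q_{\min}$ in the boundedness constant arising from the uniform expert-selection step.
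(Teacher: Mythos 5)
Your proposal is correct and follows essentially the same route as the paper: a martingale concentration bound (Azuma plus a union bound over pairs in $\sR$ and over $t$, exactly the paper's Lemma~\ref{lemma:pair}), an induction keeping $r^*$ in the version space, and the elementary deduction of the gap from the definition of $\sR_T$. Your explicit masking indicator $\1_{r, r' \in \sR_t}$ is a slightly more careful treatment of the data-dependent range of the importance weights than the paper's writeup, but it formalizes the same boundedness step $\abs{Z_t} \leq \ov q$ used there.
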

The proof is given in Appendix~\ref{app:loss-bound}. It relies on a
concentration argument built around Lemma~\ref{lemma:pair}, which is
stated and proved in Appendix~\ref{app:pair}. 

\section{Label Complexity}
\label{sec:label}

In the previous section, we established that the generalization error
of our budgeted two-stage deferral algorithm closely matches that of a
standard deferral method with full access to all $\num T$ expert
costs. We now turn to analyzing the label complexity of our
approach, that is, the expected number of expert cost queries it
issues.

To derive label complexity guarantees for our algorithm,
we must adapt existing tools and definition in active
learning to our deferral setting. In particular, we
will define a new notion of slope asymmetry, hypothesis
distance metric, generalized disagreement
coefficient, based on experts' costs $c_k$ and
tailored to our setting.
% To this end, we introduce a general class of surrogate loss functions,
% a metric over the hypothesis space, and a generalized disagreement
% coefficient.
These tools allow us to demonstrate that our budgeted
deferral algorithm can achieve a favorable label complexity, in fact
lower than its fully supervised counterpart when the learning problem
is approximately realizable and the disagreement coefficient of the
hypothesis set is not loo large.

We focus on a family of multiclass surrogate losses $\ell$ that
includes, among others, the multinomial logistic loss. We also assume
a mild condition on the cost structure: for every input-label pair,
there exists at least one expert that incurs zero cost. This
assumption can be satisfied by expanding the expert pool to include a sufficiently diverse set of experts, such that at least one performs well on each instance.\ignore{This
assumption holds in many practical deferral settings.}
As in \citep{beygelzimer2009importance}, a central property we require
of the loss function $\ell$ is bounded slope asymmetry, which controls
how differences in surrogate losses between hypotheses may be
distorted by cost vectors. This condition is key to relating
loss-based disagreement to label complexity.

\begin{definition}[Slope Asymmetry for Two-Stage Deferral]
\label{def:slope-asymmetry}
The \emph{slope asymmetry} of a multi-class loss function $\ell: \sR
\times \sX \times [\num] \to [0,\infty)$ is defined as: $K_{\ell} =$
\begin{align*}
\sup_{r, r', x, y} 
\frac{\max_{\bc \in \curl*{0,1}^{\num}} \sum_{k = 1}^{\num} \paren*{1 - c_k}
  \abs*{\ell(r, x, k) - \ell(r', x, k)}}{\min_{\bc \in \curl*{0,1}^{\num}} \sum_{k = 1}^{\num} \paren*{1 - c_k} \abs*{\ell(r, x, k) - \ell(r', x, k)}}.  
\end{align*}
\end{definition}
\vspace{-1.5pt}
This quantity is always well-defined and finite if, for every $(x, y)$,
there exists at least one expert $k^*$ with zero cost, $c_{k^*}(x, y) = 0$.
In practice, $K_{\ell}$ is bounded for common convex surrogates such as the logistic loss, provided the range of score functions $r$ is
restricted to a compact interval (e.g., $[-B, B]$); see
Appendix~\ref{app:slope-asymmetry} for details and explicit bounds. Next, we define a distance measure over the
hypothesis set that reflects variability in expert-specific loss
components.

\begin{definition}[Hypothesis Distance Metric]
For any $r, r' \in \sR$ and distribution $\sD$, define $\rho(r, r') =$
\begin{align*}
\E_{(x, y) \sim D} \bracket*{ \max_{\bc \in \curl*{0,1}^{\num}}
    \sum_{k = 1}^{\num} \paren*{1 - c_k} \abs*{\ell(r, x, k) - \ell(r', x, k)}}.
\end{align*}
Define $\e$-ball around $r$ as $B(r, \e) = \curl[\big]{r' \in \sR \colon \rho(r, r') \leq \e }$.
\end{definition}
Suppose $r^* \in \sR$ minimizes the expected surrogate loss: $\sE^* =
\sE(r^*) = \inf_{r \in \sR} \sE(r)$. At time $t$, the version space
$\sR_t$ contains only hypotheses with generalization error at most
$\sE^* + 2\Delta_{t-1}$. But how close are these hypotheses to $r^*$
in $\rho$-distance? The following lemma provides an upper bound in
terms of the slope asymmetry:
\begin{restatable}{lemma}{TwoSpaces}
\label{lemma:two-spaces}
For any distribution $\sD$ and any multi-class loss function $\ell$,
we have $\rho(r, r^*) \leq K_{\ell} \cdot ( \sE(r) + \sE^* )$ for all
$r \in \sR$.
\end{restatable}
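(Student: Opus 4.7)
The plan is to prove the inequality pointwise in $(x,y)$ and then take expectation. Fix any $r \in \sR$ and any $(x,y) \in \sX \times \sY$. The key identity to exploit is that the integrand in $\rho(r,r^*)$ is exactly the numerator appearing in the definition of $K_\ell$ (applied to the pair $(r,r^*)$), so Definition~\ref{def:slope-asymmetry} yields
\[
\max_{\bc \in \curl*{0,1}^{\num}} \sum_{k=1}^{\num} (1-c_k)\,\abs*{\ell(r,x,k)-\ell(r^*,x,k)}
\;\le\; K_{\ell}\cdot \min_{\bc \in \curl*{0,1}^{\num}} \sum_{k=1}^{\num} (1-c_k)\,\abs*{\ell(r,x,k)-\ell(r^*,x,k)}.
\]

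Next I would use the fact that the minimum over $\bc \in \{0,1\}^{\num}$ is no larger than the value at any particular cost vector. In particular, plugging in the true cost vector $\bc(x,y)$ drawn with $(x,y)$ under $\sD$, the right-hand side is at most
\[
K_{\ell}\cdot \sum_{k=1}^{\num} \paren*{1-c_k(x,y)}\,\abs*{\ell(r,x,k)-\ell(r^*,x,k)}.
\]
Now I apply the elementary bound $\abs*{a-b} \le a+b$ valid for $a,b \ge 0$, which holds here since each surrogate loss $\ell(r,x,k)$ is nonnegative. This gives the pointwise estimate
\[
\max_{\bc} \sum_{k=1}^{\num} (1-c_k)\,\abs*{\ell(r,x,k)-\ell(r^*,x,k)}
\;\le\; K_{\ell}\!\sum_{k=1}^{\num}(1-c_k(x,y))\,\ell(r,x,k) \;+\; K_{\ell}\!\sum_{k=1}^{\num}(1-c_k(x,y))\,\ell(r^*,x,k).
\]

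Finally, I would take expectation over $(x,y,\bc)\sim \sD$ on both sides. The left-hand side becomes $\rho(r,r^*)$ by definition, while the two terms on the right become $K_{\ell}\,\sE(r)$ and $K_{\ell}\,\sE(r^*)=K_{\ell}\,\sE^*$ respectively, yielding exactly $\rho(r,r^*) \le K_{\ell}(\sE(r)+\sE^*)$.

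The only subtle point, and the step I would most want to double-check, is the passage from the $\min_{\bc}$ to the specific draw $\bc(x,y)$: it relies implicitly on the standing assumption that for every $(x,y)$ at least one expert has zero cost, which guarantees that $K_{\ell}$ is finite and that the min over $\bc$ is achieved at a genuinely nontrivial configuration rather than degenerating to $0$ by a trivial choice of $\bc=\mathbf{1}$. Apart from this, the argument is a clean combination of the definition of $K_{\ell}$, the triangle inequality $|a-b|\le a+b$ for nonnegative reals, and linearity of expectation.
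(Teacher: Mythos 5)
Your proof is correct and follows essentially the same route as the paper's: bound the $\max_{\bc}$ by $K_\ell$ times the $\min_{\bc}$ via Definition~\ref{def:slope-asymmetry}, pass to the actual cost vector, apply $\abs{a-b}\le a+b$ for nonnegative losses, and take expectations; you are in fact slightly more explicit than the paper about the intermediate step through the minimum and about the role of the zero-cost-expert assumption in keeping that minimum nondegenerate.
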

The proof is provided in Appendix~\ref{app:two-spaces}.
The following extends the notion of disagreement \citep{hanneke2007bound} to our setting:
\begin{definition}
The \emph{disagreement coefficient} $\theta$ is the smallest value
such that, for all $\e > 0$,
\begin{equation*}
  \E_{(x, y) \sim D} \sup_{r \in B(r^*, \e)}
  \sup_{k \in [\num]} \abs*{\ell(r, x, k) - \ell(r^*, x, k)} \leq  \theta \e.   
\end{equation*}
\end{definition}
We now present an upper bound on the expected number of cost queries
required by the algorithm. The proof is included in
Appendix~\ref{app:label}.
\begin{restatable}[\textbf{Two-Stage Label Complexity Bound}]{theorem}{LabelBound}
\label{thm:label} 
 Let $\sD$ be a two-stage deferral distribution and $\sR$ a hypothesis
set. Suppose the loss function $\ell$ has slope asymmetry $K_{\ell}$
and the disagreement coefficient of the problem is $\theta$. Then,
with probability at least $1 - \delta$, the expected number of cost
queries made by the budgeted two-stage deferral algorithm over $T$
rounds is bounded by.
\begin{equation}
4\theta \cdot K_{\ell} \cdot 
\paren*{\sE^* T + O\paren*{ \paren*{1 / q_{\min} + 1} \sqrt{T \log (\abs*{\sR} T/\delta)} } },    
\end{equation}
where the expectation is taken over the algorithm's randomness.
\end{restatable}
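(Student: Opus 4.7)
The plan is to combine three ingredients already developed in the paper: Theorem \ref{thm:loss-bound} to control the version space $\sR_t$, Lemma \ref{lemma:two-spaces} to translate surrogate-loss proximity into $\rho$-distance, and the disagreement coefficient to convert $\rho$-ball size into expected pointwise loss disagreement. Conditioning on the history through round $t-1$, the expected number of cost queries at round $t$ equals $\sum_{k=1}^{\num} q_{t,k} p_{t,k}(x_t)$, since $k_t$ is drawn from $q_t$ and the cost is then queried with probability $p_{t,k_t}$. Because $q_t$ is a probability distribution, this convex combination is bounded by $\max_k p_{t,k}(x_t)$. Working on the $1-\delta$ event of Theorem \ref{thm:loss-bound} (which guarantees $r^* \in \sR_t$ for every $t$), the triangle inequality around $r^*$ yields
\[
\max_k p_{t,k}(x_t) \;\leq\; 2 \sup_{r \in \sR_t} \sup_{k \in [\num]} \abs*{\ell(r, x_t, k) - \ell(r^*, x_t, k)}.
\]

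Next, I would locate $\sR_t$ inside a $\rho$-ball around $r^*$. By Theorem \ref{thm:loss-bound} every $r \in \sR_t$ also satisfies $\sE(r) \leq \sE^* + 2\Delta_{t-1}$, and Lemma \ref{lemma:two-spaces} then yields $\rho(r, r^*) \leq K_\ell(\sE(r) + \sE^*) \leq 2 K_\ell(\sE^* + \Delta_{t-1}) =: \e_t$, so $\sR_t \subseteq B(r^*, \e_t)$. Taking expectation over $x_t \sim \sD$ and invoking the definition of the disagreement coefficient at radius $\e_t$ gives
\[
\E_{x_t}\!\left[\max_k p_{t,k}(x_t)\right] \leq 2\,\E_{x_t}\!\left[\sup_{r \in B(r^*, \e_t)} \sup_{k} \abs*{\ell(r, x_t, k) - \ell(r^*, x_t, k)}\right] \leq 2\theta \e_t = 4\theta K_\ell(\sE^* + \Delta_{t-1}).
\]

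Summing over $t \in [T]$ then gives $\E[\text{total queries}] \leq 4\theta K_\ell\bigl(\sE^* T + \sum_{t=1}^T \Delta_{t-1}\bigr)$. Substituting $\Delta_{t-1} = O(\ov q \sqrt{\log(\abs*{\sR} t/\delta)/t})$ with $\ov q = 1/q_{\min}+1$ yields $\sum_{t=1}^T \Delta_{t-1} = O(\ov q \sqrt{T \log(\abs*{\sR} T/\delta)})$, producing the stated bound.

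The main obstacle will be the bookkeeping between the high-probability event over the data (used in Theorem \ref{thm:loss-bound}) and the expectation over the algorithm's internal coins used by the \textsc{Sample} and \textsc{Bernoulli} subroutines. I would handle this by fixing a single $1-\delta$ event on which the version-space containment $r^* \in \sR_t$ and the gap bound $\sE(r) - \sE(r^*) \leq 2\Delta_{t-1}$ both hold for all $t \leq T$; on this event, iterated expectation over $x_t$ and the algorithm's coins is well-defined, which lets the pointwise bound on $p_{t,k}(x_t)$ compose cleanly with the disagreement coefficient (which lives in expectation over $\sD$).
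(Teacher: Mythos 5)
Your proof is correct and follows essentially the same route as the paper's: using Theorem~\ref{thm:loss-bound} to keep $r^*$ in $\sR_t$ and place $\sR_t$ inside the $\rho$-ball $B(r^*, 2K_\ell(\sE^* + \Delta_{t-1}))$ via Lemma~\ref{lemma:two-spaces}, bounding $p_{t,k}$ by twice the deviation from $r^*$ via the triangle inequality, applying the disagreement coefficient, and summing the $\Delta_{t-1}$ terms. The only cosmetic difference is that you bound $\sum_k q_{t,k} p_{t,k}$ by $\max_k p_{t,k}$ up front, whereas the paper carries the weighted sum and factors out $\sum_k q_{t,k} = 1$ at the end; your explicit handling of the high-probability event versus the expectation over the algorithm's coins is a welcome clarification the paper leaves implicit.
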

The theorem establishes a label complexity bound for our budgeted
two-stage deferral algorithm. In the realizable case, the bound scales
as $\wt O(\sqrt{T})$, significantly improving over the linear label
complexity $\num T$ incurred by standard two-stage methods that query
all expert costs. For simplicity of exposition, the main body presents this square-root bound, while the stronger logarithmic bound (derived via Freedman’s inequality) is given in Theorem~\ref{Thm:bdef} in Appendix~\ref{app:enhanced}.  Even in the agnostic setting, the bound remains
favorable when the optimal surrogate loss $\sE^*$ is small. A high-probability version of this result is provided in 
Corollary~\ref{cor:hp-label} in Appendix~\ref{app:high-probability}.

The dependence on the generalized disagreement coefficient is, in
general, unavoidable, as shown by \citet{Hanneke2014}. However, this
coefficient has been shown to be bounded for many common hypothesis
classes, enabling meaningful guarantees in practice.

\paragraph{Optimal $q_t$} Finally, we note that both the generalization
and label complexity bounds are minimized when the expert sampling
probabilities are uniform: $q_{t,k} = 1/\num$ for all $t$ and $k$,
yielding $q_{\min} = 1/\num$ since all experts are treated symmetrically. Under this setting, the bounds simplify
to:
\begin{align}
\label{eq:bounds-linear}
\sE(r_T) \leq \sE(r^*) + 2(\num + 1) \sqrt{(8 / (T - 1)) \log (2(T - 1) T \abs*{\sR}^{2} / \delta)} \nonumber \\
\E \bracket*{\sum_{t = 1}^T \sum_{k = 1}^{\num} 1_{k_t = k} Q_{t, k}} \leq 4\theta \cdot K_{\ell} \cdot 
\paren*{\sE^* T + O\paren*{ \paren*{\num + 1} \sqrt{T \log (\abs*{\sR} T/\delta)} } }. 
\end{align}
\ignore{
These bounds highlight the efficiency of the proposed approach, which
significantly reduces label usage compared to standard deferral
methods that query all $\num$ experts at every round, resulting in a
total of $\num T$ queries.
}
The linear dependence on $\num$ also appears in standard deferral methods. For example,  \citet[Theorem~3]{MaoMohriZhong2024deferral} establishes a generalization bound with linear dependence on the number of experts. 
By leveraging Freedman’s
inequality \citep{freedman1975tail} in place of Azuma’s \citep[Theorem~D.7]{MohriRostamizadehTalwalkar2018} in our analysis, we can in fact derive
learning and sample complexity bounds that depend only logarithmically
on $T$ in the realizable case (see Appendix~\ref{app:enhanced}). This significantly strengthens our
theoretical guarantees and further highlights the advantages of our
approach.

\begin{figure*}[t]
    % \vskip -.1in
    \centering
    \includegraphics[scale=.3]{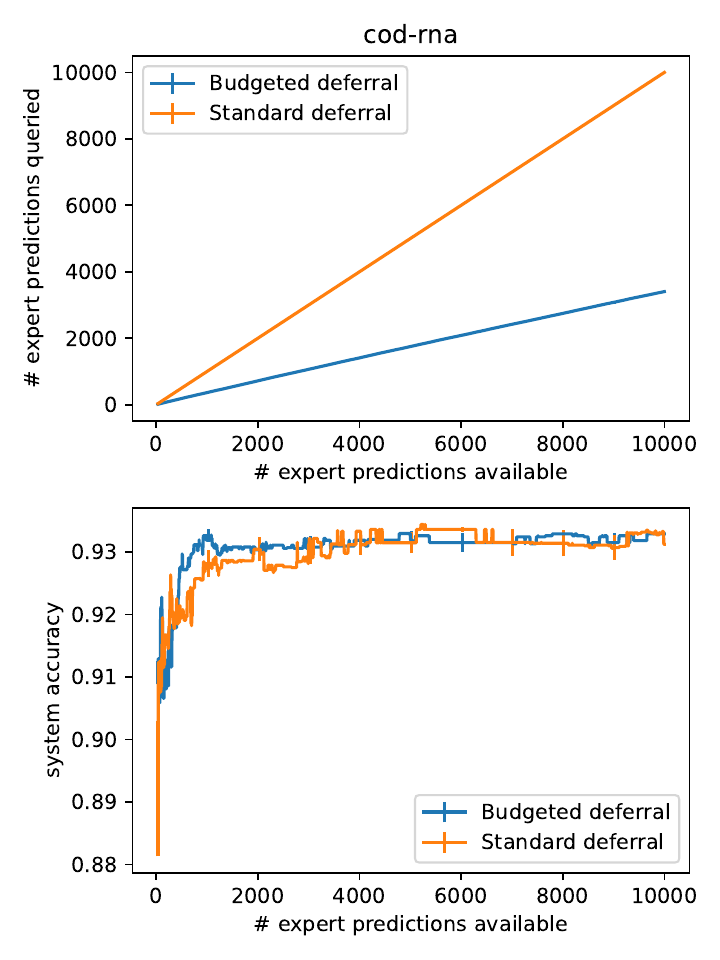}
    \includegraphics[scale=.3]{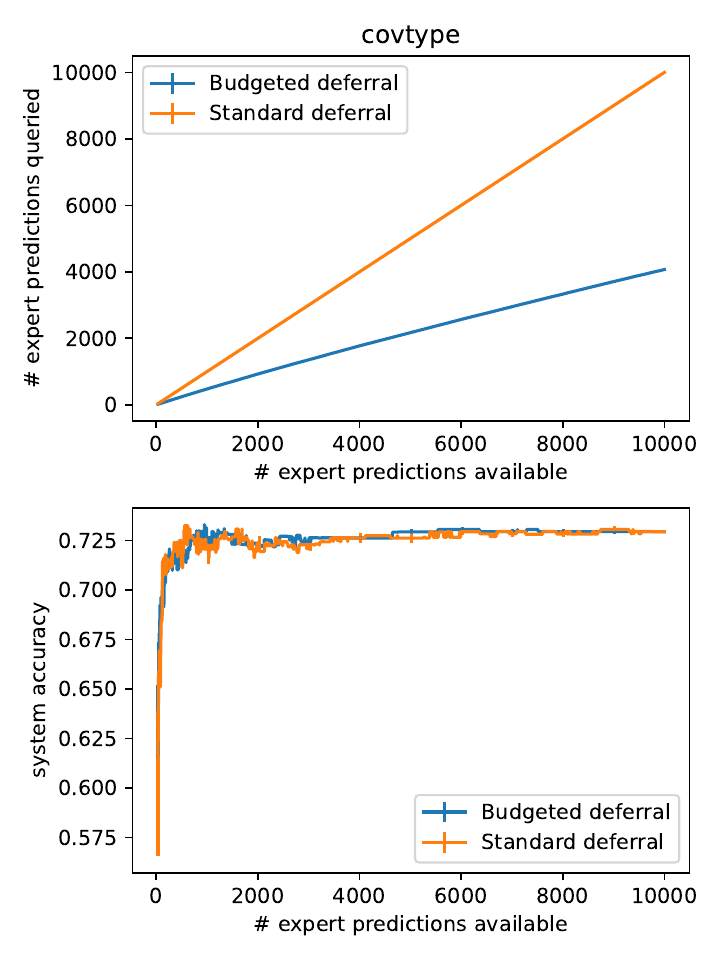}
    \includegraphics[scale=.3]{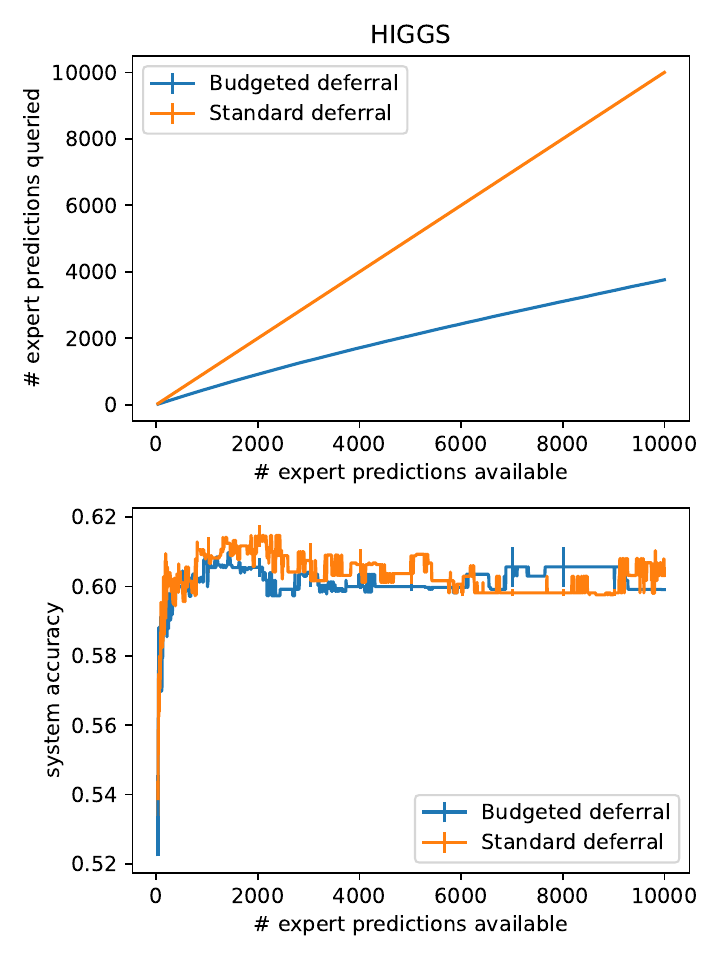}\\
    \includegraphics[scale=.3]{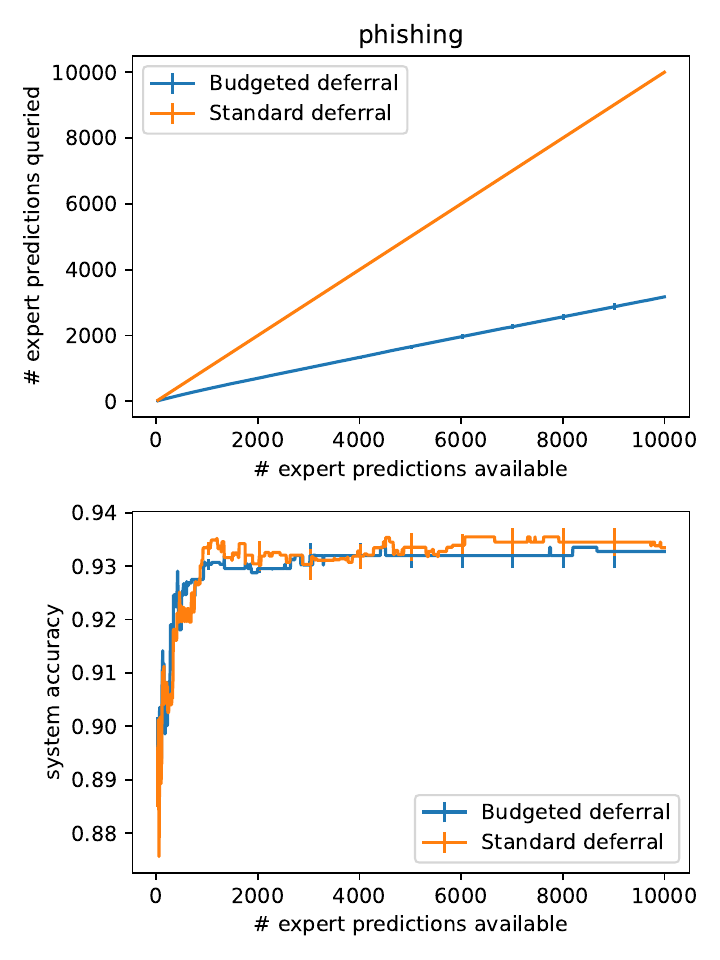}
    \includegraphics[scale=.3]{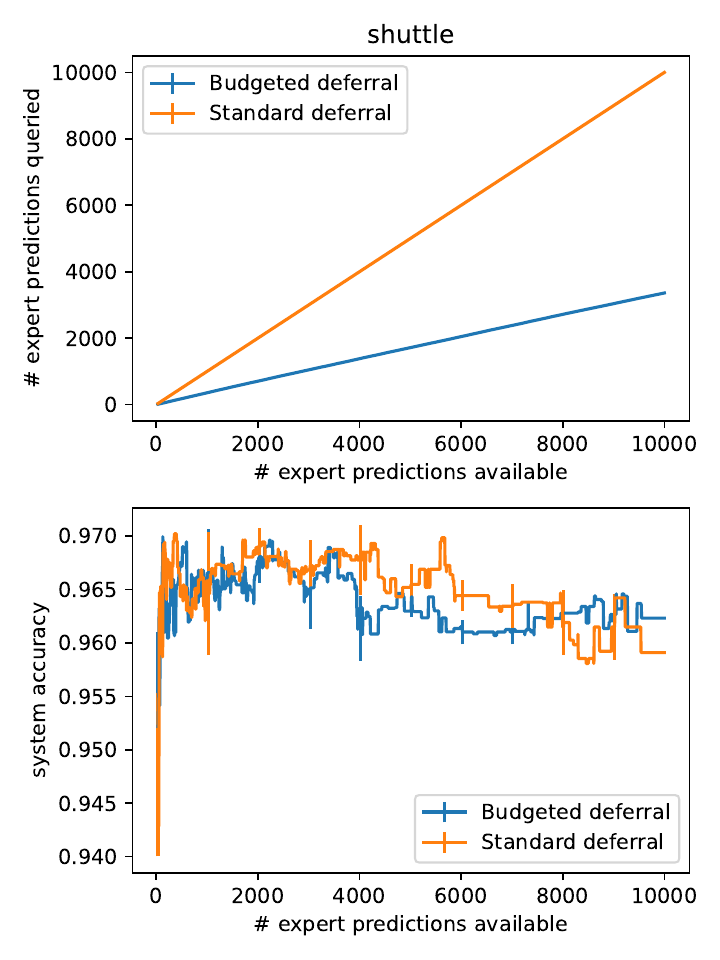}
    \includegraphics[scale=.3]{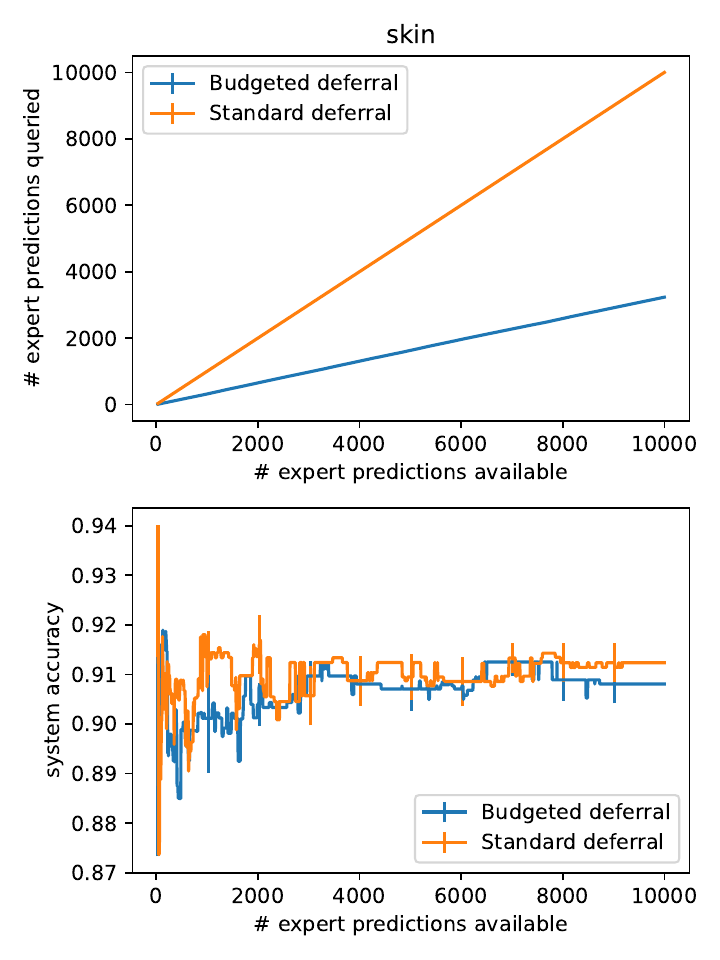}
    \caption{Standard vs. Budgeted Two-Stage Multiple-Expert Deferral on Binary Datasets.}
    % \vskip -.175in
    \label{fig:tdef}
\end{figure*}

\begin{figure*}[t]
    \centering
    \includegraphics[scale=.25]{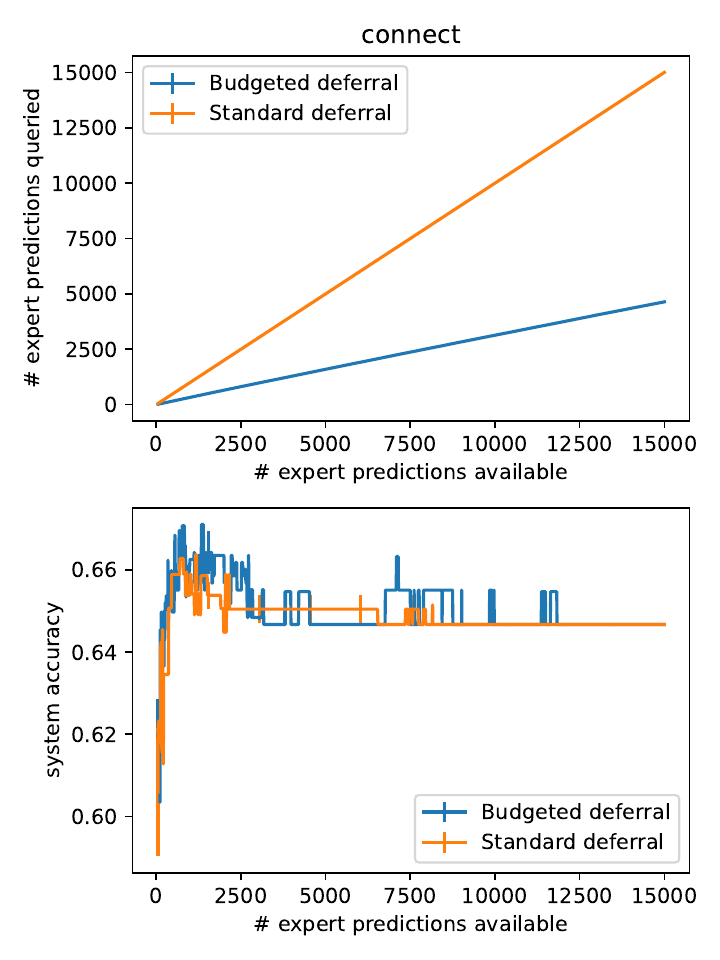}
    \includegraphics[scale=.25]
    {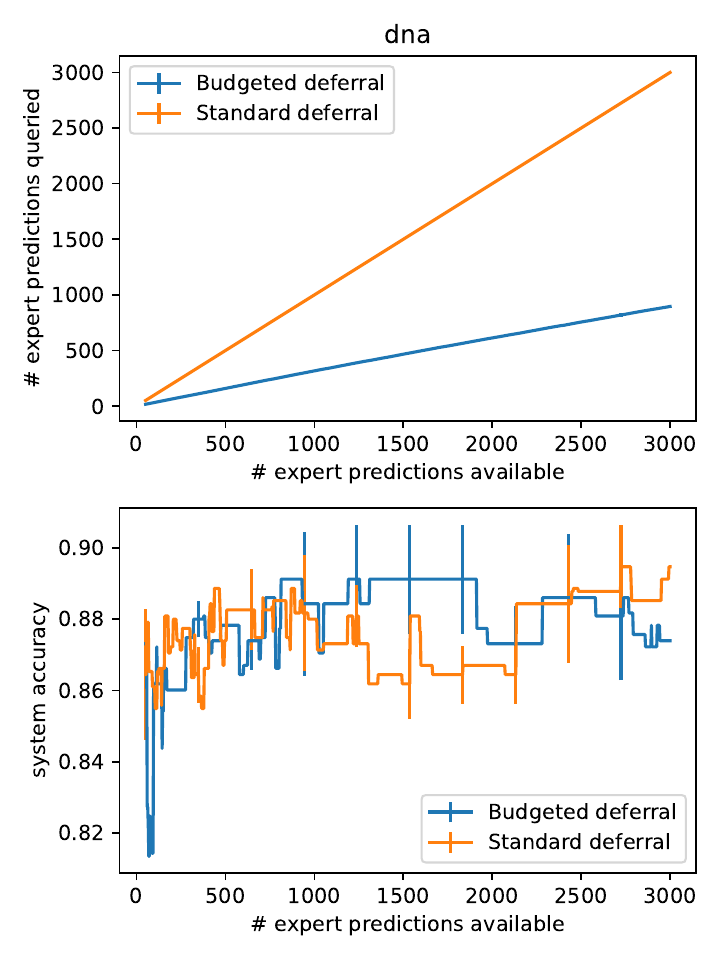}    
    \includegraphics[scale=.25]{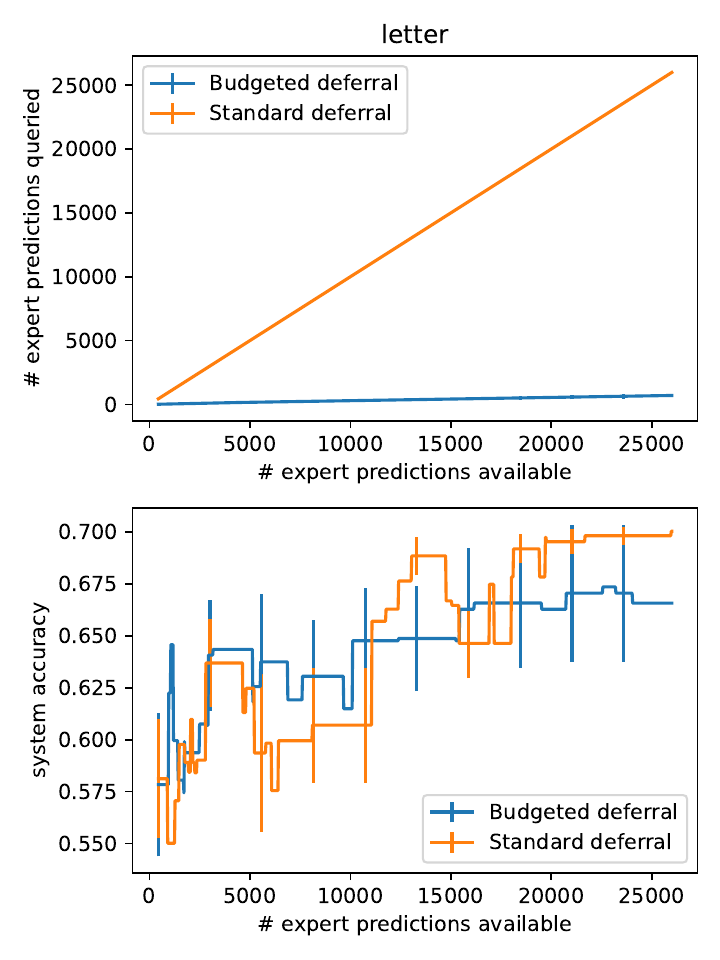}
    \includegraphics[scale=.25]{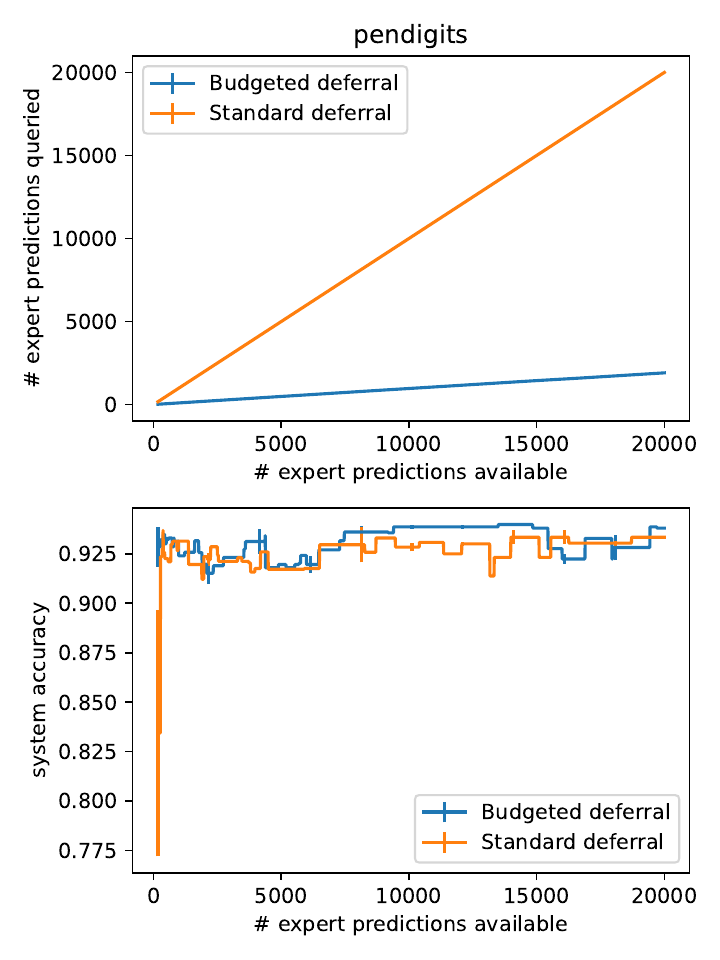}
    \caption{Standard vs. Budgeted Two-Stage Multiple-Expert Deferral on Multi-Class Datasets.}
    \vskip -.1in
    \label{fig:tdef-multi}
\end{figure*}

\section{Practical Implementation}
\label{sec:easy-alg}

In this section, we show that the budgeted two-stage deferral algorithm
can be efficiently implemented for convex \emph{comp-sum losses}
\citep{mao2023cross} and hypothesis classes $\sR$ defined as convex
sets of score functions $r \colon \sX \times [n] \to \Rset$.
The results apply not only to linear predictors but also to a broader
range of models, including neural networks (where convexity is lost but
the same optimization routines can be approximated via SGD).

We consider \emph{comp-sum losses} \citep{mao2023cross} as the
multi-class surrogate loss family $\ell$, which includes many popular
losses such as the multinomial logistic loss. A comp-sum loss is defined
for any $(r, x, k) \in \sR \times \sX \times [n]$ as
\begin{equation*}
\ell_{\mathrm{comp}}(r, x, k)
= \Psi \paren*{\frac{e^{r(x, k)}}{\sum_{k' \in [n]} e^{r(x, k')}}},
\end{equation*}
where $\Psi \colon [0, 1] \to \Rset_{+} \cup \curl{+\infty}$ is a
non-increasing function. A notable instance is $\Psi(u) = - \log u$,
which yields the \emph{logistic loss}
\citep{Verhulst1838,Verhulst1845,Berkson1944,Berkson1951}.
For suitable choices of $\Psi$, the loss $\ell_{\mathrm{comp}}$
is convex in $r$.

\paragraph{Convex feasible region.}
At each round $t$, Algorithm~\ref{alg:threshold} requires solving two
optimization problems over the restricted hypothesis set $\sR_t$, defined
as the intersection of convex constraints accumulated up to round $t$: $\sR_t = \bigcap_{t' < t} \curl[\Big]{r \in \sR \colon \frac{1}{t'} 
\sum_{i = 1}^{t'} \sum_{k = 1}^{\num} \frac{1_{k_i = k} Q_{i, k} }{q_{i, k} p_{i, k}} (1 - c_{i, k}(x_i, y_i)) \ell_{\rm{comp}}(r, x_i, k)
\leq \sE_{t'}^{*} + \Delta_{t'} }.    $
Since $\ell_{\mathrm{comp}}$ is convex in $r$, each constraint defines
a convex set, and thus $\sR_t$ is convex.

\paragraph{First optimization.}
The first optimization problem at each round $t$ computes the minimal empirical loss:
$
\sE_{t}^{*}
= \min_{r \in \sR_{t}}
\frac{1}{t} \sum_{i = 1}^{t} \sum_{k = 1}^{n}
\frac{1_{k_i = k} Q_{i, k}}{q_{i, k} p_{i, k}} (1 - c_{i, k}(x_i, y_i)) \ell_{\mathrm{comp}}(r, x_i, k).
$
This is a convex program in $r$ over the feasible region $\sR_t$.

\paragraph{Second optimization.}
The second optimization determines the sampling probability $p_{t, k}$
by maximizing the difference in surrogate losses for each expert $k$:
$
\max_{r, r' \in \sR_t}
\Bigl\{
\Psi \paren*{\frac{e^{r(x, k)}}{\sum_{k' \in [n]} e^{r(x, k')}}}
- \Psi \paren*{\frac{e^{r'(x, k)}}{\sum_{k' \in [n]} e^{r'(x, k')}}}
\Bigr\}.
$
Since $\Psi$ is non-increasing, this value is maximized when one term is
minimized and the other maximized. Define
$
S_{\min}(x, k) \equiv \min_{r \in \sR_t}
\frac{e^{r(x, k)}}{\sum_{k' \in [n]} e^{r(x, k')}}$ and $
S_{\max}(x, k)
\equiv \max_{r \in \sR_t}
\frac{e^{r(x, k)}}{\sum_{k' \in [n]} e^{r(x, k')}}.
$
Then the optimal loss variation equals
$\Psi\paren*{S_{\min}(x, k)} - \Psi\paren*{S_{\max}(x, k)}$.

\paragraph{Interpretation.}
This shows that for convex comp-sum losses and convex hypothesis sets
$\sR$, both optimization problems required by
Algorithm~\ref{alg:threshold} are convex and can be solved efficiently.
As an example, when $\sR$ is the linear class
$r(x, k) = \bw \cdot \Phi(x, k)$ with $\norm{\bw} \leq B$,
the optimizations reduce to convex problems in $\bw$.
For neural networks, the problems are no longer convex but can be
handled in practice with standard stochastic gradient descent (SGD).

\paragraph{Heuristics.}
As in IWAL \citep{beygelzimer2009importance}, practical heuristics can be used to
further simplify the implementation. For the first optimization, one can
minimize over $\sR$ instead of $\sR_t$, and for the second optimization,
it suffices to impose only the most recent constraint (from round $t-1$)
instead of all past constraints. With these simplifications, the optimal
solution remains within the feasible set, while computational efficiency
is improved.

\section{Experiments}
\label{sec:experiments}

This section evaluates the empirical performance of our proposed
algorithm against an existing baseline in two-stage, multiple-expert
learning-to-defer scenarios. We aim to confirm that our algorithm
achieves two key outcomes predicted by our theoretical analysis:
maintaining performance guarantees for deferral (i.e., minimizing
deferral loss) and controlling computational cost, as measured by the
number of queried expert predictions.

\paragraph{Experimental Setup}  For the budgeted two-stage
multiple-expert deferral framework, we tested our approach,
Algorithm~\ref{alg:IWAL}, using ten publicly available benchmark
datasets:  \texttt{shuttle}, \texttt{cod-rna}, \texttt{covtype},
\texttt{HIGGS}, \texttt{skin}, \texttt{phishing},  \texttt{dna},  \texttt{connect-4}, \texttt{letter}, and \texttt{pendigits}\footnote{https://www.csie.ntu.edu.tw/$\sim$cjlin/libsvmtools/datasets/}. We compared our method to the state-of-the-art two-stage multiple-expert deferral
algorithm \citep{MaoMohriMohriZhong2023two}, which serves as the
baseline. This algorithm imposes no budget restriction: it queries and
uses the predictions of all $n_e$ experts at every round, incurring a
total cost of $t \times n_e$ expert predictions by time $t$. In the figures, the orange baseline curves correspond to this
unconstrained state-of-the-art method, which incurs the full cost of
$t \times n_e$ queries, while the blue curves correspond to our
budgeted algorithm that makes at most one expert query per round.

 We chose $\ell$ as the multinomial logistic loss for both our
method and the baseline. For each dataset, the evaluation of the
approaches was conducted over $5$ trials, each with a random split of
the unlabeled pool and test set.  For each dataset, we generated a
finite hypothesis set $\sH$ of logistic regression models for deferral.  For each dataset, features were normalized
(zero mean, unit variance) and then scaled to ensure the maximum
feature vector had unit norm. 

Table~\ref{tb:datasets} reports the number of features, the counts of test data points, the size of the hypothesis set, and the regularization parameter $C$.  For all experiments, the finite hypothesis set consisted of logistic regression models from the scikit-learn library \citep{scikit-learn}. These models were trained using the {\tt``liblinear''} solver for binary dataset and the {\tt ``lbfgs''} solver for multi-class datasets as well as an $L_2$ regularization parameter $C$, whose specific values are detailed in Table~\ref{tb:datasets}. Each hypothesis was trained on a random sample of data, with the sample size uniformly selected from the interval [30, 500]. We considered expert setups with clearly defined expertise across classes. Our approach assigns one expert per class. More precisely, for datasets with $n$ classes, we define $n$ experts, $\expert_1, \ldots, \expert_n$, such that $\expert_k$ is always correct on instances of the $k$-th class class and predicts uniformly at random on instances from any other class.

The cost function is chosen as the $0$–$1$ loss: $c_k(x, y) =
\1_{\expertexpert_k(x) \neq y}$. Following the IWAL-related literature (e.g., \citet{cortes2019rbal,cortes2019disgraph,amin2020understanding}), we use publicly available datasets that are standard in active learning. We further adopt the zero-one cost from \citep{MaoMohriMohriZhong2023two} and expert setup from \citet{MaoMohriZhong2025}, where experts are accurate on specific classes. These choices enable controlled evaluation and ensure alignment with prior work.

\begin{table}[t]
\caption{Dataset Statistics: Number of Features (\# F), Classes (\# C), Test Data (\# T), Model Class Size $|\sH|$, and Regularization Parameter $C$.}
\label{tb:datasets}
\begin{center}
\begin{tabular}{l|llHlll}
DATASET & \# F & \# C & \# ``UNLABELED'' & \# T & $|\sH|$ & $1/C$ \\
\hline
{\tt Cod-rna} & 8 & 2&  25000 & 34535 & 4096 & $2^{-13}$ \\
{\tt Covtype} & 54 &  2& 400000 & 181012 & 4096 & $2^{-13}$ \\ 
{\tt Shuttle} & 9 &  2&20000 & 14500 & 4096 & $2^{-13}$ \\
{\tt Skin} & 3 &  2&150000 & 95057 & 2048 & $2^{-11}$ \\
{\tt Phishing} & 68 &  2& 9000 & 2055 & 2048 & $2^{-11}$ \\
{\tt Dna} & 180 &  3 & 9000 & 1186 & 2048 & $2^{-13}$ \\
{\tt Letter} & 16 & 26 & 9000 & 5000 & 2048 & $2^{-13}$ \\
{\tt Connect} & 126 &  3 & 9000 & 66457 & 2048 & $2^{-13}$ \\
{\tt Pendigits} & 16 &  10 & 9000 & 3498 & 2048 & $2^{-13}$ \\
{\tt HIGGS}
\footnote{As a preprocessing step, a uniform random subsample of the HIGGS data was taken, resulting in 100,000 data points.}
& 28 &  2&  90000 &  10000 & 2048 & $2^{-13}$
\end{tabular}
\end{center}
\vskip -0.3in
\end{table}

\paragraph{Results} Figure~\ref{fig:tdef} and Figure~\ref{fig:tdef-multi} demonstrate our method’s superior performance across ten datasets, achieving comparable system accuracy, defined as the average value of $\bracket*{1 - \tdef(h, x, y)}$ on the test data, to the standard deferral algorithm, while substantially reducing the number of expert queries.

The Expert Predictions Available (X-axis) represents the cumulative number of expert predictions that could have been made up to time step $t$, which equals $t \times n_e$. The Expert Predictions Queried (Y-axis of the top plots) is the cumulative number of times the algorithm actively queried an expert. Following Algorithm~\ref{alg:IWAL}, this corresponds to $\sum_{s=1}^t Q_{s,k_s}$, while in the standard deferral setting we set this equal to $t \times n_e$.

In Figure~\ref{fig:tdef} (binary datasets) and Figure~\ref{fig:tdef-multi} (multi-class datasets), our budgeted deferral method achieves the same accuracy while issuing far fewer expert queries than the standard method (which uses $n_e$ expert predictions at every time step). The top plots show the exact reduction in queries, i.e., $\sum_{s=1}^t Q_{s,k_s} < t \times n_e$, while the bottom plots confirm that accuracy is preserved in both methods.

Because the standard method requires predictions from all $n_e$ experts at every round, it accumulates a total of $t \times n_e$ expert predictions by time $t$. In contrast, our budgeted method makes at most one expert query per round. Consequently, the blue (budgeted) and orange (standard) curves diverge sharply in Figure~\ref{fig:tdef-multi}, with the budgeted method achieving comparable accuracy while reducing expert queries by a large margin. This highlights the strong efficiency gains of our approach, particularly in settings with many experts.

While the exact percentage reduction varies by dataset, for binary datasets (Figure~\ref{fig:tdef}) it generally lies between 35–40\%, already representing a substantial decrease. For multi-class datasets (Figure~\ref{fig:tdef-multi}), the reduction is even larger, consistently below 30\%. Moreover, as the number of experts increases, particularly in the \texttt{letter} dataset with 26 experts and the \texttt{pendigits} dataset with 10 experts, the performance improvements of our method become much more significant, underscoring its strong scalability to complex prediction tasks. Interestingly, in a previous version of this work, we stated that the multi-class results were less favorable than in the binary case; however, after correcting the analysis and properly accounting for the full cost in the multi-class setting of the non-budgeted systems, we find that the results are in fact more favorable.

Figure~\ref{fig:tdef} further shows that system accuracy varied minimally across trials for all binary datasets, as indicated by their negligible error bars. In contrast, multi-class datasets in Figure~\ref{fig:tdef-multi} displayed greater variation, particularly the \texttt{letter} dataset, which, with the largest number of classes among all tested datasets, exhibited wider error bars.

The fluctuations, or “jitters,” in the accuracy curves are not due to insufficient iterations but are inherent to the online, active learning process. They arise from stochastic components of the algorithm, including the random sampling of which expert to consider, the probabilistic decision of when to query, the dependency on individual data points, and the continual pruning of the hypothesis set.

Our empirical results suggest that some key components of the label complexity theoretical bound, the disagreement coefficient $\theta$, the slope asymmetry $K_\ell$, and the best-in-class loss $\sE^*$, exert only a limited effect, thereby substantially reducing the impact of the higher-order term $T$.

\ignore{
Our empirical observations, when juxtaposed with the theory, suggest that the constituent factors of the label complexity bound—namely the disagreement coefficient $\theta$, slope asymmetry $K_\ell$, and best-in-class loss $\sE^*$—are all of small magnitude, substantially attenuating the effect of the higher-order term $T$ within the bound.
}

\section{Conclusion}

Our results suggest that it is possible to train high-performing
deferral algorithms while substantially reducing the cost of expert
queries during training, thereby reconciling the practical constraints
of real-world systems with the theoretical promise of learning to
defer. This makes deferral strategies far more feasible in
resource-constrained environments, especially in applications
involving expensive experts such as large language models or human
annotators.  Our framework also opens the door to several possible
extensions, including adaptive querying strategies that exploit
structure across training examples, settings with dynamically
available or context-dependent experts, as well as integration with
reinforcement learning for sequential or interactive decision-making.

\bibliography{bdef}

\newpage
\appendix

\renewcommand{\contentsname}{Contents of Appendix}
\tableofcontents
\addtocontents{toc}{\protect\setcounter{tocdepth}{3}} 
\clearpage

\section{Related Work}
\label{app:related-work}

The paradigm of single-stage learning to defer, where a predictor and a deferral function are jointly trained, was established by \citet*{CortesDeSalvoMohri2016,CortesDeSalvoMohri2016bis, CortesDeSalvoMohri2023} and has since spurred considerable research. This includes foundational studies on abstention with constant costs \citep{MaoMohriZhong2024predictor,MaoMohriZhong2024score,MohriAndorChoiCollinsMaoZhong2024learning,caogeneralizing,charoenphakdee2021classification,cheng2023regression,li2023no,narasimhanlearning} and, more aligned with complex real-world scenarios, deferral involving instance- and label-dependent costs \citep{MaoMohriZhong2024deferral,cao2023defense,maorealizable,mozannar2020consistent,pmlr-v206-mozannar23a,verma2022calibrated,verma2023learning,weiexploiting}. In this established setup, the deferral function's primary role is to optimally assign input instances to the most suitable expert. While this L2D paradigm offers distinct advantages over traditional \emph{confidence-based} rejection methods \citep{Chow1957,NiCHS19,WegkampYuan2011,bartlett2008classification,chow1970optimum,jitkrittum2023does,ramaswamy2018consistent,yuan2010classification} and \emph{selective classification} techniques that typically use fixed selection rates and cannot adapt to expert-modeled cost functions \citep{acar2020budget,el2010foundations,el2012active,gangrade2021selective,geifman2017selective,geifman2019selectivenet,jiang2020risk,shah2022selective,wiener2011agnostic,wiener2012pointwise,wiener2015agnostic,zaoui2020regression}, these studies largely operate under the assumption that expert costs, even if variable, are specified or can be readily queried to train the deferral model. The distinct challenge of actively managing a budget for querying these expert costs during the learning phase of the deferral mechanism itself generally remains outside their primary scope, a research gap our work specifically targets.

The \emph{learning to defer} (L2D) problem, which integrates human expert decisions into the cost function, was initially framed by \citet{madras2018learning} and has been extensively studied \citep{MaoMohriMohriZhong2023two,MaoMohriZhong2024deferral,charusaie2022sample,maorealizable,mozannar2020consistent,pmlr-v206-mozannar23a,pradier2021preferential,raghu2019algorithmic,verma2022calibrated,wilder2021learning}. A core tenet is the formulation of a deferral loss function incorporating instance-specific costs for each expert. However, direct optimization of this loss is often intractable for commonly used hypothesis sets. Consequently, L2D algorithms typically resort to optimizing surrogate loss functions. This naturally raises the crucial question of the theoretical guarantees associated with such surrogate-based optimization.

This question, centered on the consistency guarantees of surrogate losses with respect to the target deferral loss, has been analyzed under two primary scenarios \citep{mao2025theory}: the \emph{single-stage} scenario, where predictor and deferral functions are learned jointly \citep{MaoMohriZhong2024deferral,charusaie2022sample,mozannar2020consistent,pmlr-v206-mozannar23a,verma2022calibrated}, and a \emph{two-stage} scenario, where a pre-trained predictor (acting as an expert) is fixed, and only the deferral function is subsequently learned \citep{MaoMohriMohriZhong2023two}. While these consistency analyses are vital for understanding the reliability of L2D approaches, they generally presuppose that the necessary label and cost information for training and evaluating consistency is either given or can be acquired without explicit budgetary constraints that fundamentally shape the learning algorithm's design. Our work diverges by concentrating on scenarios where the act of querying expert costs is itself a budgeted component integral to the training loop, necessitating a framework that explicitly manages this cost-acquisition process.

In particular, for the single-stage \emph{single-expert} case, \citet{mozannar2020consistent}, \citet{verma2022calibrated}, and \citet{charusaie2022sample} proposed surrogate losses by generalizing standard classification losses. A key development by \citet{pmlr-v206-mozannar23a} later revealed that these surrogates did not satisfy \emph{realizable $\sH$-consistency}, leading them to suggest an alternative. This was further refined by \citet{maorealizable}, who introduced a broader family of surrogates achieving Bayes-consistency, realizable $\sH$-consistency, and \emph{$\sH$-consistency bounds} \citep{awasthi2022Hconsistency,awasthi2022multi,mao2023cross}  (see also
\citep{awasthi2021calibration,awasthi2021finer,AwasthiMaoMohriZhong2023theoretically,awasthi2024dc,MaoMohriZhong2023characterization,MaoMohriZhong2023structured,MaoMohriZhong2023ranking,MaoMohriZhong2023rankingabs,mao2024h,mao2024multi,mao2024universal,cortes2024cardinality,mao2025enhanced,MaoMohriZhong2025principled,zhong2025fundamental,CortesMaoMohriZhong2025balancing,cortes2025balanced,MaoMohriZhong2026model}). For the single-stage \emph{multiple-expert} setting \citep{benz2022counterfactual,hemmer2022forming,kerrigan2021combining,keswani2021towards,straitouri2022provably}, \citet{verma2023learning} and \citet{MaoMohriZhong2024deferral} extended earlier surrogate losses. However, these extensions often inherit limitations regarding realizable $\sH$-consistency \citep{MaoMohriZhong2025}. In the \emph{two-stage} scenario, \citet{MaoMohriMohriZhong2023two} introduced surrogate losses with strong consistency properties for constant costs, though their realizable $\sH$-consistency does not straightforwardly extend to practical classification error-based costs. These important theoretical investigations primarily concern the fidelity of surrogate losses, assuming the data (including expert costs) for loss computation is accessible for training. They do not typically address the strategic, budgeted acquisition of this costly expert information, which is a central focus of our research. Other extensions cover regression \citep{mao2024regression}, deferral to populations \citep{tailor2024learning}, multi-task learning \citep{montreuil2025two}, and aspects of adversarial robustness \citep{montreuil2025adversarial}, top-$k$ learning \citep{montreuil2025ask,montreuil2025one} or specialized query mechanisms \citep{montreuil2024learning}, which, while related to querying, often adopt different problem formulations from our specific emphasis on minimizing expert query costs during the primary training of the deferral model under a budget.

The landscape of L2D extends beyond initial model training, with considerable work focusing on post-hoc refinements and deeper theoretical explorations. For instance, some studies propose alternative optimization schemes for the predictor and rejector components \citep{okati2021differentiable}, or offer methods to ameliorate issues like underfitting in surrogate losses \citep{liu2024mitigating,narasimhanpost}. Frameworks for unified post-processing in multi-objective L2D have also been developed, drawing on principles like the generalized Neyman-Pearson Lemma \citep{Mohammad2024,neyman1933ix}. Concurrently, theoretical advancements continue to shed light on L2D mechanisms; examples include the introduction of an asymmetric softmax function to derive more robust probability estimates \citep{cao2023defense}, and investigations into dependent Bayes optimality to better understand the interdependencies in deferral decisions \citep{weiexploiting}. The practical impact of these collective advancements is evident in the successful application of L2D and its variants across a range of domains, including regression tasks, human-in-the-loop systems, and reinforcement learning scenarios \citep{chen2024learning,de2020regression,de2021classification,gao2021human,hemmer2023learning,joshi2021pre,mozannar2022teaching,palomba2024causal,straitouri2021reinforcement,zhao2021directing}. These contributions, while advancing the field, typically engage with models or data where expert interactions are already defined or have occurred, rather than focusing on the strategic acquisition of costly expert knowledge during the primary training phase.

Separately, another avenue of studies address deferral decisions under operational constraints or workload considerations, often with a focus on the inference phase \citep{Mohammad2024,alvescost,zhang2024coverage}. While this line of research is crucial for using L2D systems effectively, its foundational assumptions differ markedly from our work.  Such studies generally proceed with the assumption that expert query costs are predetermined and fully known during the training stage. In direct contrast, our research is grounded in a budgeted deferral setting. The central objective here is to train deferral algorithms that not only learn the deferral policy but do so while actively minimizing the expert querying expenses that are incurred and accounted for as an integral component of the learning process itself, without prior knowledge of these costs.

The most relevant prior work to our study is by
\citet{reid2024online}, who model deferral to a single expert as a
two-armed contextual bandit problem with budget constraints. This
formulation is interesting and allows for the direct application of
existing bandit algorithms with knapsack constraints
\citep{agrawal2016linear, filippi2010parametric,
  li2017provably}. However, extending these methods to multiple
experts is non-trivial. Moreover, such general bandit algorithms
\citep{lattimore2020bandit} are not tailored to the deferral loss or
its consistent surrogate losses, which are central to
learning-to-defer approaches. Importantly, \citet{reid2024online}
assume a generalized linear model for the expert and model performance
\citep{li2017provably}, an assumption that is often too restrictive
for practical deferral scenarios.

\section{Relevance of Bandit Algorithms}
\label{app:bandits}

While multi-armed bandit (MAB) algorithms are powerful tools, our
problem does not naturally fit into this framework. This section
clarifies why both standard and contextual bandit formulations are
misaligned with our setting.

\paragraph{Reward definition mismatch.} 
Our objective depends jointly on the computational cost of querying an
expert and the benefit of using that information to improve the
generalization of the deferral algorithm. It is unclear how such a
two-fold objective could be meaningfully expressed as a single reward
for the “action” (choice of expert $k$) in a bandit framework.

\paragraph{Ambiguous benchmark.} 
The standard contextual bandit benchmark, the best policy in
hindsight is ill-defined here, since our guarantees concern both
generalization and label complexity. A single “best policy” does not
capture both objectives simultaneously.

\paragraph{Mismatch of guarantees.} 
Even if one could design a meaningful bandit formulation, the resulting
algorithms would not directly yield the dual guarantees we seek:
(i) strong generalization for the deferral predictor, and
(ii) favorable label complexity. These guarantees are central to our
analysis and theoretical contributions.

\paragraph{Adversarial bandits.} 
In adversarial settings (e.g., EXP3), treating each expert as an arm
leads to regret benchmarks such as external or shifting regret, which
are either uninformative or inadequate for deferral. More refined
notions such as policy regret or contextual regret also face challenges,
since the reward definition remains ambiguous, and the quality of the
benchmark depends heavily on the choice of policy class. Moreover, the
trade-off between the short-term cost of querying and the long-term
benefit of acquired labels is not captured in these formulations.

\paragraph{Prior work.} 
\citet{reid2024online} formulate deferral to a single expert as a
two-armed contextual bandit problem with budget constraints, enabling
the use of general bandit algorithms with knapsack constraints
\citep{agrawal2016linear, filippi2010parametric, li2017provably}.
However, extending such methods to multiple experts is non-trivial and
remains unexplored in that framework. More importantly, these
general-purpose algorithms are not designed to minimize deferral loss or
to leverage consistent surrogate losses, both of which are central to
our framework. Furthermore, the generalized linear model assumption made
by \citet{reid2024online} can be overly restrictive in practical
scenarios involving black-box experts (e.g., large language models).

\paragraph{Summary.} 
For these reasons, it is difficult to cast our problem as a standard
bandit problem while still achieving the type of dual guarantees, generalization and label complexity bounds, that our analysis provides.

%\newpage
\subsection{Comparison with Contextual Bandit Frameworks}
\label{sec:bandits-comparison}

While the problem of budgeted deferral involves sequential,
context-dependent decisions, framing it within a standard contextual
bandit setting is problematic. The core reason is a fundamental
mismatch between the objectives and theoretical guarantees of each
framework. Our approach is designed to solve a pool-based active
learning problem, whereas contextual bandits are designed for online
learning and performance optimization. We detail the key distinctions
below.

\subsubsection{Mismatch 1: Primary Objective and Performance Benchmark}
The primary objective of our framework is to train a final, static
routing function $r_T$ that exhibits low \emph{generalization
  error}, $\sE(r_T)$, on unseen data drawn from the true
distribution $\sD$. The querying strategy is a mechanism to
gather the most \textit{informative} labels to train this high-quality
global model efficiently. Our performance is benchmarked against the
best possible router in the hypothesis class,
$r^* = \argmin_{r \in \sR} \sE(r)$, and our guarantees
bound the excess error $\sE(r_T) - \sE(r^*)$.

In contrast, the objective of a contextual bandit algorithm is to
maximize the \emph{cumulative immediate reward} over a sequence of $T$
rounds. Its performance is measured by \emph{regret}, which compares
its total reward to that of the best policy $\pi^*$ from a fixed
policy class $\Pi$ in hindsight:
$$
\Reg_T = \sum_{t = 1}^T R(x_t, \pi^*(x_t)) - \sum_{t = 1}^T R(x_t, a_t),
$$
where $a_t$ is the action (expert) chosen at round $t$. A low-regret
algorithm is effective at online decision-making but provides no
direct guarantee on the generalization performance of a model trained
from its experiences. An algorithm could achieve low regret by
exploiting the single best expert for a given context, thereby failing
to gather the diverse data needed to learn the more nuanced decision
boundaries of the globally optimal router $r^*$.

\subsubsection{Mismatch 2: Label Complexity and the Nature of Queries}
Our framework explicitly addresses the dual goal of achieving low
generalization error while minimizing the cost of supervision. This is
reflected in our \emph{label complexity} bounds.

\begin{itemize}
\item \emph{Active Querying Framework (Ours):} The decision to query
  an expert is an integral part of the algorithm. The total number of
  queries, $\sum_{t = 1}^T Q_{t, k_t}$, is a random variable that our
  analysis proves to be small, for instance, sublinear in $T$
  ($O(\log T)$ in the realizable case). The algorithm learns
  effectively even from examples where it makes no query, by using
  them to prune the version space.

\item \emph{Contextual Bandit Framework:} In a standard bandit
  setting, the agent \textit{must} pull an arm (i.e., query an expert)
  at every timestep $t = 1, \dots, T$. The total number of queries is
  therefore always $T$. There is no built-in mechanism for cost-free
  learning. While one could introduce a ``do not query'' arm, defining
  its reward is non-trivial, as it involves an unknown
  \textit{opportunity cost} related to the information lost for
  training the final model, a concept that standard regret analysis
  does not capture.
\end{itemize}

Our method is fundamentally a cost-sensitive active learning algorithm
designed to build a minimally small, high-quality training set, which
is a different goal from the online performance optimization of
bandits.

\subsubsection{Mismatch 3: The Reward Signal}
A crucial difficulty in applying a bandit model is defining an
immediate reward signal $R_t$ that aligns with the long-term goal of
learning a good router. If we choose an expert $k$ for an example
$(x_t, y_t)$, a natural reward might be its accuracy,
$R_t = 1 - c_k(x_t, y_t)$. However, this myopic reward would merely
encourage the bandit to identify the most accurate expert for each
context.

The true ``value'' of a query in our setting is the \emph{information
  gain} it provides for reducing the size of the version space
$\sR_t$. A query is valuable if it reveals high disagreement
among the surviving hypotheses, even if the queried expert is not the
most accurate one for that instance. This notion of value is internal
to the state of the learning algorithm and cannot be expressed as a
simple, external reward signal, making the problem fall outside the
standard contextual bandit model.

In summary, our framework is tailored to the specific problem of
training a deferral model under a budget, providing dual guarantees on
generalization and label complexity that a contextual bandit
formulation cannot naturally replicate.

\section{Boundedness of Slope Asymmetry}
\label{app:slope-asymmetry}

This section shows that the slope asymmetry constant $K_\ell$
(Definition~\ref{def:slope-asymmetry}) is finite for the multinomial
logistic (softmax) loss on compact logit domains, in the same spirit as
the IWAL analysis \citep{beygelzimer2009importance} (see their Lemma~5 and
Corollary~6). Two standard technicalities are made explicit here:
(i) softmax invariance to constant shifts (handled by an identifiability
constraint), and (ii) the existence of a zero-cost expert at each
$(x,y)$ (which prevents the denominator from vanishing).

\paragraph{Setup and identifiability.}
Let
\[
\ell_{\log}(r, x, k)
= - \log \paren*{\frac{e^{r(x,k)}}{\sum_{j=1}^{\num} e^{r(x,j)}}}
= - r(x,k) + \log \sum_{j=1}^{\num} e^{r(x,j)}
\]
be the multinomial logistic loss. Throughout, we assume the logit vector
$r(x,\cdot)$ lies in a compact set and satisfies a standard
\emph{identifiability constraint}, e.g.,
\begin{equation}
\label{eq:identifiability}
r(x, \cdot) \in \cB_B \coloneqq \Bigl\{v \in \Rset^{\num}
\colon \norm{v}_\infty \leq B,  \sum_{j=1}^{\num} v_j = 0 \Bigr\}.
\end{equation}
(Equivalently, one may fix a reference class score to zero; any fixed
gauge that removes the softmax invariance to constant shifts is
acceptable.)

\begin{lemma}[Lipschitz upper bound]
\label{lem:log-lip-upper}
Fix $x$ and $k$. For any $r(x,\cdot), r'(x,\cdot) \in \cB_B$,
\begin{equation*}
\abs*{\ell_{\log}(r, x, k) - \ell_{\log}(r', x, k)}
\leq 2  \norm{r(x,\cdot) - r'(x,\cdot)}_\infty.
\end{equation*}
Consequently,
\begin{equation*}
\sum_{k=1}^{\num}
\abs*{\ell_{\log}(r, x, k) - \ell_{\log}(r', x, k)}
\leq 2  \num  \norm{r(x,\cdot) - r'(x,\cdot)}_\infty.
\end{equation*}
\end{lemma}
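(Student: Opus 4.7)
The plan is to prove both inequalities by decomposing the logistic loss difference into two manageable pieces: a raw logit difference and a difference of log-partition terms, each of which is separately $1$-Lipschitz in $\norm{\cdot}_{\infty}$. Summing the bounds then yields the factor of $2$, and summing over $k$ yields the second inequality.

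\textbf{Step 1 (decomposition).}
First I would write, directly from the definition,
\[
\ell_{\log}(r,x,k) - \ell_{\log}(r',x,k)
= -\bigl(r(x,k) - r'(x,k)\bigr)
+ \log \frac{\sum_{j=1}^{\num} e^{r(x,j)}}{\sum_{j=1}^{\num} e^{r'(x,j)}},
\]
and bound $\abs{r(x,k) - r'(x,k)} \leq \norm{r(x,\cdot)-r'(x,\cdot)}_{\infty}$ trivially. Note that the identifiability gauge in \eqref{eq:identifiability} plays no role here: the Lipschitz bound in $\norm{\cdot}_{\infty}$ is invariant under constant shifts, consistent with softmax invariance.

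\textbf{Step 2 ($\log\!\sum\!\exp$ is $1$-Lipschitz in $\norm{\cdot}_\infty$).}
Let $\Delta_j = r(x,j) - r'(x,j)$ and $p_j = e^{r'(x,j)}/\sum_l e^{r'(x,l)}$. Multiplying and dividing by $\sum_j e^{r'(x,j)}$ inside the log gives
\[
\log \frac{\sum_j e^{r(x,j)}}{\sum_j e^{r'(x,j)}}
= \log \sum_{j=1}^{\num} p_j  e^{\Delta_j}.
\]
Since $(p_j)$ lies in the probability simplex, the right-hand side is a log of a convex combination of $e^{\Delta_j}$, hence bounded above by $\max_j \Delta_j \leq \norm{r(x,\cdot)-r'(x,\cdot)}_\infty$ and below (by the symmetric argument with roles of $r, r'$ swapped) by $-\norm{r(x,\cdot)-r'(x,\cdot)}_\infty$. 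Equivalently, one could invoke the standard fact that $\nabla \log\!\sum_j e^{v_j}$ is the softmax vector, which has unit $\ell_1$ norm, and conclude via duality that $\log\!\sum\!\exp$ is $1$-Lipschitz in $\norm{\cdot}_\infty$.

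\textbf{Step 3 (combine and sum over $k$).}
Combining Steps 1 and 2 by the triangle inequality yields
\[
\abs*{\ell_{\log}(r,x,k) - \ell_{\log}(r',x,k)}
\leq 2  \norm{r(x,\cdot)-r'(x,\cdot)}_\infty,
\]
which is the first claim. Summing this pointwise bound over $k \in [\num]$ (the right-hand side being independent of $k$) immediately gives the second claim with the extra factor $\num$.

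\textbf{Main obstacle.}
The only nontrivial step is the Lipschitzness of $\log\!\sum\!\exp$ in Step 2; the rest is bookkeeping. I would carry it out via the explicit convex-combination argument above, which avoids any appeal to subgradient calculus and makes the compactness assumption on the logits unnecessary for this particular upper bound (compactness enters only when later deriving the matching \emph{lower} bound needed to conclude finiteness of $K_\ell$).
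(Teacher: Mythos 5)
Your proof is correct, but it takes a genuinely different route from the paper's. The paper computes the gradient of $\ell_{\log}$ with respect to the logit vector, observes that its components are $-1+s_k$ and $s_j$ ($j\neq k$) for a softmax vector $s$, so that the gradient has $\ell_1$-norm $2(1-s_k)\leq 2$, and then invokes the mean value inequality (with the $\ell_1$/$\ell_\infty$ duality) to get the factor $2$. You instead split the loss difference into the raw logit term, trivially bounded by $\norm{r(x,\cdot)-r'(x,\cdot)}_\infty$, plus a log-partition difference, which you show is $1$-Lipschitz in $\norm{\cdot}_\infty$ by writing it as $\log\sum_j p_j e^{\Delta_j}$ and sandwiching this between $\min_j\Delta_j$ and $\max_j\Delta_j$; the triangle inequality then gives $1+1=2$. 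Your argument is more elementary (no differentiation or line-segment argument needed) and you are right that neither compactness nor the identifiability gauge is used for this upper bound. What the paper's gradient computation buys is reusability: the same Jacobian structure $I-\bone s^\top$ is exploited again, via the mean value theorem, in Lemma~\ref{lem:per-class-lower} to obtain the matching \emph{lower} bound on per-class differences, so the calculus is not wasted. Both derivations yield the same constant, and the summation over $k$ is identical in the two proofs.
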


\begin{proof}
The gradient with respect to the logit vector has components
\[
\frac{\partial \ell_{\log}}{\partial r(x,k)} = -1 + s_k,
\qquad
\frac{\partial \ell_{\log}}{\partial r(x,j)} = s_j  (j \neq k),
\]
where $s = \mathrm{softmax} \paren*{r(x,\cdot)} \in \Delta^{\num-1}$.
Thus $\norm{\nabla_{r(x,\cdot)} \ell_{\log}}_1
= \abs*{-1 + s_k} + \sum_{j \neq k} s_j = 2 (1 - s_k) \leq 2$.
The mean value inequality yields the first claim; summing over $k$ gives
the second.
\end{proof}

\begin{lemma}[Coercivity on the zero-mean subspace]
\label{lem:coercive}
Let $s \in \Delta^{\num-1}$ and define the linear map
$M_s \coloneqq I - \bone s^\top$. For any $v \in \Rset^\num$ with
$\sum_{j} v_j = 0$,
\begin{equation*}
\norm{M_s v}_\infty \geq \tfrac{1}{2}  \norm{v}_\infty.
\end{equation*}
\end{lemma}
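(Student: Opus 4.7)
The plan is to exploit the fact that $M_s v$ merely shifts $v$ by a scalar that is itself bracketed by the extrema of $v$. First I would rewrite $M_s v = v - \alpha \bone$ with $\alpha \coloneqq s^\top v$. Because $s \in \Delta^{\num-1}$, the scalar $\alpha$ is a convex combination of the coordinates of $v$ and therefore sandwiched between $v_{\min} \coloneqq \min_j v_j$ and $v_{\max} \coloneqq \max_j v_j$. The hypothesis $\sum_j v_j = 0$ then forces $v_{\min} \leq 0 \leq v_{\max}$, so $\norm{v}_\infty = \max(v_{\max}, -v_{\min})$.

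Next I would lower-bound $\norm{M_s v}_\infty$ by looking only at the two coordinates where $v$ attains $v_{\max}$ and $v_{\min}$. Since $\alpha$ lies between them, the two absolute values collapse to $v_{\max} - \alpha$ and $\alpha - v_{\min}$, which are both nonnegative and sum to exactly $v_{\max} - v_{\min}$. Hence the larger of the two, and a fortiori $\norm{M_s v}_\infty$, is at least $(v_{\max} - v_{\min})/2$.

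A one-line arithmetic observation closes the argument: since $v_{\max} \geq 0$ and $-v_{\min} \geq 0$,
\[
v_{\max} - v_{\min} = v_{\max} + (-v_{\min}) \geq \max(v_{\max}, -v_{\min}) = \norm{v}_\infty,
\]
so $\norm{M_s v}_\infty \geq \norm{v}_\infty/2$, as required. There is no real obstacle here; the only conceptual content is recognizing that $s^\top v \in [v_{\min}, v_{\max}]$ (which removes the absolute values for free) and that the zero-mean constraint converts the interval length $v_{\max} - v_{\min}$ into an upper bound on $\norm{v}_\infty$. A quick sanity check with $v = (1, -1/(\num - 1), \ldots, -1/(\num - 1))$ and $\alpha$ close to $1/2$ shows that the constant $1/2$ is essentially tight, so no improvement is to be expected from a more delicate argument.
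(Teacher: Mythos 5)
Your proof is correct and follows essentially the same route as the paper's: bound $s^\top v$ between $\min_j v_j$ and $\max_j v_j$, use the zero-mean condition to get $\max_j v_j - \min_j v_j \geq \norm{v}_\infty$, and extract a coordinate of $M_s v$ of magnitude at least half the range. In fact your version is slightly more careful, since the paper's phrase ``for any $i$'' is a slip (the bound only holds at the extremal coordinates, which is all that is needed), whereas you correctly restrict attention to the argmax and argmin of $v$.
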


\begin{proof}
Write $(M_s v)_i = v_i - s^\top v$. Since $s^\top v$ is a convex
combination of the coordinates of $v$, it lies in
$\bracket*{\min_j v_j,  \max_j v_j}$. Because $\sum_j v_j = 0$,
we have $\min_j v_j \leq 0 \leq \max_j v_j$, hence
$\max_j v_j - \min_j v_j \geq \norm{v}_\infty$. For any $i$,
\(
\abs{(M_s v)_i}
= \abs{v_i - s^\top v}
\geq \tfrac{1}{2}  \paren*{\max_j v_j - \min_j v_j}
\geq \tfrac{1}{2}  \norm{v}_\infty,
\)
so taking the maximum over $i$ gives the claim.
\end{proof}

\begin{lemma}[Pointwise lower bound on per-class differences]
\label{lem:per-class-lower}
Fix $x$ and consider any $r,r' \in \cB_B$. Then there exists a softmax
vector $s$ on the line segment between $r(x,\cdot)$ and $r'(x,\cdot)$
such that
\[
\ell_{\log}(r', x, \cdot) - \ell_{\log}(r, x, \cdot)
= - \paren*{I - \bone s^\top}  \paren*{r'(x,\cdot) - r(x,\cdot)}.
\]
In particular,
\begin{equation*}
\max_{k} \abs*{\ell_{\log}(r', x, k) - \ell_{\log}(r, x, k)}
\geq \tfrac{1}{2}  \norm{r'(x,\cdot) - r(x,\cdot)}_\infty.
\end{equation*}
\end{lemma}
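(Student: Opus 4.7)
The plan is to reduce the vector-valued identity to a scalar mean-value application and then invoke Lemma~\ref{lem:coercive}. Write $v = r(x,\cdot)$, $v' = r'(x,\cdot)$, and observe that
\[
\ell_{\log}(r, x, k) = -v_k + \varphi(v),
\qquad
\varphi(v) \coloneqq \log \sum_{j=1}^{\num} e^{v_j},
\]
so the linear part $-v_k$ contributes the exact term $-(v'_k - v_k)$ to $\ell_{\log}(r', x, k) - \ell_{\log}(r, x, k)$, while the only nonlinearity is the scalar function $\varphi$, common to every coordinate $k$.

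The first step is to apply the standard (scalar) mean value theorem to $\varphi$ along the segment $v_t = v + t(v' - v)$, $t \in [0,1]$. Since $\nabla \varphi(u) = \mathrm{softmax}(u)$, there exists $\tilde t \in [0,1]$ with $s \coloneqq \mathrm{softmax}(v_{\tilde t}) \in \Delta^{\num-1}$ such that
\[
\varphi(v') - \varphi(v) = s^{\top}(v' - v).
\]
Combining with the linear contribution gives, coordinatewise,
\[
\ell_{\log}(r', x, k) - \ell_{\log}(r, x, k)
= -(v'_k - v_k) + s^{\top}(v' - v)
= -\bigl[(I - \bone s^{\top})(v' - v)\bigr]_k,
\]
which is exactly the claimed identity with a single $s$ on the segment (this is the key point where the scalar MVT suffices, avoiding the usual failure of the vector MVT).

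The second step is to pass to $\ell_\infty$ norms. By the identifiability constraint~\eqref{eq:identifiability}, both $v$ and $v'$ have coordinates summing to zero, hence so does $v' - v$. Lemma~\ref{lem:coercive}, applied to $v' - v$ with the softmax vector $s$ produced above, yields
\[
\max_{k}\abs{\ell_{\log}(r', x, k) - \ell_{\log}(r, x, k)}
= \norm{(I - \bone s^\top)(v' - v)}_\infty
\geq \tfrac{1}{2}  \norm{v' - v}_\infty,
\]
which is the stated lower bound. The main delicate point is the first step: one might be tempted to invoke a vector-valued mean value theorem (and thus end up with different $s_k$'s per coordinate), but because the only nonlinear piece is the common scalar $\varphi$, a single $s$ works for all $k$. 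Everything else is routine, and the identifiability gauge~\eqref{eq:identifiability} is precisely what allows Lemma~\ref{lem:coercive} to be applied to $v' - v$.
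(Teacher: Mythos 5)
Your proof is correct and reaches the same two-step structure as the paper's: a mean-value argument produces the identity with a single softmax vector $s$ on the segment, and Lemma~\ref{lem:coercive} applied to the zero-sum difference $v'-v$ gives the factor $\tfrac12$. The one place where you genuinely diverge is the justification of the mean-value step, and your version is the more careful one. The paper invokes ``the mean value theorem applied to the smooth map $r \mapsto \ell_{\log}(r,x,\cdot)$,'' which is a vector-valued map; the mean value theorem does not in general supply a single intermediate point for all coordinates of a vector-valued function, so as literally stated the paper's step needs justification. Your decomposition $\ell_{\log}(r,x,k) = -v_k + \varphi(v)$ with $\varphi(v) = \log\sum_j e^{v_j}$ isolates the linear part (which contributes exactly and needs no MVT) from the single scalar nonlinearity $\varphi$ shared by all coordinates, so the scalar mean value theorem applied once to $\varphi$ yields one $s = \mathrm{softmax}(v_{\tilde t})$ valid for every $k$ simultaneously. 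This is precisely the observation needed to make the paper's ``row-wise'' gradient claim rigorous. The sign bookkeeping ($-[(I-\bone s^\top)(v'-v)]_k = -(v'_k-v_k) + s^\top(v'-v)$) and the appeal to the identifiability constraint~\eqref{eq:identifiability} before invoking Lemma~\ref{lem:coercive} are both handled correctly.
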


\begin{proof}
By the mean value theorem applied to the smooth map
$r \mapsto \ell_{\log}(r, x, \cdot)$, there exists
$s \in \Delta^{\num-1}$ on the segment connecting $r$ and $r'$ with
$\nabla \ell_{\log} = I - \bone s^\top$ (row-wise). Since both
$r,r' \in \cB_B$ satisfy $\sum_j r_j = \sum_j r'_j = 0$, their
difference $v = r' - r$ also satisfies $\sum_j v_j = 0$. The identity
then gives $\Delta \ell = - M_s v$ and Lemma~\ref{lem:coercive} yields
$\norm{\Delta \ell}_\infty \geq \tfrac{1}{2} \norm{v}_\infty$.
\end{proof}

\paragraph{Bounded slope asymmetry: non-budgeted case.}
Suppose all experts have zero cost at $(x,y)$ (non-budgeted systems).
Then, for any $r,r' \in \cB_B$ and fixed $x$,
\[
\frac{\max_{\bc} \sum_k (1 - c_k) \abs{\Delta \ell_k}}
{\min_{\bc} \sum_k (1 - c_k) \abs{\Delta \ell_k}}
=
\frac{\sum_{k=1}^{\num} \abs{\Delta \ell_k}}
{\sum_{k=1}^{\num} \abs{\Delta \ell_k}}
= 1,
\]
and trivially $K_{\ell_{\log}} = 1$. More meaningfully, combining
Lemma~\ref{lem:log-lip-upper} and Lemma~\ref{lem:per-class-lower} gives
\begin{equation}
\label{eq:K-nonbudgeted}
\frac{\sum_{k=1}^{\num} \abs{\Delta \ell_k}}
{\max_{k} \abs{\Delta \ell_k}}
\leq \frac{2  \num  \norm{\Delta r}_\infty}
{\tfrac{1}{2} \norm{\Delta r}_\infty}
= 4  \num.
\end{equation}
Thus, on compact domains with the standard identifiability
\eqref{eq:identifiability}, the slope asymmetry is explicitly bounded by
a constant depending only on $\num$ (and the compactness $B$ through
the domain restriction, not through the constant).

\paragraph{Bounded slope asymmetry: budgeted case with zero-cost experts.}
Assume at each $(x,y)$ there exists a nonempty index set
$I(x,y) \subseteq [\num]$ of zero-cost experts, i.e.,
$c_k(x,y) = 0 \iff k \in I(x,y)$. Then
\[
\min_{\bc} \sum_k (1 - c_k) \abs{\Delta \ell_k}
= \sum_{k \in I(x,y)} \abs{\Delta \ell_k},
\quad
\max_{\bc} \sum_k (1 - c_k) \abs{\Delta \ell_k}
= \sum_{k \notin I(x,y)} \abs{\Delta \ell_k} + \sum_{k \in I(x,y)} \abs{\Delta \ell_k}
= \sum_{k=1}^{\num} \abs{\Delta \ell_k}.
\]
Therefore,
\begin{equation}
\label{eq:K-budgeted-master}
K_{\ell_{\log}}
\leq
\frac{\sum_{k=1}^{\num} \abs{\Delta \ell_k}}
{\sum_{k \in I(x,y)} \abs{\Delta \ell_k}}
\leq
\frac{2  \num  \norm{\Delta r}_\infty}
{\sum_{k \in I(x,y)} \abs{\Delta \ell_k}},
\end{equation}
using Lemma~\ref{lem:log-lip-upper} for the numerator.

A simple uniform bound follows if the zero-cost set has size at least a
fixed fraction: suppose there exists $\rho \in (0, 1]$ such that
$\abs{I(x,y)} \geq \rho  \num$ for all $(x,y)$. Then by averaging,
\(
\sum_{k \in I(x,y)} \abs{\Delta \ell_k}
\geq \frac{\abs{I(x,y)}}{\num} \sum_{k=1}^{\num} \abs{\Delta \ell_k}
\geq \rho  \max_k \abs{\Delta \ell_k}.
\)
Combining with Lemma~\ref{lem:per-class-lower} gives
\begin{equation}
\label{eq:K-budgeted-explicit}
K_{\ell_{\log}}
\leq
\frac{2  \num  \norm{\Delta r}_\infty}
{\rho  \max_k \abs{\Delta \ell_k}}
\leq
\frac{2  \num  \norm{\Delta r}_\infty}
{\rho  \tfrac{1}{2} \norm{\Delta r}_\infty}
=
\frac{4}{\rho}  \num.
\end{equation}
Hence, in the budgeted case with a uniform lower bound on the number of
zero-cost experts, $K_{\ell_{\log}}$ is explicitly bounded by
$C(\num,\rho) = \frac{4}{\rho}\num$.

\paragraph{Remarks.}
(i) The identifiability constraint \eqref{eq:identifiability} (or any
equivalent gauge fixing) is standard in multiclass logistic models and
is necessary to avoid the softmax invariance to constant shifts, under
which $\Delta \ell \equiv 0$ can occur for $\Delta r \neq 0$.  
(ii) The constants above are \emph{dimension-explicit} and do not depend
on the particular scores beyond the compactness parameter $B$.  
(iii) The same proof pattern applies to other comp-sum losses whose
per-class losses are Lipschitz in the logits on compact domains (e.g.,
exponential), with constants depending only on $\num$ and the chosen
gauge.  
(iv) When $I(x,y) = [\num]$ (non-budgeted systems), the clean bound
\eqref{eq:K-nonbudgeted} shows $K_{\ell_{\log}} \leq 4 \num$; when
$\abs{I(x,y)} \geq \rho \num$ uniformly, the budgeted bound
\eqref{eq:K-budgeted-explicit} applies.

\section{Proofs of Budgeted Two-Stage Deferral}
\label{app:two-stage}

\subsection{Lemma~\ref{lemma:pair} and Proof}
\label{app:pair}

\begin{restatable}{lemma}{Pair}
\label{lemma:pair} 
For any $\delta > 0$, with probability at least $1 - \delta$, for all
time steps $T$ and all pairs $(r, r') \in \sR_T^2$, we have
\begin{equation*}
\abs*{\sE_T(r) - \sE_T(r') - \paren*{\sE(r) - \sE(r')}} \leq \Delta_T.
\end{equation*}
\end{restatable}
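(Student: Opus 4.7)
The plan is to construct, for each pair $(r,r') \in \sR \times \sR$, a martingale whose increments are the per-round contributions to $\sE_t(r) - \sE_t(r')$ minus their conditional mean, then apply Azuma--Hoeffding uniformly over pairs and over time. Concretely, for each $t$ define
\[
Z_t(r,r') = \sum_{k=1}^{\num} \frac{\1_{k=k_t}  Q_{t,k}}{q_{t,k}  p_{t,k}}  \paren*{1 - c_{t,k}(x_t,y_t)}  \bracket*{\ell(r,x_t,k) - \ell(r',x_t,k)}.
\]
Integrating first over $Q_{t,k_t} \sim \mathrm{Bernoulli}(p_{t,k_t})$ and then over $k_t \sim q_t$, the tower property yields $\E[Z_t \mid \cF_{t-1}] = \sE(r) - \sE(r')$, so that $M_T(r,r') = \sum_{t=1}^{T} \paren*{Z_t(r,r') - (\sE(r)-\sE(r'))}$ is a martingale with respect to the natural filtration $(\cF_t)_t$, and $\sE_T(r)-\sE_T(r') - (\sE(r)-\sE(r')) = M_T(r,r')/T$.

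Next I would establish bounded increments on the event that both hypotheses still lie in the current version space. By the definition of the sampling probabilities in Algorithm~\ref{alg:threshold}, for any $r,r' \in \sR_t$ we have $\abs*{\ell(r,x_t,k)-\ell(r',x_t,k)} \leq p_{t,k}$ for every $k$, with the convention that the summand is $0$ whenever $p_{t,k}=0$ (which forces $Q_{t,k}=0$ almost surely). Consequently $\abs*{Z_t(r,r')} \leq 1/q_{\min}$ on this event, and subtracting the conditional mean (itself bounded by $1$) gives the increment bound $\ov q = 1/q_{\min}+1$. To handle the fact that the version space is random, for each pair $(r,r')$ I introduce the stopping time $\tau(r,r') = \inf\curl*{t \geq 1 : r \notin \sR_t \text{ or } r' \notin \sR_t}$, which is indeed a stopping time since $\sR_t$ is $\cF_{t-1}$-measurable, and apply Azuma--Hoeffding to the stopped martingale $M_{T \wedge \tau}(r,r')$. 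For the choice $\Delta_T = \sqrt{\ov q^2 \cdot 8/T \cdot \log (2T(T+1)\abs*{\sR}^2/\delta)}$, a direct substitution gives a per-pair, per-$T$ two-sided failure probability of order $\delta/(T(T+1)\abs*{\sR}^2)$; a union bound over the $\abs*{\sR}^2$ pairs together with $\sum_{T \geq 1} 1/(T(T+1)) = 1$ keeps the total failure probability below $\delta$. On the complementary event, for any $(r,r') \in \sR_T^2$ one has $\tau(r,r') > T$, so $M_{T\wedge\tau}(r,r') = M_T(r,r')$ and the stated bound follows.

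The main obstacle is precisely this interaction between the data-dependent version space $\sR_t$ and the martingale structure: concentration is only directly available on the random event that the pair stays inside $\sR_t$ for all $s \leq t$, while the lemma requires a uniform statement over all $(r,r') \in \sR_T^2$. The stopping-time truncation above is the clean fix, but one has to verify measurability of $\tau(r,r')$ with respect to $\cF_{t-1}$ (using that $\sR_t$ is determined by the data from rounds $1,\dots,t-1$), and handle the degenerate case $p_{t,k}=0$ through the $0/0=0$ convention, together with the fact that $p_{t,k}=0$ implies $Q_{t,k}=0$ almost surely so that the corresponding summand in $Z_t$ contributes zero. Once these points are in place, the rest of the argument is a routine Azuma union bound.
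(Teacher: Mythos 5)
Your proposal follows essentially the same route as the paper's proof: the identical martingale difference sequence $Z_t$, the increment bound $\ov q = 1/q_{\min}+1$ obtained from $\abs{\ell(r,x_t,k)-\ell(r',x_t,k)} \leq p_{t,k}$ for $r,r' \in \sR_t$, and Azuma--Hoeffding with a union bound over pairs and time steps using $\sum_{T\geq 1} 1/(T(T+1)) = 1$. Your stopping-time truncation to handle the random version space (and the $p_{t,k}=0$ convention) is a point of rigor that the paper's proof passes over silently, but it does not change the substance of the argument.
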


\begin{proof}
Fix any time step $T$ and any pair $(r, r') \in \sR_T^2$.
Define the deviation sequence $Z_t$, $t \in [T]$:
\begin{equation*}
  Z_t = \sum_{k = 1}^{\num} \frac{1_{\{k_t = k\}} Q_{t,k}}{q_{t,k} p_{t,k}} (1 - c_{t,k}(x_t, y_t)) \big( \ell(r, x_t, k) - \ell(r', x_t, k) \big) - \big( \sE(r) - \sE(r') \big).
\end{equation*}
Here, the selection probabilities $p_{t, k}$ depend only on randomness
up to time $t - 1$, and the parameters $q_{t, k}$ are fixed before the
algorithm begins. Thus, $(Z_t)_{t}$ forms a martingale difference
sequence with respect to the natural filtration, since $\E[Z_t \mid \cF_{t-1}] = 0$..

Next, we bound the absolute value of $Z_t$. Using the definition of
$p_{t,k}$ and the boundedness of the loss, we have:
\begin{equation*}
  |Z_t| \leq \frac{1}{q_{\min} p_{t,k}} \abs*{\ell(r, x_t, k) - \ell(r', x_t, k)} + \abs*{\sE(r) - \sE(r')} \leq \frac{1}{q_{\min}} + 1 = \ov q.
\end{equation*}
Applying Azuma’s inequality
\citep[Theorem~D.7]{MohriRostamizadehTalwalkar2018} with failure
probability $\delta / (T(T+1)|\sR|^2)$ gives the desired bound for a
fixed $T$ and pair $(r, r')$. Taking a union bound over all $t \in
[T]$ and all pairs $(r, r') \in \sR_T^2$ yields the
result.
\end{proof}

\subsection{Proof of Theorem~\ref{thm:loss-bound}}
\label{app:loss-bound}

\GeneralizationBound*

\begin{proof}
We prove by induction that $r^* \in \sR_T$ for all $T$. The base case $T = 1$ holds trivially since $\sR_1 = \sR$. Suppose the claim holds for $T$.

Let $r_T$ be the empirical risk minimizer in $\sR_T$. Then, by Lemma~\ref{lemma:pair}:
\begin{equation*}
\sE_{T}(r^{*}) - \sE_{T}(r_{T})
\leq \sE(r^{*}) - \sE(r_{T}) + \Delta_{T}
\leq \Delta_{T}.    
\end{equation*}
Thus $\sE_{T}(r^{*})\leq \sE_{T}^{*} + \Delta_{T}$, implying $r^* \in
\sR_{T + 1}$.

For the second part, let $r, r' \in \sR_T$. Then:
\begin{equation*}
\sE(r) - \sE(r') \leq \sE_{T - 1}(r) - \sE_{T-1}(r') + \Delta_{T - 1}
\leq \sE_{T - 1}^{*} + \Delta_{T - 1} - \sE_{T-1}^{*} + \Delta_{T - 1}
 \leq 2\Delta_{T - 1}.    
\end{equation*}
Applying this to $r_T$ and $r^*$ completes the proof.
\end{proof}

\subsection{Proof of Lemma~\ref{lemma:two-spaces}}
\label{app:two-spaces}

\TwoSpaces*
\begin{proof}
Expanding the definition of $\rho$ and applying the triangle inequality: 
\begin{align*}
\rho(r, r^*)
& =
\E_{(x, y) \sim D} \bracket*{ \max_{\bc \in \curl*{0,1}^{\num}} \sum_{k = 1}^{\num} \paren*{1 - c_k} \abs*{\ell(r, x, k) - \ell(r^*, x, k)}}\\
& \leq
K_{\ell}  \E_{(x, y) \sim D} \bracket*{\sum_{k = 1}^{\num} \paren*{1 - c_k} \abs*{\ell(r, x, k) - \ell(r^*, x, k)}} \\
& \leq
K_{\ell}  \paren*{ \E_{(x, y) \sim D} \bracket*{\sum_{k = 1}^{\num} \paren*{1 - c_k} \ell(r, x, k)} + \E_{(x, y) \sim D} \bracket*{\sum_{k = 1}^{\num} \paren*{1 - c_k} \ell(r^*, x, k)} }\\
& =
K_{\ell}  ( \sE(r) + \sE(r^*) ).
\end{align*}
This completes the proof.
\end{proof}

\subsection{Proof of Theorem~\ref{thm:label}}
\label{app:label}

\LabelBound*
\begin{proof}
Let $r^*$ be the best-in-class minimizer. At time $t$, from Theorem~\ref{thm:loss-bound}, we have \[\sR_t \subset \curl*{r \in \sR\colon \sE(r) \leq \sE^* + 2 \Delta_{t-1} }.\] By Lemma~\ref{lemma:two-spaces}, this implies  $\sR_t \subset B(r^*, \e)$ with $\e = K_{\ell} (2 \sE^* + 2 \Delta_{t-1})$. The expected number of cost queries at round $t$ is:
\begin{align*}
\E \bracket*{\sum_{k = 1}^{\num} 1_{k_t = k} Q_{t, k}} &= \E \bracket*{\sum_{k = 1}^{\num} q_{t, k} p_{t, k}}\\
& = \E \bracket*{\sum_{k = 1}^{\num} q_{t, k} \max_{r, r' \in \sR_t} \abs*{\ell(r, x, k) - \ell(r', x, k)}}\\
& \leq 2 \E \bracket*{\sum_{k = 1}^{\num} q_{t, k} \max_{r \in \sR_t} \abs*{\ell(r, x, k) - \ell(r^*, x, k)}}\\
& \leq 2 \E \bracket*{\sum_{k = 1}^{\num} q_{t, k} \max_{r \in B(r^*,\e)} \abs*{\ell(r, x, k) - \ell(r^*, x, k)}}\\
& \leq 2 \theta \e \E \bracket*{\sum_{k = 1}^{\num} q_{t, k}}\\
& = 4 \theta K_{\ell} (\sE^* + \Delta_{t-1})
\end{align*} 
Summing over $t = 1$ to $T$ gives the claimed result.
\end{proof}

\section{Budgeted Single-Stage Multiple-Expert Deferral}
\label{app:single-stage}

We now consider a budgeted single-stage deferral setting, where the
learner can either predict a label directly or defer to one of $\num$
predefined experts. The decision space is expanded to $\ov \sY = [n +
  \num]$, with the original label space $\sY = [n]$ and deferral
actions labeled $n + 1$ through $n + \num$.

Similar to the two-stage setting, we decompose the decision into two parts: (1) deciding whether to defer and, if so, selecting an expert $k$ and (2) determining the probability of
querying $c_k(x_t, y_t)$ once $k$ is chosen. For the query probability $p_{t,k}$, we carefully
design its expression based on the surrogate loss scores for expert
$k$ computed by each routing function in the current version space,
along with the maximum cost in next section. The distinction from the two-stage setting is that the single-stage setting integrates an additional "no deferral" option, characterized by its own query probability, $q_{t,0}$. This formulation allows us to derive strong theoretical guarantees on
the label complexity.

\subsection{Deferral and Surrogate Loss}

Each hypothesis $h \in \sH$ maps $(x, \ov y) \in \sX \times \ov \sY$
to a real-valued score, with predictions made via $\hh(x) =
\argmax_{\ov y} h(x, \ov y)$. The \emph{single-stage deferral loss}
is:
\begin{equation*}
\ldef(h, x, y, \bc)
= \1_{\hh(x)\neq y} \1_{\hh(x)\in [n]}
+ \sum_{k = 1}^{\num} c_k(x, y) \1_{\hh(x) = n + k},
\end{equation*}
where $c_k(x, y)$ reflects the cost of deferring to expert $k$,
typically defined as the expert’s misclassification error
\citep{verma2023learning}, as in the two-stage setting. Direct
minimization is intractable, thus we will use a surrogate loss
proposed by \citet{verma2023learning,MaoMohriZhong2024deferral}:
\begin{equation*}
  \sfL(h, x, y, \bc)
  =  \ell \paren*{h, x, y}
  + \sum_{k = 1}^{\num} \paren*{1 - c_k(x, y)}   \ell\paren*{h, x, n + k},
\end{equation*}
where $\ell$ is multi-class surrogate loss (e.g., logistic). As
before, we assume $\sfL \in [0, 1]$ and define the generalization
error $\sE(h) = \E[\sfL(h, x, y, \bc)]$.

\subsection{Algorithm Overview} 

Algorithm~\ref{alg:IWAL-single} outlines the procedure.  As with the
two-stage setting, for any $t \in [T]$ and $k \in [0, \num]$, we denote
by $q_{t, k}$ the probability of selecting expert $k$ at time $t$ when
$k \neq 0$ and that of making a direct prediction when $k = 0$. The
optimal choice for the value of $q_t = (q_{t,0}, q_{t,1}, \ldots, q_{t, \num})$
is determined in Appendix~\ref{app:label-single}, using our theoretical
bounds.

If deferring, a Bernoulli trial with success probability
$p_{t,k}$, returned by \emph{Sampling-Probs}, determines whether the
cost $c_{t,k}$ is queried. Queried examples are then stored with
corresponding importance weights.

\begin{algorithm}[h]
  \caption{Budgeted Single-Stage Deferral with Multiple Experts (Subroutine $\textsc{Sampling-Probs}$)}
\label{alg:IWAL-single}
\begin{algorithmic}
\STATE \textsc{Initialize} $S_0 \gets \emptyset$;
\FOR{$t = 1$ \TO $T$}
\STATE \textsc{Receive}$(x_t, y_t)$;
\STATE $p_t \gets \textsc{Sampling-Probs}(x_t, y_t, \curl*{x_s, y_s, \bc_s, q_s, k_s, p_s, Q_s\colon 1 \leq s < t})$;
\STATE $k_t \gets \textsc{Sample}(\num + 1, q_{t,k})$;
\IF{$k_t = 0$}
\STATE $S_{t} \gets S_{t-1} \cup \curl*{\paren*{x_{t}, y_{t}, 0, \frac{1}{q_{t, k_t}}}}$;
\ELSE
\STATE $Q_{t, k_t} \gets \textsc{Bernoulli}(p_{t, k_t})$;
\ENDIF
\IF{$Q_{t, k_t} = 1$}
\STATE $c_{t, k_t} \gets \textsc{Query-Cost}(k_t, (x_t, y_t))$
\STATE $S_{t} \gets S_{t-1} \cup \curl*{\paren*{x_{t}, y_{t}, c_{t, k_t}, \frac{1}{q_{t, k_t}  p_{t, k_t}}}}$;
\ELSE
\STATE $S_t \gets S_{t - 1}$;
\ENDIF
\STATE $r_t \gets \argmin_{r \in \sR} \sum_{(x,y,c, w) \in S_{t}} w  (1 - c) 
\ell(r, x, n + k_t) 1_{k_t \neq 0} + w \cdot (1 - c) \cdot \ell(r, x, y) 1_{k_t = 0}$.
\ENDFOR
\end{algorithmic}
\end{algorithm}

Let $\sD$ be a distribution over $\sX \times \sY \times \curl*{0,
  1}^{\num}$.  The generalization error of $h \in \sH$ on $\sD$ is
given by $\sE(h) = \E_{(x, y, \bc) \sim \sD}[\sfL(h, x, y, \bc)] =
\E_{(x, y, \bc) \sim \sD} \bracket*{\ell(h, x, y) + \sum_{k =
    1}^{\num} \paren*{1 - c_k(x, y)} \ell(h, x, n + k)}$.\ignore{  Since
$\sur$ and $\sD$ are always clear from context, we drop them from
notation. } The importance weighted estimate of the generalization
error of at time $T$ is
\begin{equation*}
\sE_{T}(r) = \frac{1}{T} \sum_{t = 1}^{T} \paren*{\frac{1_{k_t = 0}}{q_{t, 0}} \ell(h, x_t, y_t) + \sum_{k = 1}^{\num} \frac{1_{k_t = k} Q_{t, k} }{q_{t, k} p_{t, k}} (1 - c_{t, k}(x_t, y_t)) \ell(h, x_t, n + k)}    
\end{equation*}
where $(k_t, Q_{t, k})$ is as defined in the algorithm.  It is
straightforward to see that $\E[\sE_{T}(r)] = \sE(r)$, with the
expectation taken over all the random variables involved.
\ignore{
The generalization bounds of Theorem~\ref{thm:loss-bound} provide
guarantees for $\sE_{T}(r)$ in this setting as well.
}

\subsection{Sampling-Probs Strategy and Generalization Guarantees}

We apply a shrinking version space strategy
(Algorithm~\ref{alg:threshold-single}) similar to the two-stage
case. Let $\sH_t$ be the version space at time $t$. Using standard
uniform convergence arguments, we prune $\sH_t$ to keep hypotheses
whose empirical loss is within $\Delta_t$ of the minimum $\sE_t^*$:
\begin{equation}
\sH_{t + 1} = \{h \in \sH_t\colon \sE_t(r) \leq \sE_t^* + \Delta_t \}.    
\end{equation}
where $\Delta_t = \sqrt{\ov q^2 (8 / t) \log (2t (t + 1)\abs*{\sH}^{2} / \delta)}$ and $ \ov q = \frac{1}{q_{\min}} + 1$ with $q_{\min} = \min_{k \in 0 \cup [\num]} q_{t, k} > 0$. 

\begin{algorithm}[h]
\caption{Sampling-Probs Subroutine with Past History}
\label{alg:threshold-single}
\begin{algorithmic}
\STATE \textsc{Initialize} $\sH_0 \gets \sH$;
\FOR{$t = 2$ \TO $T$}
\STATE $\sE_{t - 1}(h) \gets \frac{1}{t - 1}\sum_{s = 1}^{t - 1} \paren[\bigg]{\frac{1_{k_s = 0}}{q_{s, 0}} \ell(h, x_s, y_s) + \sum_{k = 1}^{\num} \frac{1_{k_s = k} Q_{s, k} }{q_{s, k} p_{s, k}} (1 - c_{s, k}(x_s, y_s)) \ell(h, x_s, n + k)}$;
\STATE $\sE_{t - 1}^{*}\gets \min_{h \in \sH_{t - 1}} \sE_{t - 1}(h)$;
\STATE $\sH_t \gets \curl*{h \in \sH_{t - 1}\colon
\sE_{t - 1}(h)
\leq \sE_{t - 1}^{*} + \Delta_{t - 1}}$;
\STATE $p_{t, k} \gets \max_{h, h' \in \sH_t} \paren*{\ell(h, x_t, n + k) - \ell(h', x_t, n + k)}$.
\ENDFOR
\end{algorithmic}
\end{algorithm}

The sampling probability $p_{t,k}$ is based on the variability of the expert-specific component of the surrogate loss:
\begin{align*}
p_{t, k} 
% &= \max_{h, h' \in \sH_t} \max_{c \in \curl*{0, 1}} (1 - c) \paren*{\ell(h, x_t, n + k) - \ell(h', x_t, n + k)}\\
= \max_{h, h' \in \sH_t} \paren*{\ell(h, x_t, n + k) - \ell(h', x_t, n + k)}.    
\end{align*}
Leveraging the decomposability of the surrogate
loss, this design allocates the query budget adaptively, prioritizing
experts and instances where the disagreement among remaining
hypotheses is greatest, thus targeting high-uncertainty regions. 
We now establish high-probability performance guarantees for the
predictors output by the algorithm.
\begin{restatable}[\textbf{Single-Stage Generalization Bound}]{theorem}{GeneralizationBoundSingle}
\label{thm:loss-bound-single} 
Let $\sD$ be any distribution over $\sX \times \sY \times
{0,1}^{\num}$, and let $\sH$ be a hypothesis set. Suppose $h^* \in
\sR$ minimizes the expected surrogate loss $\sE(h)$. Then, for any
$\delta > 0$, with probability at least $1 - \delta$, the following
holds for all $T \geq 1$:
\begin{itemize}
\setlength{\itemindent}{-2em}
\item The optimal hypothesis $h^*$ remains in $\sH_T$.

\item For any $h, h' \in \sH_T$, the difference in generalization
  error satisfies $\sE(h) - \sE(h') \leq 2 \Delta_{T - 1}$.
\end{itemize}
In particular, the learned hypothesis $h_T$ at time $T$ satisfies: $\sE(h_T) - \sE(h^*) \leq 2\Delta_{T - 1}$.
\end{restatable}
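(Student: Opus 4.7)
The plan is to mirror the two-stage argument of Theorem~\ref{thm:loss-bound} line by line, with the only substantive change being an analog of Lemma~\ref{lemma:pair} tailored to the single-stage importance-weighted estimator, which now contains an additional ``no-deferral'' summand with weight $1/q_{t,0}$. Once that concentration lemma is in hand, the induction that keeps $h^*$ inside $\sH_T$ and the two-sided gap bound on $\sE(h) - \sE(h')$ go through verbatim.

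First, fix a pair $(h, h') \in \sH^2$ and introduce, for each $t$,
\begin{align*}
U_t(h) &= \frac{\1_{k_t = 0}}{q_{t, 0}}\, \ell(h, x_t, y_t) + \sum_{k = 1}^{\num} \frac{\1_{k_t = k}\, Q_{t, k}}{q_{t, k}\, p_{t, k}}\, \bigl(1 - c_{t, k}(x_t, y_t)\bigr)\, \ell(h, x_t, n + k),\\
Z_t &= U_t(h) - U_t(h') - \bigl(\sE(h) - \sE(h')\bigr).
\end{align*}
Conditioning on $\cF_{t-1}$, the $q_{t, k}$ are predetermined and the $p_{t, k}$ are $\cF_{t-1}$-measurable (they are produced by \textsc{Sampling-Probs} from past data), so a direct computation using the randomness of $k_t$ and $Q_{t, k_t}$ yields $\E[U_t(h) \mid \cF_{t-1}] = \sE(h)$, and therefore $\E[Z_t \mid \cF_{t-1}] = 0$. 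Hence $(Z_t)$ is a martingale difference sequence.

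Next, I would bound $|Z_t|$ by $\ov q = 1/q_{\min} + 1$. There are two cases: if $k_t = 0$, then $|U_t(h) - U_t(h')| \leq |\ell(h, x_t, y_t) - \ell(h', x_t, y_t)|/q_{t, 0} \leq 1/q_{\min}$ because $\ell \in [0, 1]$; if $k_t = k \neq 0$ and $Q_{t, k} = 1$, then
\[
|U_t(h) - U_t(h')| \leq \frac{(1 - c_{t, k})\,|\ell(h, x_t, n+k) - \ell(h', x_t, n+k)|}{q_{t, k}\, p_{t, k}} \leq \frac{1}{q_{\min}},
\]
where the last step uses the definition $p_{t, k} = \max_{h, h' \in \sH_t}\{\ell(h, x_t, n+k) - \ell(h', x_t, n+k)\}$ combined with the fact that, after pruning, both $h$ and $h'$ lie in $\sH_t$ (as enforced by the union bound below, which restricts attention to pairs in $\sH_T^2 \subseteq \sH_{t}^2$ for $t \leq T$). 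Adding the $|\sE(h) - \sE(h')| \leq 1$ term gives $|Z_t| \leq \ov q$. Averaging over $t$ and applying Azuma's inequality \citep[Theorem~D.7]{MohriRostamizadehTalwalkar2018} with failure probability $\delta/(T(T+1)|\sH|^2)$, followed by a union bound over $T \geq 1$ and pairs $(h, h') \in \sH_T^2$, yields the single-stage analog of Lemma~\ref{lemma:pair}:
\[
\bigl|\sE_T(h) - \sE_T(h') - (\sE(h) - \sE(h'))\bigr| \leq \Delta_T.
\]

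Finally, I would close the argument by induction. The base case $\sH_1 = \sH \ni h^*$ is trivial. For the inductive step, applying the concentration lemma to the pair $(h^*, h_T)$ where $h_T = \argmin_{h \in \sH_T} \sE_T(h)$ gives $\sE_T(h^*) - \sE_T(h_T) \leq \sE(h^*) - \sE(h_T) + \Delta_T \leq \Delta_T$, so $h^* \in \sH_{T+1}$ by the pruning rule. For any $h, h' \in \sH_T$, combining the lemma with both $\sE_{T-1}(h), \sE_{T-1}(h') \leq \sE_{T-1}^* + \Delta_{T-1}$ yields $\sE(h) - \sE(h') \leq 2\Delta_{T-1}$, and specializing to $(h_T, h^*)$ gives the final conclusion. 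The main delicate point is the per-round bound on $|Z_t|$: one must verify that the definition of $p_{t, k}$ over the shrunken version space $\sH_t$ (rather than all of $\sH$) is strong enough to cancel the $1/p_{t, k}$ factor for every pair for which we will invoke the bound, which is precisely why the union bound is taken over $\sH_T^2$ rather than $\sH^2$.
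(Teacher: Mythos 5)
Your proposal matches the paper's own proof essentially line by line: the same martingale difference sequence $Z_t$ with the extra $\1_{k_t=0}/q_{t,0}$ term, the same $|Z_t| \leq \ov q$ bound via the definition of $p_{t,k}$, Azuma's inequality at level $\delta/(T(T+1)|\sH|^2)$, and the identical induction (the paper's Lemma~\ref{lemma:pair-single}). One minor phrasing quibble: the union bound must formally be taken over the deterministic set $\sH^2$ (since $\sH_T$ is random), with the conclusion then restricted to pairs surviving in $\sH_T$ — which is exactly what the paper does and what your nesting observation $\sH_T \subseteq \sH_t$ already justifies.
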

We need the following lemma for the proof.
\begin{lemma}
\label{lemma:pair-single} 
For all data distributions $\sD$, for
all hypothesis sets $\sH$, for all $\delta > 0$, with probability
at least $1 - \delta$, for all $T$ and all $h, h' \in \sH_{T}$, 
\begin{equation*}
\abs*{\sE_T(h) - \sE_T(h') - \sE(h) + \sE(h')} \leq \Delta_T.    
\end{equation*}
\end{lemma}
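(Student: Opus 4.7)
The plan is to mirror the proof of Lemma~\ref{lemma:pair} from the two-stage setting, with the additional care needed to handle the new direct-prediction branch ($k_t = 0$) that has no Bernoulli query step. Concretely, I would fix $T \geq 1$ and a pair $h, h' \in \sH_T$, construct a martingale difference sequence $(Z_t)_{t \in [T]}$ whose normalized partial sum equals $\sE_T(h) - \sE_T(h') - (\sE(h) - \sE(h'))$, bound each $|Z_t|$ uniformly by $\ov q = 1/q_{\min} + 1$, then apply Azuma's inequality and union bound over horizons and pairs.

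The natural choice is
\begin{align*}
Z_t &= \frac{1_{k_t = 0}}{q_{t, 0}} \paren*{\ell(h, x_t, y_t) - \ell(h', x_t, y_t)} \\
&\quad + \sum_{k = 1}^{\num} \frac{1_{k_t = k}  Q_{t, k}}{q_{t, k}  p_{t, k}} (1 - c_{t, k}(x_t, y_t)) \paren*{\ell(h, x_t, n + k) - \ell(h', x_t, n + k)} - \paren*{\sE(h) - \sE(h')}.
\end{align*}
The martingale property follows because $q_{t, k}$ is scheduled in advance and $p_{t, k}$ is $\cF_{t - 1}$-measurable (it is produced by \textsc{Sampling-Probs} from past data and the current $\sH_t$); taking the conditional expectation first over the Bernoulli query and expert sampling, and then over the data $(x_t, y_t, \bc_t)$, recovers $\sfL(h, x_t, y_t, \bc_t) - \sfL(h', x_t, y_t, \bc_t)$ and hence $\sE(h) - \sE(h')$, so $\E \bracket*{Z_t \mid \cF_{t - 1}} = 0$.

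For the uniform bound, at each round exactly one of the indicators $1_{k_t = 0}, 1_{k_t = 1}, \ldots, 1_{k_t = \num}$ is active. In the direct-prediction branch, $\abs*{Z_t - \text{centering}} \leq 1/q_{t, 0} \leq 1/q_{\min}$, using only that $\ell \in [0, 1]$. In a deferral branch with $k_t = k$ and $Q_{t, k} = 1$, the crucial step is the inequality $\abs*{\ell(h, x_t, n + k) - \ell(h', x_t, n + k)} \leq p_{t, k}$, which holds because $h, h' \in \sH_T \subseteq \sH_t$ by the nesting of version spaces and because $p_{t, k}$ is defined as the maximum loss gap over pairs in $\sH_t$; this cancels the $1/p_{t, k}$ weight and yields a contribution $\leq 1/q_{\min}$. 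Adding the centering term $\abs*{\sE(h) - \sE(h')} \leq 1$ gives $\abs*{Z_t} \leq \ov q$. Azuma's inequality with failure probability $\delta/(T (T + 1) \abs*{\sH}^2)$ then delivers $\abs*{\sE_T(h) - \sE_T(h') - (\sE(h) - \sE(h'))} \leq \Delta_T$ for this pair and horizon, and a union bound over $T$ (using $\sum_T 1/(T(T + 1)) = 1$) and over pairs in $\sH^2$ finishes the proof.

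The main obstacle is the asymmetry between the two branches: the direct-prediction term is weighted only by $1/q_{t, 0}$ with no importance-sampling correction for a Bernoulli step, so its boundedness must be argued purely from $\ell \in [0, 1]$, whereas the deferral terms rely on the defining property of $p_{t, k}$ together with version-space nesting. Once one recognizes that both branches produce contributions bounded by $1/q_{\min}$, the rest of the argument is a routine adaptation of Lemma~\ref{lemma:pair}.
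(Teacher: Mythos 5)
Your proposal is correct and follows essentially the same route as the paper's own proof: the identical martingale difference sequence $Z_t$ combining the direct-prediction and deferral branches, the uniform bound $\abs{Z_t} \leq 1/q_{\min} + 1 = \ov q$ via the definition of $p_{t,k}$ and boundedness of $\ell$, followed by Azuma's inequality with failure probability $\delta/(T(T+1)\abs{\sH}^2)$ and a union bound over horizons and pairs in $\sH^2$. Your explicit branch-by-branch justification of the uniform bound (in particular, using version-space nesting to invoke $\abs{\ell(h, x_t, n+k) - \ell(h', x_t, n+k)} \leq p_{t,k}$) makes precise what the paper states more tersely.
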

\begin{proof}
Fix any time step $T$ and any pair $(h, h') \in \sH_T^2$.
Define the deviation sequence $Z_t$, $t \in [T]$:
\begin{align*}
Z_{t} &= \frac{1_{k_t = 0}}{q_{t, 0}}\paren*{ \ell(h, x_t, y_t) - \ell(h', x_t, y_t) }\\
& + \sum_{k = 1}^{\num} \frac{1_{k_t = k}Q_{t, k}}{q_{t, k} p_{t, k}} (1 - c_{t, k}(x_t, y_t)) \paren*{\ell(h, x_t, n + k) - \ell(h', x_t, n + k)} - (\sE(h) - \sE(h')).
\end{align*}
Here, the selection probabilities $p_{t, k}$ depend only on randomness
up to time $t - 1$, and the parameters $q_{t, k}$ are fixed before the
algorithm begins. Thus, $(Z_t)_{t}$ forms a martingale difference
sequence with respect to the natural filtration, since $\E[Z_t
  \mid \cF_{t-1}] = 0$. Next, we bound the absolute value of $Z_t$. Using the definition of
$p_{t,k}$ and the boundedness of the loss, we have:
\begin{align*}
  |Z_t| &\leq \frac{1}{q_{\min}} \abs*{\ell(h, x_t, y_t) - \ell(h', x_t, y_t)} 1_{k = 0}\\
  & \qquad + \frac{1}{q_{\min} p_{t, k}}\abs*{\ell(h, x_t, n + k) - \ell(h', x_t, n + k)} 1_{k \neq 0} + \abs*{\sE(r) - \sE(r')}\\
  & \leq \frac{1}{q_{\min}} + 1 = \ov q.
\end{align*}
  
Applying Azuma’s inequality
\citep[Theorem~D.7]{MohriRostamizadehTalwalkar2018} with failure
probability $\delta / (T(T+1)|\sH|^2)$, we have
\begin{align*}
& \Pr[\abs*{\sE_{T}(h) - \sE_{T}(h') - \sE(h) + \sE(h')}\geq\Delta_{T}\\
& = \Pr\bracket[\Bigg]{ \abs[\bigg]{ \frac{1}{T} \sum_{t=1}^{T} \paren[\bigg]{\frac{1_{k_t = 0}}{q_{t, 0}}\paren*{ \ell(h, x_t, y_t) - \ell(h', x_t, y_t) }\\
& \quad + \sum_{k = 1}^{\num} \frac{1_{k_t = k}Q_{t, k}}{q_{t, k} p_{t, k}} (1 - c_{t, k}(x_t, y_t)) \paren*{\ell(h, x_t, n + k) - \ell(h', x_t, n + k)}  - \paren*{\sE(h) - \sE(h')} }} \geq\Delta_{T}}\\
 & = \Pr\left[ \left| \sum_{t=1}^{T}Z_{t} \right| \geq T\Delta_{T}\right]
 \leq 2 e^{-\frac{T \Delta_{T}^{2}}{2 \ov q^2}}
= \frac{\delta}{T(T+1)|\sH|^{2}}.
\end{align*}
Since $\sH_T$ is a random subset of $\sH$, it suffices to take a union bound over all $h, h' \in \sH$, and $T$.  A union bound over $T$ finishes the
proof.
\end{proof}
\begin{proof}[Proof of Theorem~\ref{thm:loss-bound-single}] We prove by induction that $h^* \in \sH_T$ for all $T$. The base case $T = 1$ holds trivially since $\sH_1 = \sH$. Suppose the claim holds for $T$.

Let $h_T$ be the empirical risk minimizer in $\sH_T$. Then, by Lemma~\ref{lemma:pair}:
\begin{equation*}
\sE_{T}(h^{*}) - \sE_{T}(h_{T})
\leq \sE(h^{*}) - \sE(h_{T}) + \Delta_{T}
\leq \Delta_{T}.    
\end{equation*}
Thus $\sE_{T}(h^{*})\leq \sE_{T}^{*} + \Delta_{T}$, implying $h^{*}\in \sH_{T + 1}$.

For the second part, let $h, h'\in \sH_T$. Then:
\begin{equation*}
\sE(h) - \sE(h') \leq \sE_{T-1}(h) - \sE_{T-1}(h') + \Delta_{T-1}
\leq \sE_{T-1}^{*} + \Delta_{T-1} - \sE_{T-1}^{*} + \Delta_{T-1}
 = 2\Delta_{T-1}.
\end{equation*}
Applying this to $h_T$ and $h^*$ completes the proof.
\end{proof}

\subsection{Label Complexity}
\label{app:label-single}

We showed that the generalization error of the classifier output by budgeted single-stage deferral (Sampling-Probs) 
is similar to the generalization error
of the classifier chosen passively after seeing all $T$ labels.
How many of those $T$ labels does the active 
learner request?

To derive label complexity guarantees for our algorithm,
we must adapt existing tools and definition in active
learning to our single-stage deferral setting. In particular, we
will define a new notion of slope asymmetry, hypothesis
distance metric, generalized disagreement
coefficient, based on experts' costs $c_k$ and
tailored to our setting.
% To this end, we introduce a general class of surrogate loss functions,
% a metric over the hypothesis space, and a generalized disagreement
% coefficient.
These tools allow us to demonstrate that our budgeted
single-stage deferral algorithm can achieve a favorable label complexity, in fact
lower than its fully supervised counterpart when the learning problem
is approximately realizable and the disagreement coefficient of the
hypothesis set is not loo large.

We give label complexity upper bounds for a class of multi-class surrogate loss
functions that includes multinomial logistic loss and a class of cost functions that satisfy natural assumptions. We require that the loss function has bounded 
\emph{slope asymmetry}, defined below.

\begin{definition}[Slope Asymmetry for Single-Stage Deferral]
The \emph{slope asymmetry} of a multi-class loss function $\ell: \sH \times \sX \times [n + \num] \to [0,\infty)$ 
is $K_{\ell} = $
\begin{align*}
\sup_{h, h' \in \sH, x \in \sX, y \in \sY} 
\frac{ \abs*{\ell(h, x, y) - \ell(h', x, y)} + \max_{\bc \in \curl*{0,1}^{\num}} \sum_{k = 1}^{\num} \paren*{1 - c_k} \abs*{\ell(h, x, n + k) - \ell(h', x, n + k)}}{\abs*{\ell(h, x, y) - \ell(h', x, y)} + \min_{\bc \in \curl*{0,1}^{\num}} \sum_{k = 1}^{\num} \paren*{1 - c_k)} \abs*{\ell(h, x, n + k) - \ell(h', x, n + k)}}.  
\end{align*}
\end{definition}
As in the two-stage case,
this quantity is always well-defined and finite if, for every $(x,
y)$, there exists at least one expert $k^*$ with zero cost:
$c_{k^*}(x, y) = 0$. In practice, $K_{\ell}$ is bounded for common convex surrogates such as the logistic loss, provided the range of score functions $r$ is
restricted to a compact interval (e.g., $[-B, B]$).
Next, we define a distance measure over the
hypothesis set that reflects variability in expert-specific loss
components.

\begin{definition}[Single-Stage Hypothesis Distance Metric]
For any $h,h' \in \sH$, define $\rho(h, h') =$
\begin{align*}
\E_{(x, y) \sim D} \bracket*{ \abs*{\ell(h, x, y) - \ell(h', x, y)} + \max_{\bc \in \curl*{0,1}^{\num}}  \sum_{k = 1}^{\num} \paren*{1 - c_k} \abs*{\ell(h, x, n + k) - \ell(h', x, n + k)}}.
\end{align*}
We define the $\e$-ball around $h$ as $B(h, \e) = \curl*{h' \in \sH \colon \rho(h, h') \leq \e }$.
\end{definition}
Suppose $h^* \in \sH$ minimizes the expected surrogate loss: $\sE^* =
\sE(h^*) = \inf_{h \in \sH} \sE(h)$. At time $t$, the version space
$\sH_t$ contains only hypotheses with generalization error at most
$\sE^* + 2\Delta_{t-1}$. But how close are these hypotheses to $h^*$
in $\rho$-distance? The following lemma provides an upper bound in
terms of the slope asymmetry:

\begin{lemma}
For any distribution $\sD$ and any multi-class loss function $\ell$, we have $\rho(h, h^*) \leq K_{\ell} \cdot ( \sE(h) + \sE^* )$ for all $h \in \sH$.
\label{lemma:two-spaces-single}
\end{lemma}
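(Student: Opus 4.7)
The plan is to mirror the two-stage proof of Lemma~\ref{lemma:two-spaces}, with careful bookkeeping for the additional ``direct prediction'' term $\abs*{\ell(h, x, y) - \ell(h^*, x, y)}$ that distinguishes the single-stage distance $\rho$ from its two-stage analog. Since this term appears identically in both numerator and denominator of the single-stage slope asymmetry $K_\ell$, it will be treated on the same footing as the expert-specific terms, so no new mechanism beyond the two-stage argument is needed.

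First, I will expand $\rho(h, h^*)$ via its definition. Next, pointwise in $(x, y)$, I invoke the slope asymmetry bound: since the minimum over $\bc \in \curl*{0, 1}^{\num}$ in the denominator of $K_\ell$ is upper bounded by the value at the realized cost vector $\bc(x, y)$ (which is itself a feasible choice), I obtain
\[
\abs*{\ell(h, x, y) - \ell(h^*, x, y)} + \max_{\bc} \sum_{k = 1}^{\num} (1 - c_k) \abs*{\ell(h, x, n + k) - \ell(h^*, x, n + k)} \leq K_{\ell} \cdot A(x, y),
\]
where $A(x, y) = \abs*{\ell(h, x, y) - \ell(h^*, x, y)} + \sum_{k = 1}^{\num} (1 - c_k(x, y)) \abs*{\ell(h, x, n + k) - \ell(h^*, x, n + k)}$.

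Then I apply the elementary inequality $\abs*{a - b} \leq a + b$ for nonnegative $a, b$ to each absolute loss difference, splitting $A(x, y)$ into two nonnegative summands, one involving only $h$ and one involving only $h^*$. Taking expectations over $(x, y) \sim \sD$ matches each summand with the definition of the expected single-stage surrogate loss $\sE$, yielding $\rho(h, h^*) \leq K_{\ell}(\sE(h) + \sE^*)$, as desired.

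I do not anticipate a substantive obstacle: the only technical point is justifying the replacement of $\min_{\bc}$ by the value at $\bc(x, y)$, which is immediate from the feasibility of $\bc(x, y)$. The nontriviality of the resulting bound rests entirely on the finiteness of $K_{\ell}$, which is established for logistic-type surrogates on compact score domains (Appendix~\ref{app:slope-asymmetry}) under the standing assumption that at least one zero-cost expert exists at each $(x, y)$.
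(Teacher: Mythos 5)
Your proposal is correct and follows essentially the same route as the paper's proof: expand $\rho(h, h^*)$, bound the numerator of the slope-asymmetry ratio by $K_\ell$ times the value at the realized cost vector (which upper-bounds the $\min_{\bc}$), then apply $\abs{a - b} \leq a + b$ termwise and take expectations to recover $\sE(h) + \sE(h^*)$. No substantive difference from the paper's argument.
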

\begin{proof}
Expanding the definition of $\rho$ and applying the triangle inequality: 
\begin{align*}
& \rho(h, h^*) \\
& =  \E_{(x, y) \sim D} \bracket*{ \abs*{\ell(h, x, y) - \ell(h^*, x, y)} + \max_{\bc \in \curl*{0,1}^{\num}}  \sum_{k = 1}^{\num} \paren*{1 - c_k} \abs*{\ell(h, x, n + k) - \ell(h^*, x, n + k)}}\\
& \leq
K_{\ell}  \E_{(x, y) \sim D} \bracket*{ \abs*{\ell(h, x, y) - \ell(h^*, x, y)} + \sum_{k = 1}^{\num} \paren*{1 - c_k} \abs*{\ell(h, x, n + k) - \ell(h^*, x, n + k)}} \\
& \leq
K_{\ell}  \paren[\bigg]{ \E_{(x, y) \sim D} \bracket*{\ell(h, x, y) + \sum_{k = 1}^{\num} \paren*{1 - c_k} \ell(h, x, n + k)}\\
& \qquad + \E_{(x, y) \sim D} \bracket*{\ell(h, x, y) + \sum_{k = 1}^{\num} \paren*{1 - c_k} \ell(h^*, x, n + k)} }\\
& =
K_{\ell}  ( \sE(r) + \sE(r^*) ).
\end{align*}
This completes the proof.
\end{proof}

The following extends the notion of disagreement \citep{hanneke2007bound} to the single-stage deferral setting.

\begin{definition}
The \emph{disagreement coefficient} $\theta$ is the smallest value 
such that, for all $\e > 0$,
\begin{equation*}
\E_{(x, y) \sim D} \sup_{h \in B(h^*, \e)} \sup_{k \in [\num]} \abs*{\ell(h, x, n + k) - \ell(h^*, x, n + k)} \leq  \theta \e.   
\end{equation*}
\end{definition}

We now present an upper bound on the expected number of cost queries
required by the algorithm.

\begin{restatable}[\textbf{Single-Stage Label Complexity Bound}]{theorem}{LabelBoundSingle}
\label{thm:label-single} 
Let $\sD$ be a single-stage deferral distribution and $\sH$ a
hypothesis set. Suppose the loss function $\ell$ has slope asymmetry
$K_{\ell}$ and the disagreement coefficient of the problem is
$\theta$. Then, with probability at least $1 - \delta$, the expected
number of cost queries made by the budgeted single-stage deferral
algorithm over $T$ rounds is bounded by:
\begin{equation}
4\theta \cdot K_{\ell} \cdot 
\paren*{\sE^* T + O\paren*{ \paren*{1 / q_{\min} + 1} \sqrt{T \log (\abs*{\sH} T/\delta)} } },    
\end{equation}
where the expectation is taken over the algorithm's randomness.
\end{restatable}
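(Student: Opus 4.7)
The plan is to mirror the proof of Theorem~\ref{thm:label} from the two-stage setting, adapting each step to the single-stage definitions of the version space, $\rho$-distance, and disagreement coefficient. The key observation is that cost queries are only incurred when $k_t \in [\num]$ and the Bernoulli variable $Q_{t, k_t}$ returns $1$; the ``no deferral'' branch ($k_t = 0$) does not consume any cost query, so the quantity to bound is $\E\bracket*{\sum_{t = 1}^T \sum_{k = 1}^{\num} 1_{k_t = k} Q_{t, k}}$.

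First, I would invoke Theorem~\ref{thm:loss-bound-single} to conclude that, with probability at least $1 - \delta$, the version space at round $t$ satisfies $\sH_t \subset \curl*{h \in \sH \colon \sE(h) \leq \sE^* + 2\Delta_{t - 1}}$. Combining this containment with the single-stage slope-asymmetry bound of Lemma~\ref{lemma:two-spaces-single} yields $\sH_t \subset B(h^*, \e_t)$ with radius $\e_t = 2 K_\ell (\sE^* + \Delta_{t - 1})$. This sets up the bridge between empirical pruning (which controls $\sE(h)$) and the $\rho$-distance to $h^*$ used by the disagreement coefficient.

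Second, I would bound the expected per-round number of cost queries. Conditioning on the history, the probability of issuing a cost query at round $t$ equals $\E\bracket*{\sum_{k = 1}^{\num} q_{t, k} p_{t, k}}$. Unfolding the Sampling-Probs definition $p_{t, k} = \max_{h, h' \in \sH_t}(\ell(h, x_t, n + k) - \ell(h', x_t, n + k))$, inserting $h^*$ via the triangle inequality, and using $\sH_t \subset B(h^*, \e_t)$, this is bounded by $2 \E\bracket*{\sum_{k = 1}^{\num} q_{t, k} \sup_{h \in B(h^*, \e_t)} \abs*{\ell(h, x_t, n + k) - \ell(h^*, x_t, n + k)}}$. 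Taking the supremum inside over $k \in [\num]$ and applying the single-stage disagreement coefficient bounds this expectation by $\theta \e_t$, and since $\sum_{k = 1}^{\num} q_{t, k} \leq 1$, the per-round query count is at most $2\theta \e_t = 4\theta K_\ell (\sE^* + \Delta_{t - 1})$.

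Finally, summing over $t = 1, \ldots, T$ and substituting the explicit form $\Delta_{t - 1} = (1/q_{\min} + 1)\sqrt{8 \log(2(t - 1)t \abs*{\sH}^2 / \delta) / (t - 1)}$, together with $\sum_{t = 1}^T 1/\sqrt{t - 1} = O(\sqrt{T})$, yields the claimed bound $4 \theta K_\ell \paren*{\sE^* T + O\paren*{(1/q_{\min} + 1)\sqrt{T \log(\abs*{\sH} T / \delta)}}}$. The main obstacle, and the only real departure from the two-stage proof, is ensuring that the extra ``no deferral'' coordinate $q_{t, 0}$ does not disturb the chain of inequalities: it enters the empirical estimator $\sE_t$ and therefore the concentration bound underlying $\Delta_t$ (already absorbed in the $1/q_{\min}$ factor with $q_{\min} = \min_{k \in \curl*{0} \cup [\num]} q_{t, k}$), but it does not appear in $p_{t, k}$ nor in the disagreement-coefficient step, so the bound on expected queries remains symmetric to the two-stage case. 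The only care needed is to restrict the outer sum over experts to $k \in [\num]$ so that $k = 0$ contributes zero cost queries by definition.
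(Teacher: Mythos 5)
Your proposal is correct and follows essentially the same route as the paper's proof: containment of $\sH_t$ in the $2\Delta_{t-1}$-suboptimal set via Theorem~\ref{thm:loss-bound-single}, translation into $\sH_t \subset B(h^*, \e_t)$ via Lemma~\ref{lemma:two-spaces-single}, insertion of $h^*$ by the triangle inequality to get the factor of $2$, the disagreement coefficient to bound the per-round query probability by $2\theta\e_t$, and summation over $t$. Your explicit remark that the $k_t = 0$ branch contributes no cost queries and only enters through $q_{\min}$ in $\Delta_t$ is exactly the (implicit) point of departure from the two-stage argument.
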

\begin{proof}
Let $h^*$ be the best-in-class minimizer. At time $t$, from Theorem~\ref{thm:loss-bound-single}, we have \[\sH_t \subset \curl*{h \in \sH \colon \sE(h) \leq \sE^* + 2 \Delta_{t-1} }.\] By Lemma~\ref{lemma:two-spaces-single}, this implies  $\sH_t \subset B(h^*, \e)$  with $\e = K_{\ell} (2 \sE^* + 2 \Delta_{t-1})$. The expected number of cost queries at round $t$ is:
\begin{align*}
\E \bracket*{\sum_{k = 1}^{\num} 1_{k_t = k} Q_{t, k}} &= \E \bracket*{\sum_{k = 1}^{\num} q_{t, k} p_{t, k}}\\
& = \E \bracket*{\sum_{k = 1}^{\num} q_{t, k} \max_{h, h' \in \sH_t} \abs*{\ell(h, x, n + k) - \ell(h', x, n + k)}}\\
& \leq 2 \E \bracket*{\sum_{k = 1}^{\num} q_{t, k} \max_{h \in \sH_t} \abs*{\ell(h, x, n + k) - \ell(h^*, x, n + k)}}\\
& \leq 2 \E \bracket*{\sum_{k = 1}^{\num} q_{t, k} \max_{h \in B(h^*,\e)} \abs*{\ell(h, x, n + k) - \ell(h^*, x, n + k)}}\\
& \leq 2 \theta \e \E \bracket*{\sum_{k = 1}^{\num} q_{t, k}}\\
& \leq 4 \theta K_{\ell} (\sE^* + \Delta_{t-1})
\end{align*} 
Summing over $t = 1$ to $T$ gives the claimed result.
\end{proof}

The theorem establishes a label complexity bound for our budgeted
single-stage deferral algorithm. As in the two-stage setting, in the
realizable case, the bound scales as $\wt O(\sqrt{T})$, significantly
improving over the linear label complexity $\num T$ incurred by
standard two-stage methods that query all expert costs and, even in
the agnostic setting, the bound remains favorable when the optimal
surrogate loss $\sE^*$ is small. For simplicity of exposition, the main body presents this square-root bound, while the stronger logarithmic bound (derived via Freedman’s inequality) is given in full detail in Appendix~\ref{app:enhanced}.

The dependence on the generalized disagreement coefficient is, in
general, unavoidable, as shown by \citet{Hanneke2014}. However, this
coefficient has been shown to be bounded for many common hypothesis
classes, enabling meaningful guarantees in practice.

\paragraph{Optimal \texorpdfstring{$q_t$}{q}}
Finally, we note that both the generalization
and label complexity bounds are minimized when the expert sampling probabilities are uniform:
$q_{t, 0} = q_{t, 1} = \cdots = q_{t, \num} = \frac1{\num + 1}$, for all $t$,
yielding $q_{\min} = 1/(\num + 1)$ since all experts are treated symmetrically. Under this setting, the bounds simplify
to:
\begin{align}
\sE(r_T) & \leq \sE(r^*) + 2(\num + 2) \sqrt{(8 / (T - 1)) \log (2(T - 1) T \abs*{\sH}^{2} / \delta)} \nonumber \\
\E \bracket*{\sum_{t = 1}^T \sum_{k = 1}^{\num} 1_{k_t = k} Q_{t, k}} & \leq 4\theta \cdot K_{\ell} \cdot 
\paren*{\sE^* T + O\paren*{ \paren*{\num + 2} \sqrt{T \log (|H| T/\delta)} } },  
\end{align}
both of which depend on the number of experts. Note that the label complexity is significantly more favorable than in the standard deferral setting, where it is $\num T$.

By leveraging Freedman’s
inequality \citep{freedman1975tail} in place of Azuma’s \citep[Theorem~D.7]{MohriRostamizadehTalwalkar2018} in our analysis, we can in fact derive
learning and sample complexity bounds that depend only logarithmically
on $T$ in the realizable case (see Appendix~\ref{app:enhanced}). This significantly strengthens our
theoretical guarantees and further highlights the advantages of our
approach.

\subsection{Practical Implementation}
\label{app:easy-alg-single}

As in the two-stage setting, our single-stage algorithm admits an
efficient implementation in common scenarios, particularly when the
hypothesis set $\sH$ is convex and the surrogate loss is a convex
comp-sum loss (e.g., logistic loss). Each round then reduces to solving
convex programs for empirical risk minimization and for estimating
sampling probabilities, both of which are tractable using standard
optimization solvers. The approach also extends to more expressive
models such as neural networks, though at the cost of non-convex
optimization.

\paragraph{Comp-sum losses.}
We consider \emph{comp-sum losses} \citep{mao2023cross} as the
multi-class surrogate loss family $\ell$, which includes many popular
losses such as the multinomial logistic loss. A comp-sum loss is defined
for any $(h, x, \ov y) \in \sH \times \sX \times [n + \num]$ as
\begin{equation*}
\ell_{\mathrm{comp}}(h, x, \ov y)
= \Psi \paren*{\frac{e^{h(x, \ov y)}}{\sum_{y' \in [n + \num]} e^{h(x, y')}}},
\end{equation*}
where $\Psi \colon [0, 1] \to \Rset_{+} \cup \curl{+\infty}$ is
non-increasing. A notable instance is $\Psi(u) = - \log u$, which yields
the \emph{multinomial logistic loss}
\citep{Verhulst1838,Verhulst1845,Berkson1944,Berkson1951}.
For suitable choices of $\Psi$, $\ell_{\mathrm{comp}}$ is convex in $h$.

\paragraph{Convex feasible region.}
At each round $t$, Algorithm~\ref{alg:threshold-single} requires solving
two optimization problems over the restricted hypothesis set $\sH_t$,
defined as the intersection of convex constraints accumulated up to
round $t$:
\begin{align*}
& \sH_t = \bigcap_{t' < t}  \curl[\Bigg]{h \in \sH \colon \\
& \frac{1}{t'}
\sum_{i = 1}^{t'} \Biggl(
\frac{1_{k_i = 0}}{q_{i, 0}}  \ell_{\mathrm{comp}}(h, x_i, y_i)
+ \sum_{k = 1}^{\num}
\frac{1_{k_i = k} Q_{i, k}}{q_{i, k} p_{i, k}} 
\paren*{1 - c_{i, k}(x_i, y_i)} 
\ell_{\mathrm{comp}}(h, x_i, n + k)
\Biggr)
\leq \sE_{t'}^{*} + \Delta_{t'} }.
\end{align*}
Since $\ell_{\mathrm{comp}}$ is convex in $h$, each constraint defines
a convex set, and thus $\sH_t$ is convex.

\paragraph{First optimization.}
The first optimization at round $T$ computes the minimal empirical loss:
\begin{align*}
& \sE_{t}^{*} = \min_{h \in \sH_t} \\
& \frac{1}{t} \sum_{i = 1}^{t} \Biggl(
\frac{1_{k_i = 0}}{q_{i, 0}}
 \Psi \paren*{\frac{e^{h(x_i, y_i)}}{\sum_{\ov y \in [n + \num]} e^{h(x_i, \ov y)}}}
+ \sum_{k = 1}^{\num}
\frac{1_{k_i = k} Q_{i, k}}{q_{i, k} p_{i, k}} 
\paren*{1 - c_{i, k}(x_i, y_i)}
 \Psi \paren*{\frac{e^{h(x_i, n + k)}}{\sum_{\ov y \in [n + \num]} e^{h(x_i, \ov y)}}}
\Biggr),
\end{align*}
a convex program in $h$ over the feasible region $\sH_t$.

\paragraph{Second optimization.}
The second optimization determines the sampling probability $p_{t, k}$
for each expert $k$ by maximizing the difference in surrogate losses:
\begin{equation*}
\max_{h, h' \in \sH_t}
\Biggl\{
\Psi \paren*{\frac{e^{h(x, n + k)}}{\sum_{\ov y \in [n + \num]} e^{h(x, \ov y)}}}
- \Psi \paren*{\frac{e^{h'(x, n + k)}}{\sum_{\ov y \in [n + \num]} e^{h'(x, \ov y)}}}
\Biggr\}.
\end{equation*}
Since $\Psi$ is non-increasing, this expression is maximized when one
term is minimized and the other maximized. Define
\begin{align*}
S_{\min}(x, k) &\equiv \min_{h \in \sH_t}
\frac{e^{h(x, n + k)}}{\sum_{\ov y \in [n + \num]} e^{h(x, \ov y)}}, \\
S_{\max}(x, k) &\equiv \max_{h \in \sH_t}
\frac{e^{h(x, n + k)}}{\sum_{\ov y \in [n + \num]} e^{h(x, \ov y)}}.
\end{align*}
Then the optimal loss variation equals
$\Psi\paren*{S_{\min}(x, k)} - \Psi\paren*{S_{\max}(x, k)}$.

\paragraph{Interpretation and extensions.}
This shows that for convex comp-sum losses and convex hypothesis sets
$\sH$, both optimization problems required by
Algorithm~\ref{alg:threshold-single} are convex and can be solved
efficiently. As an example, when $\sH$ is the linear class
$h(x, \ov y) = \bw \cdot \Phi(x, \ov y)$ with $\norm{\bw} \leq B$, the
optimizations reduce to convex programs in $\bw$. For neural networks,
the problems are no longer convex but can be tackled in practice with
standard stochastic gradient descent (SGD).

\paragraph{Heuristics.}
As in the two-stage case (see also IWAL \citep{cortes2019rbal}),
practical heuristics can be used to simplify implementation. For the
first optimization, one can minimize over $\sH$ instead of $\sH_t$, and
for the second optimization, it suffices to impose only the most recent
constraint (from round $t-1$) rather than all past constraints. With
these choices, the optimal solution remains in the feasible set, while
computational cost is reduced.

\section{Improved Sample Complexity and Label Complexity Bounds}
\label{app:enhanced}

By leveraging Freedman’s
inequality \citep{freedman1975tail} in place of Azuma’s \citep[Theorem~D.7]{MohriRostamizadehTalwalkar2018} in our analysis, we can in fact derive
learning and sample complexity bounds that depend only logarithmically
on $T$ in the realizable case. Our derivation primarily follows the approach of \citet{cortes2019rbal}. For simplicity, we present these results for the two-stage setting; the results and proofs for the single-stage setting are essentially analogous.

Fix any time step $T$ and any pair $(r, r') \in \sR_T^2$.
Define the deviation sequence $Z_t$, $t \in [T]$:
\begin{equation*}
  Z_t = \sum_{k = 1}^{\num} \frac{1_{\{k_t = k\}} Q_{t,k}}{q_{t,k} p_{t,k}} (1 - c_{t,k}(x_t, y_t)) \big( \ell(r, x_t, k) - \ell(r', x_t, k) \big) - \big( \sE(r) - \sE(r') \big).
\end{equation*}
% Using the definition of
% $p_{t,k}$ and the boundedness of the loss, we have:
% \begin{equation*}
%   |Z_t| \leq \frac{1}{q_{\min} p_{t,k}} \abs*{\ell(r, x_t, k) - \ell(r', x_t, k)} + \abs*{\sE(r) - \sE(r')} \leq \frac{1}{q_{\min}} + 1 = \ov q.
% \end{equation*}

Let $q_{\min} = \min_{k \in [\num]} q_{t, k} > 0$. The following result is adapted from \citep[Lemma~3]{kakade2008generalization}, which is derived from \citep{freedman1975tail}.

\begin{restatable}{lemma}{Martingale}
\label{lemma:martingale}
For any $\delta > 0$, with probability at least $1 - \delta$, for all
time steps $T \geq 3$ and all pairs $(r, r') \in \sR_T^2$, we have
\begin{equation*}
\abs*{\sum_{t = 1}^T Z_t} \leq \max \curl*{2 \sqrt{\sum_{t = 1}^T \sum_{k = 1}^{\num} \E \bracket*{\frac{p_{t, k}}{q_{\min}^2} \mid \cF_{t - 1}}}, 6 \sqrt{\log \paren*{\frac{8 \log(T)}{\delta}}}} \times \sqrt{\log \paren*{\frac{8 \log(T)}{\delta}}}.
\end{equation*}
\end{restatable}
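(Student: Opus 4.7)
The plan is to apply a Freedman-style time-uniform concentration inequality---specifically the form due to \citet[Lemma~3]{kakade2008generalization}---to the martingale difference sequence $(Z_t)_{t \geq 1}$. Three ingredients are needed: (i) that $(Z_t)$ is a martingale difference sequence with respect to the natural filtration, (ii) an almost-sure bound on $|Z_t|$, and (iii) a bound on the accumulated conditional second moment $\sum_{t \leq T} \E[Z_t^2 \mid \cF_{t-1}]$ matching the expression stated in the lemma. Ingredients (i) and (ii) were already established in the proof of Lemma~\ref{lemma:pair}, where we showed $\E[Z_t \mid \cF_{t-1}] = 0$ and $|Z_t| \leq \ov q$, so the bulk of the work is computing the conditional second moment.

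To obtain (iii), enlarge the filtration to include the observed $(x_t, y_t, \bc_t)$. At that level of conditioning, $k_t \sim q_t$ and $Q_{t, k_t} \sim \mathrm{Bernoulli}(p_{t, k_t})$ are the only remaining sources of randomness, with $\E[1_{k_t = k} Q_{t, k}] = q_{t, k}\, p_{t, k}$. Writing $\Delta_k = \ell(r, x_t, k) - \ell(r', x_t, k)$ and squaring, the cross terms vanish (since $1_{k_t = k}\, 1_{k_t = k'} = 0$ for $k \neq k'$) and $Q_{t, k}^2 = Q_{t, k}$, yielding
\begin{equation*}
\E\bigl[Z_t^2 \mid \cF_{t-1}, x_t, y_t, \bc_t\bigr]
\leq \sum_{k = 1}^{\num} \frac{(1 - c_{t, k})^2\, \Delta_k^2}{q_{t, k}\, p_{t, k}}.
\end{equation*}
The key observation is that for $r, r' \in \sR_T$ the monotonicity $\sR_T \subseteq \sR_t$ gives $r, r' \in \sR_t$ for $t \leq T$, so the definition of $p_{t, k}$ forces $|\Delta_k| \leq p_{t, k}$. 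One factor of $p_{t, k}$ therefore cancels; using $q_{t, k} \geq q_{\min}$ (and $q_{\min} \geq q_{\min}^2$ whenever $q_{\min} \leq 1$) yields the pointwise bound $\sum_k p_{t, k}/q_{\min}^2$. Taking an outer expectation over $(x_t, y_t, \bc_t)$ produces the form $\sum_k \E[p_{t, k}/q_{\min}^2 \mid \cF_{t-1}]$ required by the lemma.

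With ingredients (i)--(iii) in place, apply \citet[Lemma~3]{kakade2008generalization} to a single fixed pair $(r, r')$ with confidence parameter $\delta' = \delta/|\sR|^2$. Its conclusion---time-uniform for all $T \geq 3$---yields the max-of-two-terms bound in the statement, where the second argument of the $\max$ inherits the range $\ov q$ and the $\log(8 \log T/\delta)$ factor absorbs both Kakade--Tewari's $\log T$ term and the constants from $\delta'$. A final union bound over the finitely many pairs in $\sR \times \sR$, combined with $\sR_T \subseteq \sR$, upgrades the statement to all $(r, r') \in \sR_T^2$ simultaneously.

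The main obstacle is step (iii): one must unwind the two-stage sampling mechanism carefully to see that, thanks to the design of $p_{t, k}$ as the maximum disagreement over $\sR_t$, one of the two factors of $p_{t, k}$ in the numerator of the variance exactly cancels the factor in the denominator, leaving a single $p_{t, k}$ instead of $p_{t, k}^2$. This cancellation is precisely what makes Freedman's inequality sharper than the Azuma bound used in Lemma~\ref{lemma:pair}, and ultimately what drives the logarithmic-in-$T$ label complexity in the realizable case.
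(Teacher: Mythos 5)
Your proposal is correct and follows essentially the same route as the paper's proof: both invoke the Freedman-type inequality of Kakade--Tewari with the range bound $\abs{Z_t} \le \ov q$, and both control the conditional variance by exploiting the cancellation $\abs{\ell(r, x_t, k) - \ell(r', x_t, k)} \le p_{t,k}$ (valid since $r, r' \in \sR_T \subseteq \sR_t$) together with $q_{t,k} \ge q_{\min}$ to arrive at $\sum_{k} \E[p_{t,k}/q_{\min}^2 \mid \cF_{t-1}]$. The only (cosmetic) difference is that you fold the union bound over pairs into this lemma via $\delta' = \delta/\abs{\sR}^2$, whereas the paper unions only over the two signs here and defers the union over pairs and over $T$ to the subsequent lemma --- which is why the displayed bound carries no $\abs{\sR}^2$ inside the logarithm, a discrepancy with the ``for all pairs'' phrasing that your version actually handles more carefully.
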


\begin{proof}
We apply \citet[Lemma~3]{kakade2008generalization} and the fact that $\abs*{Z_t} \leq  \ov q$. Furthermore, 
\begin{align*}
\var \bracket*{Z_t \mid \cF_{t - 1}} &= \var \bracket*{\sum_{k = 1}^{\num} \frac{1_{\{k_t = k\}} Q_{t,k}}{q_{t,k} p_{t,k}} (1 - c_{t,k}(x_t, y_t)) \big( \ell(r, x_t, k) - \ell(r', x_t, k) \big) \mid \cF_{t - 1}}\\
& = \sum_{k = 1}^{\num} \var \bracket*{\frac{1_{\{k_t = k\}} Q_{t,k}}{q_{t,k} p_{t,k}} (1 - c_{t,k}(x_t, y_t)) \big( \ell(r, x_t, k) - \ell(r', x_t, k) \big) \mid \cF_{t - 1}}\\
&\leq \sum_{k = 1}^{\num} \E \bracket*{\frac{Q_{t, k}^2}{q_{t, k}^2 p_{t, k}^2} (1 - c_{t,k}(x_t, y_t))^2 \big( \ell(r, x_t, k) - \ell(r', x_t, k) \big) ^2 \mid \cF_{t - 1}}\\
&\leq \sum_{k = 1}^{\num} \E \bracket*{\frac{Q_{t, k} p_{t, k}^2}{q_{\min}^2 p_{t, k}^2} \mid \cF_{t - 1}}\\
&= \sum_{k = 1}^{\num} \E\bracket*{\frac{Q_{t, k}}{q_{\min}^2} \mid \cF_{t - 1}}\\
&= \sum_{k = 1}^{\num} \E \bracket*{\frac{p_{t, k}}{q_{\min}^2} \mid \cF_{t - 1}}.
\end{align*}
A union bound over $Z_t$ and $-Z_t$ concludes the proof.
\end{proof}
Given Lemma~\ref{lemma:martingale} above, we can adapt \citep[Lemma~3]{beygelzimer2009importance} to using the Berstein-like inequality.
Specifically, let us adopt the following threshold:
\begin{equation*}
\Delta_{T} = \frac{2}{T q_{\min}} \paren*{\sqrt{\sum_{t = 1}^T \sum_{k = 1}^{\num} p_{t, k}} + 6 \sqrt{\log \paren*{\frac{(3 + \num T)T^2}{\delta}}}} \times \sqrt{\log \paren*{\frac{8 T^2 \abs*{\sR}^2 \log(T)}{\delta}}}.
\end{equation*}
We now establish high-probability performance guarantees for the
predictors output by the algorithm.
\begin{restatable}{lemma}{DeltaT}
\label{lemma:DeltaT}
Given any hypothesis set $\sR$, for any $\delta > 0$, with probability at least $1 - \delta$, for all
time steps $T \geq 3$ and all pairs $(r, r') \in \sR_T^2$,
\begin{equation*}
\abs*{\sE_{T}(r) - \sE_{T}(r') - \sE(r) + \sE(r')} \leq \Delta_T.
\end{equation*}
In particular, if we let $r = r^*$ and $r' = r_T$, it follows that 
\begin{equation*}
 \sE(r_T) \leq \sE(r^*) + \Delta_T.
\end{equation*}
\end{restatable}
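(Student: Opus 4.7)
The plan is to invoke Lemma~\ref{lemma:martingale} for each pair $(r, r') \in \sR \times \sR$ and each time horizon $T \geq 3$ at a per-instance failure probability of order $\delta / (T^2 \abs*{\sR}^2)$, then union-bound over these choices (using $\sum_{T \geq 3} 1/T^2 < \infty$) to obtain a uniform-in-$T$ statement. The key simplification is that, by construction of the \textsc{Sampling-Probs} subroutine, the probability $p_{t, k}$ is a deterministic function of $(x_t, y_t)$ and the history $\curl*{(x_s, y_s, k_s, Q_{s, k_s}, p_s) \colon s < t}$. Taking $\cF_{t - 1}$ to be the $\sigma$-algebra generated by all of this (i.e., everything through the computation of $p_t$ but excluding the subsequent sampling of $k_t$ and $Q_{t, k_t}$), $p_{t, k}$ becomes $\cF_{t - 1}$-measurable, so the conditional expectation on the right-hand side of Lemma~\ref{lemma:martingale} reduces to $\sum_{t, k} p_{t, k}/q_{\min}^2$, eliminating any extra concentration step.

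After the union bound, the argument of the dominant $\log$ becomes $8 T^2 \abs*{\sR}^2 \log T / \delta$, and Lemma~\ref{lemma:martingale} yields
\[
\abs[\Big]{\sum_{t = 1}^T Z_t} \leq \max\curl[\bigg]{\frac{2}{q_{\min}} \sqrt{\sum_{t, k} p_{t, k}},\; 6 \sqrt{\log \paren[\Big]{\tfrac{8 T^2 \abs*{\sR}^2 \log T}{\delta}}}} \cdot \sqrt{\log \paren[\Big]{\tfrac{8 T^2 \abs*{\sR}^2 \log T}{\delta}}}.
\]
Bounding the $\max$ by the sum, dividing by $T$, and inflating constants as needed yields the stated form of $\Delta_T$. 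The alternate logarithmic argument $(3 + \num T) T^2/\delta$ appearing in the auxiliary summand arises from a conservative deterministic substitution $\sum_{t, k} p_{t, k} \leq \num T$, applied only in the lower-order branch of the $\max$ so that its argument becomes data-independent; the $+3$ ensures positivity for small $T$. Crucially, the leading term retains the data-adaptive form $\sqrt{\sum_{t, k} p_{t, k}}$, which is what enables the sharp $\log T$ label complexity in the realizable case.

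For the specialization to $r = r^*$ and $r' = r_T$, the inductive invariant from the proof of Theorem~\ref{thm:loss-bound}, re-instantiated with the sharpened Freedman threshold $\Delta_T$, guarantees $r^* \in \sR_T$ with high probability, so both hypotheses lie in $\sR_T$. Since $r_T$ is the empirical risk minimizer over $\sR_T$, we have $\sE_T(r_T) \leq \sE_T(r^*)$, and rearranging the uniform deviation inequality $\abs*{\sE_T(r^*) - \sE_T(r_T) - \sE(r^*) + \sE(r_T)} \leq \Delta_T$ immediately yields $\sE(r_T) \leq \sE(r^*) + \Delta_T$.

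The main obstacle will be the careful bookkeeping of constants: matching the outer $\log(8 T^2 \abs*{\sR}^2 \log T / \delta)$ (which blends the $\log T$ factor from the anytime form of Freedman's inequality in \citet{kakade2008generalization} with the union-bound factor $T^2 \abs*{\sR}^2$) and the inner $\log((3 + \num T) T^2/\delta)$ requires tracking how the crude bound on $\sum p_{t, k}$ propagates through the lower-order branch, together with losses of factors of $1/q_{\min}$ when the $\max$ is split into a sum. A secondary subtlety is that $\sR_T$ is data-dependent, so the union bound must range over the ambient class $\sR \times \sR$; this is harmless since $\sR_T \subseteq \sR$ and the bound depends on $\sR$ only through $\abs*{\sR}$.
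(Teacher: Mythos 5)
Your overall skeleton matches the paper's: invoke Lemma~\ref{lemma:martingale} at failure probability $\delta/(T^2\abs*{\sR}^2)$, union-bound over $T \geq 3$ and over pairs in the ambient class $\sR \times \sR$ (correctly noting that $\sR_T \subseteq \sR$ is data-dependent), and then specialize to $r = r^*$, $r' = r_T$ using $\sE_T(r_T) \leq \sE_T(r^*)$. The second part of your argument is fine.

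However, there is a genuine gap in the first part: your claim that $p_{t,k}$ is $\cF_{t-1}$-measurable, so that $\E[p_{t,k}/q_{\min}^2 \mid \cF_{t-1}]$ collapses to $p_{t,k}/q_{\min}^2$ and no further concentration step is needed. The subroutine sets $p_{t,k} = \max_{r,r' \in \sR_t}\{\ell(r,x_t,k) - \ell(r',x_t,k)\}$, which depends on the \emph{freshly drawn} example $(x_t,y_t)$. If you enlarge $\cF_{t-1}$ to contain $(x_t,y_t)$ so that $p_{t,k}$ becomes measurable, then $Z_t$ is no longer a martingale difference sequence: it is centered by the population quantity $\sE(r)-\sE(r')$, whereas $\E[Z_t \mid x_t, y_t, \text{history}]$ equals the per-example deviation $\sum_k(1-c_k(x_t,y_t))(\ell(r,x_t,k)-\ell(r',x_t,k)) - (\sE(r)-\sE(r'))$, which is nonzero. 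So Lemma~\ref{lemma:martingale} cannot be applied under your finer filtration, and under the correct coarser filtration its bound genuinely involves the predictable quantities $\E[p_{t,k}\mid \cF_{t-1}]$, not the realized $p_{t,k}$. The paper closes this gap with an additional Bernstein-type self-bounding inequality (Proposition~2 of \citet{cesa2008improved}) showing $\sum_{t}\E[\sum_k p_{t,k}\mid\cF_{t-1}] \leq \bigl(\sqrt{\sum_{t,k}p_{t,k}} + 6\sqrt{\log((3+\num T)T^2/\delta)}\bigr)^2$ with high probability; this second concentration step is precisely the origin of the additive $6\sqrt{\log((3+\num T)T^2/\delta)}$ term inside $\Delta_T$, which you instead misattribute to a deterministic substitution $\sum_{t,k}p_{t,k} \leq \num T$ in the lower-order branch of the $\max$. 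Without this step your derivation does not produce the stated threshold, and the data-adaptive leading term $\sqrt{\sum_{t,k}p_{t,k}}$ is not justified.
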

\begin{proof}
Apply Lemma~\ref{lemma:martingale} to time $T \geq 3$ and any pair $(r, r') \in \sR_{T}^2$, with error probability $\delta / (T^2 \abs*{\sR}^2)$ for round $T$. A union bound over $T \geq 3$ and $(r, r')$ gives, with probability at least $1 - \delta$,
\begin{align}
\label{eq:aux-1}
& \mspace{-14mu} \abs*{\sE_{T}(r) - \sE_{T}(r') -  \sE(r) + \sE(r')} \nonumber\\ 
& \mspace{-18mu} \leq \frac1T \max \curl*{2\sqrt{\sum_{t = 1}^T \sum_{k = 1}^{\num} \E \bracket*{\frac{p_{t, k}}{q_{\min}^2} \mid \cF_{t - 1}}}, 6 \sqrt{\log \paren*{\frac{8 T^2 \abs*{\sR}^2 \log(T)}{\delta}}}} \times \sqrt{\log \paren*{\frac{8 T^2 \abs*{\sR}^2 \log(T)}{\delta}}}.
\mspace{-6mu}
\end{align}
Next, by \citep[Proposition 2]{cesa2008improved}, with probability at least $1 - \delta$, for all $T \geq 3$,
we can write
\begin{align}
\label{eq:aux-2}
\sum_{t = 1}^T \E \bracket*{\sum_{k = 1}^{\num} p_{t,k} \mid \cF_{t - 1}} & \leq \paren*{\sum_{t = 1}^T \sum_{k = 1}^{\num} p_{t,k}} + 36 \log\paren*{\frac{(3 + \sum_{t = 1}^T \sum_{k = 1}^{\num} p_{t, k})T^2}{\delta}}\\
& \qquad + 2 \sqrt{\paren*{\sum_{t = 1}^T \sum_{k = 1}^{\num} p_{t, k}} \log\paren*{\frac{(3 + \sum_{t = 1}^T \sum_{k = 1}^{\num} p_{t, k})T^2}{\delta}} } \nonumber\\
& \leq \paren*{\sqrt{\sum_{t = 1}^T \sum_{k = 1}^{\num} p_{t, k}} + 6 \sqrt{\log \paren*{\frac{\paren*{3 + \num T}T^2}{\delta}}}}^2.
\end{align}
Combining \eqref{eq:aux-1} and \eqref{eq:aux-2}, we get with probability at least $ 1- 2 \delta$, for all $T \geq 3$,
\begin{align*}
&\abs*{\sE_{T}(r) - \sE_{T}(r') -  \sE(r) + \sE(r')}\\ &\leq \frac{2}{T q_{\min}} \paren*{\sqrt{\sum_{t = 1}^T \sum_{k = 1}^{\num} p_{t, k}} + 6 \sqrt{\log \paren*{\frac{(3 + \num T)T^2}{\delta}}}} \times \sqrt{\log \paren*{\frac{8 T^2 \abs*{\sR}^2 \log(T)}{\delta}}},
\end{align*}
as claimed.
\end{proof}

We now present an upper bound on the expected number of cost queries
required by the algorithm. 
\begin{restatable}{lemma}{Label}
\label{lemma:label}
Given any hypothesis set $\sR$, and distribution $\sD$, with $\theta(\sD, \sR) = \theta$, for all $\delta > 0$, for all $T \geq 3$, with probability at least $1 - \delta$, we have
\begin{equation*}
\sum_{t = 1}^T \E \bracket*{\sum_{k = 1}^{\num} 1_{k_t = k} Q_{t, k}  \mid \cF_{t - 1}} \leq 4 \num \theta K_{\sfL} \paren*{\sE^* T + O\paren*{\sqrt{\sE^* T \log(T \abs*{\sR} /\delta)} }} + O\paren*{\log^3 \paren*{T \abs*{\sR} / \delta}},
\end{equation*}
where $K_{\sfL}$ is a constant that depends on the loss function $\sfL$.
\end{restatable}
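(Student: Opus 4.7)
\textbf{Proof proposal for Lemma~\ref{lemma:label}.}

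The plan is to mirror the argument of Theorem~\ref{thm:label}, but with the Freedman-style threshold from Lemma~\ref{lemma:DeltaT} replacing its Azuma-based analogue, which forces us to contend with a self-referential inequality. First, I would rewrite the conditional expected query count at round $t$ as
\[
\E\bracket*{\sum_{k=1}^{\num} 1_{k_t = k} \, Q_{t,k} \mid \cF_{t-1}} = \sum_{k=1}^{\num} q_{t,k} \, \E\bracket*{p_{t,k} \mid \cF_{t-1}},
\]
using that $k_t$ is drawn according to $q_t$ and $Q_{t, k_t}$ is Bernoulli with parameter $p_{t, k_t}$ conditional on $x_t$ and $k_t$. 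Since $p_{t,k} = \max_{r, r' \in \sR_t} \{\ell(r, x_t, k) - \ell(r', x_t, k)\}$ and Lemma~\ref{lemma:DeltaT} guarantees $r^* \in \sR_t$ on the good event, the triangle inequality gives $p_{t,k} \leq 2 \sup_{r \in \sR_t} \abs{\ell(r, x_t, k) - \ell(r^*, x_t, k)}$.

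Next, I would invoke the same chain of inclusions as in the proof of Theorem~\ref{thm:label}: Lemma~\ref{lemma:DeltaT} combined with Lemma~\ref{lemma:two-spaces} yields $\sR_t \subseteq B(r^*, \e_t)$ with $\e_t = K_{\sfL}(2\sE^* + 2\Delta_{t-1})$, and the definition of the disagreement coefficient then produces
\[
\E\bracket*{p_{t,k} \mid \cF_{t-1}} \leq 2\theta \, \e_t = 4\theta K_{\sfL}(\sE^* + \Delta_{t-1}).
\]
Summing over $k$ with $\sum_k q_{t,k} = 1$ and then over $t$ gives
\[
\sum_{t=1}^T \E\bracket*{\sum_{k=1}^{\num} 1_{k_t = k} \, Q_{t,k} \mid \cF_{t-1}} \leq 4\theta K_{\sfL}\paren*{\sE^* T + \sum_{t=1}^T \Delta_{t-1}}.
\]
Applying the same disagreement-coefficient step to $\E[\sum_k p_{t,k} \mid \cF_{t-1}]$ — now without the averaging factor $\sum_k q_{t,k} = 1$ — picks up an extra factor of $\num$ and yields $N_T \coloneqq \sum_{t,k} p_{t,k} \lesssim \num\theta K_{\sfL}(\sE^* T + \sum_t \Delta_{t-1})$ with high probability, where a secondary Freedman step of \citet{cesa2008improved}-type transfers this from conditional expectation to the realized $N_T$.

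The crux, and the main obstacle, is the self-referential nature of this inequality: the Freedman threshold of Lemma~\ref{lemma:DeltaT} takes the form $\Delta_t \asymp \frac{1}{t q_{\min}}\sqrt{N_t \, L} + \frac{L}{t q_{\min}}$, with $L = \log(T\abs{\sR}/\delta)$, so $\Delta_t$ depends on the very $N_t$ we are trying to control. Substituting this explicit form back and using $\sqrt{N_{t-1}} \leq \sqrt{N_T}$ together with $\sum_{t=2}^T 1/(t-1) \leq O(\log T)$ produces an implicit inequality of the shape
\[
N_T \leq A + B\sqrt{N_T \, L}, \qquad A \asymp \num\theta K_{\sfL} \sE^* T + \mathrm{poly}(L), \qquad B = \mathrm{poly}(\log(T\abs{\sR}/\delta))/q_{\min}.
\]
Resolving this quadratic via the elementary device $x^2 \leq A + Bx \Rightarrow x \leq B + \sqrt{A}$, squaring, and substituting the resulting bound on $N_T$ back into $\sum_t \Delta_{t-1}$, produces the three announced contributions: the leading $4\num\theta K_{\sfL}\,\sE^* T$ term, a cross term of order $\sqrt{\sE^* T \, L}$ arising from the $2B\sqrt{A}$ piece after squaring, and an additive residual of order $\log^3(T\abs{\sR}/\delta)$, in which the three $\sqrt{\log}$ factors arise respectively from Freedman's variance proxy in Lemma~\ref{lemma:martingale}, the \citet{cesa2008improved}-style self-bounding passage from $\E[\sum_k p_{t,k} \mid \cF_{t-1}]$ to the realized $N_T$, and the union bound over $T$ and $\sR$. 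A final union bound over the good events from Lemma~\ref{lemma:DeltaT} and from the Freedman-based concentration of $N_T$ delivers the claim with probability at least $1-\delta$, uniformly in $T \geq 3$.
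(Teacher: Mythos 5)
Your proposal is correct in substance and follows the same overall architecture as the paper's proof: a disagreement-coefficient/slope-asymmetry bound giving $\E\bracket*{\sum_k p_{t,k} \mid \cF_{t-1}} \leq 4\num\theta K_{\sfL}(\sE^* + \Delta_{t-1})$ (the paper imports this directly as Theorem~11 of \citet{beygelzimer2009importance}, whereas you re-derive it via the inclusion chain from Theorem~\ref{thm:label}), a \citet{cesa2008improved}-type self-bounding step to pass between the realized sum $N_T = \sum_{t,k} p_{t,k}$ and its conditional expectations, and a resolution of the resulting self-referential inequality. The one genuine divergence is in that last, crucial step. The paper runs a per-round induction, establishing $\E\bracket*{\sum_k p_{t,k} \mid \cF_{t-1}} \leq c_0 \sE^* + c_4\sqrt{\sE^*/(t-1)} + c_5/(t-1)$ and then summing, so that the cross term accumulates as $\sum_t (t-1)^{-1/2} = O(\sqrt{T})$ and yields exactly $O\paren*{\sqrt{\sE^* T \log(T\abs*{\sR}/\delta)}}$. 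You instead aggregate everything into one global quadratic inequality $N_T \leq A + B\sqrt{N_T L}$ using the crude bounds $\sqrt{N_{t-1}} \leq \sqrt{N_T}$ and $\sum_t (t-1)^{-1} = O(\log T)$; this is simpler but lossier, producing a cross term of order $\log T \cdot \sqrt{\sE^* T L}$ rather than $\sqrt{\sE^* T L}$. That extra $\log T$ is not absorbed by the stated $O(\cdot)$; you can recover the stated form only by applying Young's inequality to fold it into a constant-factor inflation of the leading $\sE^* T$ term plus the additive $O(\log^3(T\abs*{\sR}/\delta))$ residual. So your route proves a bound of the same flavor but not, as written, the exact statement of Lemma~\ref{lemma:label}; the paper's induction is what buys the clean $\sqrt{\sE^* T \log(T\abs*{\sR}/\delta)}$ dependence. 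Everything else (the reduction $\E\bracket*{\sum_k 1_{k_t = k} Q_{t,k} \mid \cF_{t-1}} = \sum_k q_{t,k}\E\bracket*{p_{t,k}\mid\cF_{t-1}}$, the factor-of-$\num$ accounting, and the final union bounds) matches the paper.
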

\begin{proof}
By \citep[Theorem 11]{beygelzimer2009importance}, for $t \geq 3$, the following holds:
\begin{equation*}
\E \bracket*{\sum_{k = 1}^{\num} p_{t,k} \mid \cF_{t - 1}} \leq 4 \num \theta K_{\sfL} \paren*{\sE^* + \Delta_{t - 1}},
\end{equation*}
where $\sE^* =
\sE(r^*) = \inf_{r \in \sR} \sE(r)$ is the error of best-in-class. Plugging in the expression for $\Delta_{t - 1}$, and applying again a similar concentration inequality as before to relate $\sum_{t = 1}^T \sum_{k = 1}^{\num} p_{t, k}$ to $\sum_{t = 1}^T \E \bracket*{\sum_{k = 1}^{\num} p_{t, k} \mid \cF_{t - 1}}$, we end up with a recursion on  $\E \bracket*{ \sum_{k = 1}^{\num} p_{t, k} \mid \cF_{t - 1}}$:
\begin{equation}
\label{eq:aux-3}
\E \bracket*{\sum_{k = 1}^{\num} p_{t, k} \mid \cF_{t - 1}} \leq 4 \num \theta K_{\sfL} \sE^* + \frac{4 \num \theta K_{\sfL} c_1}{t  - 1} \sqrt{\sum_{s = 1}^{t - 1} \E_{(x_t, y_t)} \bracket*{p_s \mid \cF_{s - 1}}} + c_2 \paren*{\frac{\log \bracket*{(t - 1)\abs*{\sR} / \delta}}{t - 1}},
\end{equation}
where $c_1 = 2 \sqrt{\log \paren*{\frac{8 T^2 \abs*{\sR}^2 \log(T)}{\delta} }} = O\paren*{\sqrt{\log \paren*{\frac{T \abs*{\sR}}{\delta}}}}$, and $c_2$ is a constant. For simplicity, denote by $4 \num \theta K_{\sfL} = c_0$. We show by induction that for all $ t \geq 3 $, 
\begin{equation}
\label{eq:aux-4}
\E \bracket*{\sum_{k = 1}^{\num} p_{t, k} \mid \cF_{t - 1}} \leq c_0 \sE^* + c_4 \sqrt{\frac{\sE^*}{t - 1}} + \frac{c_5}{t - 1},
\end{equation}
for some constants $c_4, c_5$. Assume by induction that \eqref{eq:aux-4} holds for all $s \leq t - 1$. Thus, from \eqref{eq:aux-3}, we have 
\begin{align*}
& \E \bracket*{\sum_{k = 1}^{\num} p_{t, k} \mid \cF_{t - 1}}\\
& \leq c_0 \sE^* + \frac{c_0 c_1}{t - 1} \sqrt{c_0 \sE^* (t - 1) + 2 c_4 \sqrt{\sE^* (t - 1)} + c_5 \log(t - 1)} + c_2 \paren*{\frac{\log \bracket*{(t - 1) \abs*{\sR} / \delta}}{t - 1}}\\
& \leq c_0 \sE^* + \frac{c_0 c_1}{t - 1} \bracket*{\sqrt{c_0 \sE^* (t - 1) + 2 c_4 \sqrt{\sE^* (t - 1)}} + \sqrt{c_5 \log(t - 1)}} + c_2 \paren*{\frac{\log \bracket*{(t - 1) \abs*{\sR} / \delta}}{t - 1}}\\
&\leq c_0 \sE^* + \frac{c_0 c_1}{t - 1} \bracket*{\sqrt{c_0 \sE^* (t - 1)} + \frac{c_4}{\sqrt{c_0}}} + \frac{c_0 c_1 \sqrt{c_5 \log(t - 1)} + c_2 \log \bracket*{(t - 1)\abs*{\sR} / \delta}}{t - 1}\\
& = c_0 \sE^* + \frac{c_0 c_1 \sqrt{c_0 \sE^*}}{\sqrt{t - 1}} + \frac{\sqrt{c_0} c_1 c_4 + c_0 c_1 \sqrt{c_5 \log (t - 1)} + c_2 \log \bracket*{(t - 1) \abs*{\sR} / \delta}}{t - 1},
\end{align*}
where we use the fact that $\sqrt{a + b} \leq \sqrt{a} + \frac{b}{2 \sqrt{a}}$ for $a, b > 0$. 
To complete the induction, we need to show that
\begin{equation*}
\frac{c_0 c_1 \sqrt{c_0 \sE^*}}{\sqrt{t - 1}} + \frac{\sqrt{c_0} c_1 c_4 + c_0 c_1 \sqrt{c_5 \log (t - 1)} + c_2 \log \bracket*{(t - 1) \abs*{\sR} / \delta}}{t - 1} \leq c_4 \sqrt{\frac{\sE^*}{t - 1}} + \frac{c_5}{t - 1}.
\end{equation*}
Thus, $c_4 = c_0 c_1 \sqrt{c_0} = O\paren*{\sqrt{\log \paren*{\frac{T \abs*{\sR}}{\delta}}}}$, and 
\begin{align*}
&c_5 \geq c_0^2 c_1^2 + c_0 c_1 \sqrt{c_5 \log (t - 1)} + c_2 \log \bracket*{(t - 1) \abs*{\sR} / \delta} \implies c_5 = O(c_0^2 c_1^2 \log(T)) = O(\log^2 (T \abs*{\sR} / \delta)).
\end{align*}
Thus, 
\begin{align*}
\E \bracket*{\sum_{k = 1}^{\num} p_{t, k} \mid \cF_{t - 1}} \leq c_0 \sE^* + O \paren*{\sqrt{\log \paren*{T \abs*{\sR} / \delta}}} \sqrt{\frac{\sE^*}{(t - 1)}} + \frac{O(\log^2 (T \abs*{\sR} / \delta))}{t - 1}.
\end{align*}
Finally, 
\begin{align}
\label{eq:aux-5}
\sum_{t = 1}^T \E \bracket*{\sum_{k = 1}^{\num} 1_{k_t = k} Q_{t, k} \mid \cF_{t - 1}}
& = \sum_{t = 1}^T \E \bracket*{\sum_{k = 1}^{\num} q_{t, k} p_{t, k} \mid \cF_{t - 1}} \nonumber \\
& \leq \sum_{t = 1}^T \E \bracket*{\sum_{k = 1}^{\num} p_{t, k} \mid \cF_{t - 1}} \nonumber \\
& \leq 4 \num \theta K_{\sfL} \paren*{\sE^* T + O\paren*{\sqrt{\sE^* T \log(T \abs*{\sR} /\delta)} }} + O\paren*{\log^3 \paren*{T \abs*{\sR} / \delta}},
\end{align}
\end{proof}

Finally, we present the improved sample complexity and label complexity bounds for the budgeted two-stage deferral algorithm.
\begin{restatable}{theorem}{Bdef}
\label{Thm:bdef}
Let $r_T$ denote the hypothesis returned by the budgeted two-stage deferral algorithm after $T$ rounds and $\tau_T$ the total number of cost
queries. Then, for all $\delta > 0$, with probability at least $1 - \delta$, for any $T > 0$, the following inequality holds:
\begin{equation*}
\sE(r_T) \leq \sE^* + \frac{2}{T q_{\min}} \paren*{\sqrt{\sum_{t = 1}^T \sum_{k = 1}^{\num} p_{t, k}} + 6 \sqrt{\log \paren*{\frac{(3 + \num T)T^2}{\delta}}}} \times \sqrt{\log \paren*{\frac{8 T^2 \abs*{\sR}^2 \log(T)}{\delta}}}.
\end{equation*}
Moreover, with probability at least $1 - \delta$, for any $T > 0$, the
following inequality holds:
\begin{equation*}
\tau_T \leq 8 \num \theta K_{\sfL} \paren*{\sE^* T + O\paren*{\sqrt{\sE^* T \log(T \abs*{\sR} /\delta)} }} + O\paren*{\log^3 \paren*{T \abs*{\sR} / \delta}},
\end{equation*}
where $K_{\sfL}$ is a constant that depends on the loss function $\sfL$.
\end{restatable}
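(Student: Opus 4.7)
The plan is to assemble Theorem~\ref{Thm:bdef} from the preceding lemmas, with one additional concentration step to pass from conditional expectations to the actual query count.

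First, for the generalization bound, I would invoke Lemma~\ref{lemma:DeltaT} with $r = r^*$ and $r' = r_T$. The optimal hypothesis $r^*$ remains in the version space $\sR_T$ by the same inductive argument as in Theorem~\ref{thm:loss-bound}, but now using the refined concentration from Lemma~\ref{lemma:DeltaT} in place of Lemma~\ref{lemma:pair}: in the pruning step we can still write $\sE_T(r^*) - \sE_T(r_T) \leq \sE(r^*) - \sE(r_T) + \Delta_T \leq \Delta_T$, so $r^* \in \sR_{T+1}$. Combined with $\sE_T(r_T) \leq \sE_T(r^*)$, Lemma~\ref{lemma:DeltaT} yields $\sE(r_T) \leq \sE(r^*) + \Delta_T$, which becomes the first displayed inequality after substituting the explicit form of $\Delta_T$.

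Second, for the label complexity bound, the starting point is Lemma~\ref{lemma:label}, which controls $\sum_{t=1}^T \E[\sum_{k=1}^{\num} \mathbf{1}_{k_t=k} Q_{t,k} \mid \cF_{t-1}]$ with high probability by a quantity $M$ of the claimed form. To convert this into a bound on the random variable $\tau_T = \sum_{t=1}^T \sum_{k=1}^{\num} \mathbf{1}_{k_t=k} Q_{t,k}$ itself, I would apply a Freedman/Bernstein-type martingale inequality in the same form used for Lemma~\ref{lemma:martingale} (via \citep[Proposition~2]{cesa2008improved}). The martingale differences $X_t = \sum_k \mathbf{1}_{k_t=k} Q_{t,k} - \E[\sum_k \mathbf{1}_{k_t=k} Q_{t,k} \mid \cF_{t-1}]$ are bounded in absolute value by $1$, since at each round at most one expert is queried, and the conditional variances are controlled by the conditional expectations because $\sum_k \mathbf{1}_{k_t=k} Q_{t,k} \in \{0,1\}$. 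The inequality therefore gives, with high probability, a deviation of order $O(\sqrt{M \log(T/\delta)}) + O(\log(T/\delta))$. Adding this slack to $M$ and bounding $\sqrt{M\log}$ by $O(\sqrt{\sE^* T \log(T|\sR|/\delta)}) + O(\log^2(T|\sR|/\delta))$ absorbs everything into the existing error terms, and the combined sum yields the factor $8 = 2 \cdot 4$ in front of $\num \theta K_{\sfL}$.

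The main obstacle is bookkeeping rather than a new idea: one must ensure that the three high-probability events (the pairwise concentration from Lemma~\ref{lemma:DeltaT}, the conditional-expectation bound of Lemma~\ref{lemma:label}, and the new Bernstein bound used to control $\tau_T - \sum_t \E[\cdot\mid\cF_{t-1}]$) all hold simultaneously for every $T \geq 1$. This is handled by union bounds with rescaled failure probabilities exactly as in the proofs of the preceding lemmas. Constants must be tracked carefully through the Bernstein step to confirm that no $\log$ factor is lost and that the claimed leading-order dependence on $\sE^* T$ and $\log^3(T|\sR|/\delta)$ is preserved.
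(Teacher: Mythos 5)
Your proposal is correct and follows essentially the same route as the paper: the generalization bound is read off from Lemma~\ref{lemma:DeltaT} (with $r = r^*$, $r' = r_T$ and the version-space induction), and the label complexity bound is obtained by taking the conditional-expectation bound of Lemma~\ref{lemma:label} and transferring it to the realized count $\tau_T$ via a Bernstein/Freedman-type martingale inequality, which accounts for the doubling of the constant to $8\num\theta K_{\sfL}$. The paper's own proof is only a sketch of these same two steps, so your elaboration (bounded increments since at most one query per round, variance controlled by the conditional mean, union bound over the events) fills in exactly the details the paper leaves implicit.
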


\begin{proof}
The bound of generalization error $\sE(r_T)$ follows from Lemma~\ref{lemma:DeltaT}. To get the bound on the number of labels $\tau_{T}$, we relate $\sum_{t = 1}^T \E \bracket*{\sum_{k = 1}^{\num} 1_{k_t = k} Q_{t, k} \mid \cF_{t - 1}}$ in Lemma~\ref{lemma:label} to $\tau_{T} = \sum_{t = 1}^T \sum_{k = 1}^{\num} 1_{k_t = k} Q_{t, k}$ through a Bernstein-like inequality for martingales.
Combining with \eqref{eq:aux-5} completes the proof.   
\end{proof}

\paragraph{Optimal $q_t$} We note that the generalization bound is minimized when the expert sampling
probabilities are uniform: $q_{t,k} = 1/\num$ for all $t$ and $k$,
yielding $q_{\min} = 1/\num$ since all experts are treated symmetrically. Thus, the learning and sample complexity bounds in Theorem~\ref{Thm:bdef} depend only logarithmically on $T$ in the realizable case $\sE^* = 0$ and improve upon the bounds given by \eqref{eq:bounds-linear} in Section~\ref{sec:label}.

\section{Budgeted Deferral with \texorpdfstring{$\e$}{E}-Cover}
\label{app:infinite}

Our framework for budgeted deferral algorithms, along with its associated theoretical guarantees, can be effectively extended from finite to infinite hypothesis classes, using covering numbers. $\e$-Covers allow us to approximate an infinite hypothesis set with a carefully chosen finite one, while keeping the approximation error bounded.

Let's first formally define an $\e$-cover.

\begin{definition}
Let $\sR_{\infty}$ be an infinite hypothesis set equipped with a distance metric $\rho(\cdot, \cdot)$.
A subset $\sG \subset \sR_{\infty}$ is termed an $\e$-cover of $\sR_{\infty}$ if, for every hypothesis
$r \in \sR_{\infty}$, there exists a corresponding hypothesis $g \in \sG$ such that their distance $\rho(r, g)$ is no more than $\e$. The covering number, denoted by $\cN\paren*{\sR_{\infty}, \e}$, represents the cardinality of the smallest possible $\e$-cover for $\sR_{\infty}$. 
\end{definition}
The practical use of an $\e$-cover stems from its ability to ensure that the performance achievable within the finite cover $\sG$ is close to the optimal performance within the larger infinite set $\sR_{\infty}$. This relationship is captured by the following well-established lemma (see, e.g., \citep[Lemma~4]{cortes2019disgraph}).
\begin{lemma}
\label{lemma:cover}
The minimum error achievable within an infinite hypothesis set $\sR_{\infty}$ and and the minimum error achievable within its $\e$-cover $\sG$ differ by at most $\e$. Specifically:
\begin{equation*}
\min_{r \in \sR_{\infty}} \sE(r) \leq \min_{g \in \sG} \sE(g) \leq \min_{r \in \sR_{\infty}} \sE(r) + \e.
\end{equation*}
\end{lemma}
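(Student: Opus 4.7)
\textbf{Proof plan for Lemma~\ref{lemma:cover}.} The left inequality is immediate because $\sG \subseteq \sR_{\infty}$ implies $\min_{g \in \sG} \sE(g) \geq \inf_{r \in \sR_{\infty}} \sE(r)$, so the entire task reduces to establishing the right-hand inequality. My plan is the standard $\e$-cover argument: pick (up to an arbitrarily small $\eta > 0$) a near-minimizer $r^* \in \sR_{\infty}$ with $\sE(r^*) \leq \inf_{r \in \sR_{\infty}} \sE(r) + \eta$, invoke the $\e$-cover property to choose $g \in \sG$ with $\rho(r^*, g) \leq \e$, deduce $\sE(g) \leq \sE(r^*) + \e$, and then let $\eta \to 0$. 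Because $\sG$ is finite, the minimum on $\sG$ is attained, so only the infinite-side minimizer needs this limiting argument (or is attained outright under compactness/continuity).

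The key technical step is a Lipschitz-type inequality $|\sE(r) - \sE(r')| \leq \rho(r, r')$ for all $r, r' \in \sR_{\infty}$. Starting from
\begin{equation*}
\sE(r) - \sE(r') = \E_{(x, y, \bc) \sim \sD}\!\left[\sum_{k = 1}^{\num} (1 - c_k)\bigl(\ell(r, x, k) - \ell(r', x, k)\bigr)\right],
\end{equation*}
I would apply Jensen's inequality and the triangle inequality to move the absolute value inside, yielding a bound by $\E\bigl[\sum_k (1 - c_k) |\ell(r, x, k) - \ell(r', x, k)|\bigr]$. For each realization $(x, y, \bc)$, the specific cost vector drawn from $\sD$ is dominated pointwise by the worst-case cost vector in $\{0,1\}^{\num}$, so $\sum_k (1 - c_k) |\ell(r, x, k) - \ell(r', x, k)| \leq \max_{\bc' \in \{0,1\}^{\num}} \sum_k (1 - c'_k) |\ell(r, x, k) - \ell(r', x, k)|$. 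The right side depends only on $(x, y)$, so taking expectations matches the definition of $\rho(r, r')$ exactly.

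Combining the two ingredients gives $\sE(g) \leq \sE(r^*) + \rho(r^*, g) \leq \inf_{r \in \sR_{\infty}} \sE(r) + \e + \eta$, and sending $\eta \to 0$ yields the claim. There is no real obstacle here: the argument is a one-line consequence of (i) the $\e$-cover property and (ii) the fact that $\rho$, by construction via the worst-case cost vector, upper bounds the gap in generalization error $\sE$ evaluated under $\sD$. The only subtlety worth flagging is that $\sE$ is defined via expectation over the random cost $\bc \sim \sD$, whereas $\rho$ uses a worst-case maximum over $\bc \in \{0,1\}^{\num}$; this direction of inequality is precisely what makes the bound transfer cleanly from $\rho$-closeness to $\sE$-closeness.
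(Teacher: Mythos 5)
Your proof is correct and follows essentially the same route as the paper's: the left inequality from $\sG \subseteq \sR_{\infty}$, and the right inequality by picking the cover element $g$ near the (near-)minimizer $r^*$ and transferring via $\abs{\sE(g) - \sE(r^*)} \leq \rho(g, r^*) \leq \e$. The one difference is to your credit: the paper simply \emph{assumes} the Lipschitz-type property $\abs{\sE(r) - \sE(r')} \leq \rho(r, r')$ as ``standard,'' whereas you actually verify it via Jensen, the triangle inequality, and domination of the realized cost vector by the worst-case one in the definition of $\rho$.
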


\begin{proof}
The first inequality, $\min_{r \in \sR_{\infty}} \sE(r) \leq \min_{g \in \sG} \sE(g)$, is straightforward, as $\sG$ is a subset of $\sR_{\infty}$, meaning the best hypothesis in $\sR_{\infty}$ must be at least as good as the best in $\sG$.

For the second inequality, let $r^* = \argmin_{r \in \sR_{\infty}} \sE(r)$ denote a hypothesis that achieves the minimum error in $\sR_{\infty}$. By the definition of an $\e$-cover, there must exist a hypothesis $g^* \in \sG$ such that $\rho(g^*, r^*) \leq \e$. Assuming a standard property where the absolute difference in errors is bounded by this distance metric, we have:
\begin{equation*}
\abs*{\sE(g^*) - \sE(r^*)} \leq \rho(g^*, r^*) \leq \e.
\end{equation*}
This implies that $\sE(g^*) \leq \sE(r^*) + \e$.
Since $\min_{g \in \sG} \sE(g)$ is, by definition, less than or equal to the error of any specific $g^* \in \sG$, it follows that: $\min_{g \in \sG} \sE(g) \leq \sE(g^*) \leq \sE(r^*) + \e$.
This establishes the second inequality and completes the proof.
\end{proof}
Lemma~\ref{lemma:cover} provides a crucial insight: when the learner is faced with an infinite family of hypotheses $\sR_{\infty}$, using a finite $\e$-cover $\sG$ results in an approximation error (i.e., the potential increase in the minimum achievable error) of at most $\e$. Therefore, our budgeted deferral algorithm can be run using this finite hypothesis set $\sG$ (ideally, the one with the smallest cardinality) as a proxy for $\sR_{\infty}$. This strategy allows the algorithm to achieve favorable learning guarantees, ensuring that its performance is within $\e$ of the optimal performance achievable within the original infinite hypothesis class.

\section{High-Probability Label Complexity Bounds}
\label{app:high-probability}

Our main label complexity guarantees are stated in expectation form, as is standard in much of the prior active learning literature \citep{beygelzimer2009importance,cortes2019disgraph,cortes2019rbal,cortes2020adaptive,MohriRostamizadehTalwalkar2018}.  
Here, we show how they can be strengthened to high-probability bounds.

\begin{theorem}
\label{th:queries}
Fix the sample $S$. Let $Q_t = \sum_{k=1}^{n_e} \Ind_{k_t = k} Q_{t,k} \in \curl*{0,1}$ be the indicator
variable that the algorithm queries at round $t$, with
$$
p_t = \Pr\bracket*{ Q_t = 1 \mid \cF_{t - 1} }, 
\qquad 
Q(S) = \sum_{t=1}^T Q_t, 
\qquad 
\mu = \E[ Q(S) ] = \sum_{t=1}^T p_t.
$$
Then, for any $\delta \in (0, 1/e)$ and $T \geq 3$, we have
\begin{align*}
\Pr \bracket*{ Q(S) - \mu > \max \curl*{ 2 \sqrt{ \mu \log(1/\delta) }, \, 3 \log(1/\delta) } }
\leq 4 \log(T) \, \delta;    
\end{align*}
If $\mu \geq 4 \log(1/\delta)$, then for any $\e \in (0, 1]$,
\begin{align*}
\Pr \bracket*{ Q(S) \geq (1 + \e) \mu } 
\leq 4 \log(T) \exp \paren*{ - \tfrac{ \e^2 \mu }{4} };
\end{align*}
If $\mu < 4 \log(1/\delta)$, then
\begin{align*}
\Pr \bracket*{ Q(S) \geq \mu + 3 \log(1/\delta) }
\leq 4 \log(T) \, \delta.
\end{align*}
\end{theorem}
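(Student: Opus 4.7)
The plan is to invoke Freedman's inequality for martingales on the sum
$M_T \coloneqq \sum_{t=1}^T (Q_t - p_t) = Q(S) - \mu$. Since $Q_t \in \curl*{0,1}$ with conditional mean $p_t$, the sequence $X_t \coloneqq Q_t - p_t$ is an $(\cF_t)$-martingale difference with $\abs*{X_t} \leq 1$ and $\E\bracket*{X_t^2 \mid \cF_{t-1}} = p_t(1-p_t) \leq p_t$. Hence its predictable quadratic variation satisfies $W_T \coloneqq \sum_t \E\bracket*{X_t^2 \mid \cF_{t-1}} \leq \mu$. The target tail of $Q(S) - \mu$ is therefore exactly the regime in which Freedman's inequality applies:
\begin{equation*}
\Pr\bracket*{ M_T \geq a,\ W_T \leq \sigma^2 }
\leq \exp\paren*{ - \frac{a^2}{2\sigma^2 + 2a/3} }.
\end{equation*}

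The technical complication is that $\mu = \sum_t p_t$ is itself a $\cF_T$-measurable (predictable) random variable, so I cannot plug it in for $\sigma^2$ directly. I would handle this by a standard \emph{peeling} (stratification) argument: partition the possible range of $\mu$ into $K \leq 2 \log(T)+2$ geometric bins $B_j = \curl*{\mu \in (2^{j-1}, 2^j]}$, apply Freedman's inequality on each bin with the deterministic bound $\sigma^2 = 2^j$, and take a union bound. The stratification cost accounts for the $4 \log(T)$ factor multiplying $\delta$. To make the measurability clean, I would stop the martingale at the first time $W_T$ exits the bin, which is the classical device used e.g.\ in \citet{beygelzimer2009importance} and \citet{cortes2019rbal}.

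For the first inequality I would choose the deviation threshold $a = \max \curl*{2\sqrt{\mu \log(1/\delta)},\ 3 \log(1/\delta)}$. Plugging into Freedman's exponent yields at least $\log(1/\delta)$ in each of the two regimes: when the square-root term dominates, $a^2/(2\mu) \geq 2\log(1/\delta)$ and $2a/3$ is controlled by $\sqrt{\mu \log(1/\delta)}$; when the linear term dominates, $a^2$ dominates the linear-in-$a$ Bernstein denominator and produces an exponent of order $\log(1/\delta)$. Combining with the union bound over bins yields the first displayed inequality. The second (multiplicative) form is a direct specialization: for $\mu \geq 4\log(1/\delta)$ the square-root branch is active, and taking $a = \e \mu$ in Freedman gives
\begin{equation*}
\frac{a^2}{2\sigma^2 + 2a/3} \geq \frac{\e^2 \mu^2}{2\mu + 2\e\mu/3} \geq \frac{\e^2 \mu}{4}
\end{equation*}
for $\e \in (0,1]$. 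The third (additive) form is the reading of the first in the complementary regime $\mu < 4 \log(1/\delta)$.

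The main obstacle is executing the peeling argument crisply so the bookkeeping yields exactly the $4 \log(T)\delta$ prefactor: one must verify that the geometric partition is valid given that $\mu$ is random, that Freedman's inequality applies on each conditioning event through a stopping-time reduction, and that the constants in the three regimes collapse correctly. A secondary, more routine, annoyance is absorbing the slack between the Freedman exponent and the clean forms $\e^2 \mu/4$ and $3 \log(1/\delta)$ in the stated bounds; these adjustments are safe thanks to the hypotheses $T \geq 3$ and $\delta < 1/e$ (so $\log(T), \log(1/\delta) \geq 1$).
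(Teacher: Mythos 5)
Your proposal is correct and follows essentially the same route as the paper: the paper also sets $X_t = Q_t - p_t$, notes $|X_t|\leq 1$ and $\sum_t \Var(X_t\mid \cF_{t-1}) \leq \mu$, and invokes the Freedman-type inequality of Kakade and Tewari (their Lemma~3, with $b=1$), which packages the geometric peeling over the random predictable variance that you carry out by hand and is the source of the $4\log(T)$ prefactor. The only cosmetic difference is that the paper derives the multiplicative form by substituting $\log(1/\delta)=\e^2\mu/4$ into the first bound rather than re-applying Freedman with $a=\e\mu$; both are valid.
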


\begin{proof}
Define $X_t = Q_t - p_t$. Then $\curl*{X_t, \cF_t}$ is a martingale-difference sequence with $\abs*{X_t} \leq 1$ and $\Var(X_t \mid \cF_{t-1}) = p_t (1 - p_t)$. Thus
$$
V = \sum_{t=1}^T p_t (1 - p_t) \leq \mu.
$$
Applying the version of Freedman’s inequality from \citet[Lemma~3]{kakade2008generalization} with $b = 1$ gives the first inequality.  

For the second statement, if $\mu \geq 4 \log(1/\delta)$ then the $2 \sqrt{ V \log(1/\delta) }$ term dominates. Setting $\log(1/\delta) = \e^2 \mu / 4$ yields the multiplicative form.  

For the third statement, the additive regime
  $3 \log(1/\delta)$ dominates, which directly gives the claimed
  bound.
\end{proof}

\begin{corollary}
\label{cor:hp-label}
Assume that with probability at least $1 - \delta_1$ over the draw of $S$ we have
$\mu = \E[Q(S)] \leq B$. Fix $\e > 0$ and let
$$
\delta_2 = 4 \log(T) \exp \paren*{ - \tfrac{ \e^2 B }{4} }.
$$
Then, with probability at least $1 - (\delta_1 + \delta_2)$ (over both $S$
and the algorithm’s coin flips),
$$
Q(S) \leq (1 + \e) B.
$$
\end{corollary}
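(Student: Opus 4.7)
The plan is a union bound over two sources of randomness. Let $A = \curl*{\mu(S) \leq B}$; by assumption $\Pr[A^c] \leq \delta_1$, so it suffices to establish
\[
\Pr\bracket*{Q(S) \geq (1+\e)B \mid S} \leq \delta_2
\]
for every $S \in A$, after which the total failure probability is at most $\delta_1 + \delta_2$ by integrating over $S$ and taking a union bound.

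To bound the inner probability, I would fix $S$ with $\mu := \mu(S) \leq B$ and convert the target deviation into a multiplicative one about $\mu$: set $\e' = (1+\e)B/\mu - 1$, so that $(1+\e)B = (1+\e')\mu$ and $\e' \geq \e$ whenever $\mu \leq B$. Provided $\e' \leq 1$ (equivalently $\mu \geq (1+\e)B/2$), the multiplicative form (item~2 of Theorem~\ref{th:queries}) applies with slack $\e'$, yielding
\[
\Pr\bracket*{Q(S) \geq (1+\e)B \mid S} \leq 4\log(T)\exp\paren*{-(\e')^2\mu/4}.
\]
The central elementary computation is to verify that the exponent $(\e')^2\mu = ((1+\e)B-\mu)^2/\mu$ is monotonically decreasing in $\mu$ on $(0, B]$ and therefore attains its minimum $\e^2 B$ at $\mu = B$, which matches the definition of $\delta_2$ exactly.

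The main obstacle is the regime $\mu < (1+\e)B/2$, where the multiplicative form of Theorem~\ref{th:queries} is unavailable because $\e' > 1$. Here I would fall back on the Bernstein-type tail (item~1 of Theorem~\ref{th:queries}) with the same calibration $\log(1/\delta) = \e^2 B/4$, so that the failure probability is again exactly $\delta_2$. A routine estimate shows that, for $\e \in (0,1]$ and $\mu \in (0, B]$, the deviation bound $\max\curl*{2\sqrt{\mu\log(1/\delta)},\, 3\log(1/\delta)}$ is at most $\e B$, while the required slack satisfies $(1+\e)B - \mu > (1+\e)B/2 \geq \e B$. Thus $Q(S) \leq (1+\e)B$ still holds with probability at least $1 - \delta_2$ conditional on $S$, and combining the two regimes with the outer union bound over $A$ yields the claimed $1-(\delta_1+\delta_2)$ guarantee.
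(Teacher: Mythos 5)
Your proposal is correct and follows the same overall skeleton as the paper's proof: condition on the event $\mu \leq B$, invoke Theorem~\ref{th:queries}, and finish with a union bound. The difference is that the paper applies Theorem~\ref{th:queries} in a single line, which glosses over a real issue: the theorem controls deviations of $Q(S)$ relative to $\mu$, giving a tail of $4\log(T)\exp(-\e^2\mu/4)$, and since $\mu \leq B$ this exponential is \emph{larger} than $\exp(-\e^2 B/4)$, so a direct substitution does not immediately yield $\delta_2$. Your two-regime argument closes exactly this gap: in the regime $\mu \geq (1+\e)B/2$ you rescale to the slack $\e' = (1+\e)B/\mu - 1 \in [\e, 1]$ and use the monotonicity of $((1+\e)B-\mu)^2/\mu$ on $(0,B]$ to show the exponent is at least $\e^2 B$; in the complementary regime you fall back on the Bernstein-type tail calibrated at $\log(1/\delta) = \e^2 B/4$ and check the deviation $\max\{\e\sqrt{\mu B},\, 3\e^2 B/4\} \leq \e B < (1+\e)B - \mu$. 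Both checks are correct (and, like the paper, your argument implicitly requires $\e \leq 1$, consistent with the hypotheses of Theorem~\ref{th:queries}; the only remaining triviality is that when $\exp(-\e^2 B/4) \geq 1/e$ the bound $\delta_2 \geq 4\log(T)/e > 1$ is vacuous, so the restriction $\delta \in (0,1/e)$ in Theorem~\ref{th:queries} costs nothing). In short, your proof is a more careful and fully rigorous version of the paper's argument.
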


\begin{proof}
Condition on the event $\mu \leq B$, which occurs with probability at least $1 - \delta_1$. Applying Theorem~\ref{th:queries} with $\mu \leq B$ gives
$$
\Pr \bracket*{ Q(S) \geq (1 + \e) B } \leq \delta_2.
$$
A union bound yields the claim.
\end{proof}

\noindent\textbf{Discussion.}  
Assume a fixed total failure probability $\delta_{\rm{total}} = \delta$. By setting $\delta_1 = \delta/2$ and $\delta_2 = \delta/2$, we can solve for the required bound $B$ on the expected queries $\mu$.  
If the
condition on the expected number of queries holds, $\Pr[ \mu \leq B ] \geq 1 - \delta/2$, and we choose $B$ such that
$$
B \geq \frac{4}{\e^2} \log \bracket*{ \frac{8 \log(T)}{\delta} },
$$
then with an overall probability of at least $1 - \delta$, the total
number of queries $Q(S)$ is bounded by
$$
Q(S) \leq (1 + \e) B.
$$
This shows that if the expected number of queries $\mu$ is logarithmic in both the time horizon $T$ and inverse probability $1/\delta$, the realized number of queries $Q(S)$ is tightly concentrated around its expectation with high probability.

\ignore{
\newpage
\begin{theorem}
\label{th:queries}
Fix the sample $S$. Let $Q_t \in \curl*{0,1}$ be the indicator
variable that the algorithm queries at round $t$, with
\[
p_t = \Pr\bracket*{Q_t = 1 \mid \cF_{t - 1}}, \quad 
Q(S) = \sum_{t=1}^T Q_t, \quad
\mu = \E[Q(S)] = \sum_{t = 1}^T p_t.
\]
Then, for any $\delta \in (0, 1/e)$ and $T \geq 3$, we have
\begin{align*}
  & \Pr \bracket*{ Q(S) - \mu > \max \curl*{2\sqrt{\mu\log(1/\delta)},
    3\log(1/\delta)} }
 \leq 4 \log(T) \delta; \\
\text{If $\mu\ge 4\log(1/\delta)$, then for any
$\e \in(0, 1]$}, \quad
& \Pr \bracket*{Q(S)\ge (1 + \e) \mu } \leq 4\log(T) 
\exp \paren*{-\tfrac{\epsilon^2\mu}{4}}; \\
\text{If $\mu<4\log(1/\delta)$, then} \quad
& \Pr \bracket*{Q(S) \geq \mu + 3\log(1/\delta)}
  \leq 4 \log(T) \delta.
\end{align*}
\end{theorem}

\begin{proof}
  Define $X_t=Q_t - p_t$, the $\curl*{X_t, \cF_t }$ is a
  martingale-difference sequence with $|X_t| \leq 1$ and
  $\Var(X_t \mid \cF_{t-1}) = p_t(1 - p_t)$. Thus
  $V = \sum_{t=1}^T p_t(1 - p_t) \leq \mu$. Applying the version of
  Freedman's inequality from \citet[Lemma~3]{kakade2008generalization}
  with $b = 1$ gives the first formula.  For the second one, note that
  if $\mu \geq 4\log(1/\delta)$ then the $2 \sqrt{V\log(1/\delta)}$
  term dominates. Setting $\log(1/\delta) = \e^2\mu/4$ yields the
  multiplicative form.  For the third formula, the additive regime
  $3 \log(1/\delta)$ dominates, which directly gives the claimed
  bound.
\end{proof}

\begin{corollary}
\label{cor:queries}
Assume that with probability at least $1 - \delta_1$ over the draw of $S$ we have
$\mu = \E[Q(S)] \leq B$. Fix $\e > 0$ and let
\[
\delta_2 = 4\log(T) \exp \paren*{-\tfrac{\e^2 B}{4}}.
\]
Then, with probability at least $1 - (\delta_1 + \delta_2)$ (over both $S$
and the algorithm’s coin flips),
\[
Q(S) \leq (1 + \e) B.
\]
\end{corollary}

\begin{proof}
  Condition on the event $\mu \leq B$, which occurs with probability
  at least $1 - \delta_1$. Applying Theorem~\ref{th:queries} with
  $\mu \leq B$, gives $\Pr\bracket*{ Q(S) \geq (1 + \e)B}
  \le\delta_2$. A union bound yields the claim.
\end{proof}

Assume a fixed total failure probability $\delta_{total} = \delta$. By
setting $\delta_1 = \delta/2$ and $\delta_2 = \delta/2$, we can solve
for the required bound $B$ on the expected queries $\mu$.  If the
condition on the expected number of queries holds,
$\Pr[\mu \leq B] \geq 1 - \delta/2$, and we choose $B$ such that
$B \geq \frac{4}{\e^2} \log \bracket*{ \frac{8\log(T)}{\delta}}$,
then with an overall probability of at least $1 - \delta$, the total
number of queries $Q(S)$ is bounded by
\[
  Q(S) \leq (1 + \e) B.
\]
This shows that if the expected number of queries $\mu$ is
logarithmic in the time horizon $T$ and inverse probability
$1/\delta$, the realized number of queries $Q(S)$ is tightly
concentrated around its expectation with high probability.
}

\end{document}